\def \v {\mathbf{v}}
\def \x {\mathbf{x}}
\def \w {\mathbf{w}}
\def \z {\mathbf{z}}
\def \E {\mathbb{E}}
\def \I {\mathbb{I}}
\def \teta {\tilde{\eta}}  
\newtheorem{thm}{Theorem}
\newtheorem{lem}{Lemma}
\newtheorem{ass}{Assumption}
\begin{document}

\title{Federated  Deep AUC Maximization for Heterogeneous Data\\ with a Constant Communication Complexity }

\author{\name Zhuoning Yuan\thanks{Equal Contribution} 
        \email zhuoning-yuan@uiowa.edu \\
        \addr The University of Iowa \\
        \name Zhishuai Guo\footnotemark[1]
        \email zhishuai-guo@uiowa.edu\\
        \addr The University of Iowa \\
        \name Yi Xu 
        \email yixu@alibaba-inc.com\\
        \addr DAMO Academy, Alibaba Group \\
        \name Yiming Ying 
        \email yying@albany.edu\\ 
        \addr  State University of New York at Albany \\
        \name Tianbao Yang \email tianbao-yang@uiowa.edu\\ 
        \addr  The University of Iowa \\
        }

\maketitle 







\vskip 0.3in
  



\begin{abstract}
\underline{D}eep \underline{A}UC (area under the ROC curve) \underline{M}aximization (DAM) has attracted much attention recently due to its great potential for imbalanced data classification. However, the research on \underline{F}ederated \underline{D}eep \underline{A}UC \underline{M}aximization (FDAM) is still limited. Compared with standard federated learning (FL) approaches that focus on decomposable minimization objectives, FDAM is  more complicated due to its minimization objective is non-decomposable over individual examples. In this paper, we propose improved FDAM algorithms for heterogeneous data by solving the popular non-convex strongly-concave min-max formulation of DAM in a distributed fashion, which can also be applied to a class of non-convex strongly-concave min-max problems. A striking result of this paper is that the communication complexity of the proposed algorithm  is a constant independent of the number of machines and also independent of the accuracy level, which improves an existing result by orders of magnitude. The experiments have demonstrated the effectiveness of our FDAM algorithm on benchmark datasets, and on medical chest X-ray images from different organizations. Our  experiment shows that the performance of FDAM using data from multiple hospitals can improve the AUC score on testing data from a single hospital for detecting life-threatening diseases based on chest radiographs. The proposed method is implemented in our open-sourced library LibAUC (\url{www.libauc.org}) whose github address is~\url{https://github.com/Optimization-AI/ICML2021_FedDeepAUC_CODASCA}.

\end{abstract}

\section{Introduction}
Federated learning (FL) is an emerging paradigm for large-scale learning to deal with data that are (geographically) distributed over multiple clients, e.g., mobile phones, organizations. An important feature of FL is that the data remains at its own clients, allowing the preservation of data privacy. This feature makes FL  attractive not only to internet companies such as Google and Apple but also to conventional industries such as those that provide services to hospitals and banks in the big data era~\cite{rieke2020future,long2020federated}.  Data in these industries is usually collected from people who are concerned about data leakage. But in order to provide better services, large-scale machine learning from diverse data sources is important for addressing model bias. For example, most patients in hospitals located in urban areas could have dramatic differences in demographic data, lifestyles, and diseases from patients who are from rural areas. Machine learning models (in particular, deep neural networks)  trained based on patients' data from one hospital could dramatically bias towards its major population, which could bring serious ethical concerns~\cite{pooch2020can}.  

One of the fundamental issues that could cause model bias is data imbalance, where the number of samples from different classes are skewed. Although FL provides an effective framework for leveraging multiple data sources, {\it most existing FL methods still lack the capability to tackle the  model bias caused by data imbalance}. The reason is that most existing FL methods are developed for minimizing the conventional objective function, e.g., the average of a standard  loss function on all data, which are not amenable to optimizing more suitable measures such as area under the ROC curve (AUC) for imbalanced data. It has been recently shown that directly maximizing AUC for deep learning can lead to great improvements on real-world difficult classification tasks~\cite{yuan2021robust}. For example,  \citet{yuan2021robust} reported the best performance by DAM on the Stanford CheXpert Competition for interpreting chest X-ray images like radiologists~\cite{chexpert19}. 

\begin{table*}[t] 
	\caption{The summary of sample and communication complexities of different algorithms for FDAM under a $\mu$-PL condition in both heterogeneous and homogeneous settings, where $K$ is the number of machines and $\mu \leq 1$. NPA denotes the naive parallel (large mini-batch)  version of PPD-SG \cite{liu2019stochastic} for DAM, where $M$ denotes the batch size in the NPA. The $*$ indicate the results that are derived by us.  $\widetilde O(\cdot)$ suppresses a logarithmic factor.
}\label{tab:1} 
	\centering
	\label{tab:2}
	\begin{small}
	{\begin{tabular}{l|l|l|l}
			\toprule
		~& Heterogeneous Data & Homogeneous Data &Sample Complexity\\
		\hline
		NPA ($M< \frac{1}{K\mu\epsilon})$& $\widetilde{O}\left(\frac{1}{KM\mu^2\epsilon} + \frac{1}{\mu\epsilon}  \right)$& $\widetilde{O}\left(\frac{1}{KM\mu^2\epsilon} + \frac{1}{\mu\epsilon}  \right)$&  $\widetilde{O}\left( \frac{M}{\mu\epsilon} + \frac{1}{\mu^2 K \epsilon} \right)$\\
		\hline 
		NPA ($M\geq  \frac{1}{K\mu\epsilon})$ &  $\widetilde{O}\left( \frac{1}{\mu} \right)^*$
		&$\widetilde{O} \left( \frac{1}{\mu} \right)^*$&  $\widetilde{O}\left( \frac{M}{\mu} \right)^*$  \\ 
		\hline
        \hline
        CODA+ (CODA)& $\widetilde{O}\left(\frac{K}{\mu} + \frac{1}{\mu\epsilon^{1/2}} + \frac{1}{\mu^{3/2}\epsilon^{1/2}} \right)$ &
        $\widetilde{O}\left(\frac{K}{\mu}\right)^*$ & $\widetilde{O} \left( \frac{1}{\mu\epsilon} + \frac{1}{\mu^2K \epsilon} \right)$ \\
        \hline 
        CODASCA & $\widetilde{O}\left(\frac{1}{\mu}\right)$ &
        $\widetilde{O}\left(\frac{1}{\mu}\right)$ & $\widetilde{O} \left( \frac{1}{\mu\epsilon} +  \frac{1}{\mu^2K \epsilon} \right)$\\
 		\bottomrule 
	\end{tabular}}
	\end{small}
	\vspace*{-0.15in} 
\end{table*}
However, the research on FDAM is still limited. To the best of our knowledge, \citet{dist_auc_guo} is the only work that was dedicated to FDAM by solving {\bf the non-convex strongly-concave min-max} problem in a distributed manner. Their algorithm (CODA) is similar to the standard FedAvg method~\cite{DBLP:conf/aistats/McMahanMRHA17} except that the periodic averaging is applied both to the primal and the dual variables. Nevertheless, their results on FDAM are not comprehensive. By a deep investigation of their algorithms and analysis, we found that (i) although their FL algorithm CODA was shown to be better than the naive parallel algorithm (NPA) with a small mini-batch for DAM, the NPA using a larger mini-batch at local machines  can enjoy a smaller communication complexity than CODA; (ii) the communication complexity of CODA for homogeneous data becomes better than that was established for the heterogeneous data, but is still worse than that of NPA with a large mini-batch at local clients. These shortcomings of CODA for FDAM motivate us to develop better federated averaging algorithms  and analysis with a better communication complexity without sacrificing  the sample complexity.


This paper aims to provide more comprehensive results for FDAM, with a focus on improving the communication complexity of CODA for heterogeneous data.  In particular, our contributions are summarized below: 
\begin{itemize}[leftmargin=*]
\vspace*{-0.1in}
    \item First, we provide a stronger baseline with a simpler algorithm than CODA named CODA+, and establish its complexity in both homogeneous and heterogeneous data settings. Although CODA+ has a slight change from CODA, its analysis is much more involved than that of CODA, which is based on the duality gap analysis instead of the primal objective gap analysis. 
    \vspace{-0.1in} 
    \item Second, we propose a new variant of CODA+ named CODASCA with a much improved communication complexity than CODA+. The key thrust is to incorporate the idea of stochastic controlled averaging of SCAFFOLD~\cite{karimireddy2019scaffold} into the framework of CODA+ to correct the client-drift for both local primal updates and local dual updates. A striking result of CODASCA under a PL condition for deep learning is that its communication complexity is independent of the number of machines and the targeted accuracy level, which is even better than CODA+ in the homogeneous data setting. The analysis of CODASCA is also non-trivial that combines the duality gap analysis of CODA+ for a non-convex strongly-concave min-max problem and the variance reduction analysis of SCAFFOLD. The comparison between CODASCA and CODA+ and the NPA for FDAM is shown in Table~\ref{tab:2}. 
    \vspace{-0.1in} 
    \item Third, we conduct experiments on benchmark datasets to verify our theory by showing CODASCA can enjoy a larger communication window size than CODA+ without sacrificing the performance. Moreover, we conduct empirical studies on medical chest X-ray images from different hospitals by showing that the performance of CODASCA using data from multiple organizations can improve the performance on testing data from a single hospital. 
\end{itemize}


\section{Related Work}
\paragraph{Federated Learning (FL).} Many empirical studies~\cite{povey2014parallel,su2015experiments,mcmahan2016communication,chen2016scalable,lin2018don,kamp2018efficient,DBLP:conf/eccv/YuanGYWY20} have shown that FL exhibits good empirical performance for distributed deep learning. For a more thorough survey of FL, we refer the readers to~\cite{mcmahan14advances}. This paper is closely related to recent studies on the design of distributed stochastic algorithms for FL with provable convergence guarantee. 

The most popular FL algorithm is Federated Averaging (FedAvg)~\cite{DBLP:conf/aistats/McMahanMRHA17}, also referred to as local SGD~\cite{stich2018local}.
\citet{stich2018local} is the first that establishes the convergence of local SGD for strongly convex functions.  \citet{yu2019parallel,yu_linear} establishes  the convergence of local SGD and their momentum variants for non-convex functions. The analysis in \cite{yu2019parallel} has exhibited the difference of communication complexities of local SGD in homogeneous and heterogeneous data settings, which is also discovered in recent  works~\cite{khaled2020tighter,woodworth2020local,DBLP:conf/nips/WoodworthPS20}. These latter studies  provide a tight analysis of local SGD in homogeneous and/or heterogeneous data settings, improving its upper bounds for convex functions and strongly convex functions than some earlier works, which sometimes improve over large mini-batch SGD, e.g., when the level of heterogeneity is sufficiently small. 

\citet{DBLP:conf/nips/HaddadpourKMC19} improve the complexities of local SGD for non-convex optimization by leveraging the Polyak-\L ojasiewicz (PL) condition. \cite{karimireddy2019scaffold} propose a new FedAvg algorithm SCAFFOLD by introducing control variates (variance reduction) to correct for the `client-drift' in the local updates for heterogeneous data. The communication complexities of SCAFFOLD are no worse than that of large mini-batch SGD for both strongly convex and non-convex functions. The proposed algorithm CODASCA is inspired by  the idea of stochastic controlled averaging of SCAFFOLD. However, the analysis of CODASCA for non-convex min-max optimization under a PL condition of the primal objective function is non-trivial compared to that of SCAFFOLD.

{\bf AUC Maximization.} This work builds on the foundations of stochastic  AUC maximization developed in many previous works.  \citet{ying2016stochastic} address the scalability issue of optimizing AUC by introducing a min-max reformulation of the AUC square surrogate loss and solving it by a convex-concave stochastic gradient method~\cite{nemirovski2009robust}. ~\citet{natole2018stochastic} improve the convergence rate by adding a strongly convex regularizer into  the original formulation. Based on the same min-max formulation as in~\cite{ying2016stochastic}, ~\citet{liu2018fast} achieve an improved convergence rate by developing a multi-stage algorithm by leveraging the quadratic growth condition of the problem. However, all of these studies focus on learning a linear model, whose corresponding problem is convex and strongly concave. \citet{yuan2021robust} propose a more robust margin-based surrogate loss for the AUC score, which can be formulated as a similar min-max problem to the AUC square surrogate loss. 

{\bf Deep AUC Maximization (DAM).} \cite{rafique2018non} is the first work that develops algorithms and convergence theories for weakly convex and strongly concave min-max problems, which is applicable to DAM. However, their convergence rate is slow for a practical purpose.  \citet{liu2019stochastic} consider improving the convergence rate for DAM  under a practical PL condition of the primal objective function.   \citet{guo2020fast} further develop more generic algorithms for non-convex strongly-concave min-max problems, which can also be applied to DAM. There are also several studies~\cite{yan2020optimal,lin2019gradient,arXiv:2001.03724,yang2020global} focusing on non-convex strongly concave min-max problems without considering the application to DAM.   Based on~\citet{liu2019stochastic}'s algorithm, \citet{dist_auc_guo} propose a communication-efficient FL algorithm (CODA) for DAM. However, its communication cost is still high for heterogeneous data. 


{\bf DL for Medical Image Analysis.}
In past decades, machine learning, especially deep learning methods have revolutionized many domains such as machine vision, natural language processing. For medical image analysis, deep learning methods are also showing great potential such as in classification of skin lesions \cite{esteva2017dermatologist,li2018skin}, interpretation of chest radiographs \cite{ardila2019end,chexpert19}, and breast cancer screening \cite{bejnordi2017diagnostic,mckinney2020international,wang2016deep}. Some works have already achieved expert-level performance in different tasks \cite{ardila2019end,mckinney2020international,DBLP:journals/mia/LitjensKBSCGLGS17}. Recently, \citet{yuan2021robust} employ DAM for medical image classification and achieve great success on two challenging tasks, namely CheXpert competition for chest X-ray image classification and Kaggle competition for melanoma classification based on skin lesion images.  However, to the best of our knowledge,  the application of FDAM methods on medical datasets from different hospitals have not be thoroughly investigated.

\section{Preliminaries and Notations}
We consider federated learning of deep neural networks by maximizing the AUC score. The setting is the same to that was considered as in~\cite{dist_auc_guo}. Below, we present some preliminaries and notations, which are mostly the same as in~\cite{dist_auc_guo}. 
In this paper, we consider the following min-max formulation for distributed problem: 
\vspace{-0.2in}  
\begin{equation}
\label{opt:problem}
\min\limits_{\w\in \mathbb{R}^d \atop (a, b)\in \mathbb{R}^2} \max\limits_{\alpha\in \mathbb{R}} f(\w, a, b, \alpha)=\frac{1}{K}\sum\limits_{k=1}^{K} f_k(\mathbf{w}, a, b, \alpha),
\end{equation}
where $K$ is the total number of machines. This formulation covers a class of non-convex strongly concave min-max problems and specifically for the AUC maximization, $f_k(\w, a, b, \alpha)$ is defined below.
\vspace{-0.05in} 
\begin{equation}
\begin{split}
&f_k(\w, a, b, \alpha)=\E_{\z^k}[F_k(\textbf{w}, a, b, \alpha; \z^k)]\\
&= \E_{\z^k}\left[(1-p) (h(\w; \x^k)- a)^2 \mathbb{I}_{[y^k=1]} 
+ p(h(\w; \x^k) - b)^2\mathbb{I}_{[y^k=-1]} \right. \\
&\hspace*{0.2in}+ 2(1+\alpha)(p h(\w; \x^k)\mathbb{I}_{[y^k=-1]} - \left.(1-p) h(\w, \x^k)\mathbb{I}_{[y^k=1]}) - p(1-p)\alpha^2\right].
\end{split}
\end{equation}
where $\z^k = (\x^k, y^k)\sim \mathbb{P}_k$, $\mathbb{P}_k$ is the data distribution on machine $k$, $p$ is the ratio of positive data. When $\mathbb \phi_k = \mathbb \phi_l,\forall k\neq l$, this is referred to as the homogeneous data setting; otherwise heterogeneous data setting.  

{\bf Notations.} We define the following notations:  
%
%
\begin{align*}
&\v = (\w^T, a, b)^T, \quad \phi(\v) = \max_{\alpha} f(\v, \alpha),\\
&  \phi_s (\v) = \phi(\v) + \frac{1}{2\gamma}\|\v-\v_{s-1}\|^2,\\
&f^s(\v, \alpha) = f(\v, \alpha) + \frac{1}{2\gamma}\|\v -\v_{s-1}\|^2\\
&F^s_k(\v, \alpha; \z_k) = F_k(\v, \alpha; \z_k) + \frac{1}{2\gamma}\|\v - \v_{s-1}\|^2\\
&\v^*_{\phi} = \arg\min\limits_{\v}\phi(\v),\quad \v^*_{\phi_s} = \arg\min\limits_{\v} \phi_s(\v).
\end{align*}

{\bf Assumptions.} Similar to~\cite{dist_auc_guo}, 
we make the following assumptions throughout this paper.
\begin{ass}~\\
\label{ass:1}
(i) There exist $\v_0, \Delta_0>0$ such that $\phi(\v_0) - \phi(\v^*_\phi)\leq \Delta_0$. \\
(ii) PL condition: $\phi(\v)$ satisfies the $\mu$-PL condition, i.e., $\mu(\phi(\v)-\phi(\v_*))\leq \frac{1}{2}\|\nabla\phi(\v)\|^2$; 
(iii) Smoothness: For any $\z$, $f(\v, \alpha;\z)$ is $\ell$-smooth in $\v$ and $\alpha$. $\phi(\v)$ is $L$-smooth, i.e., $\|\nabla\phi(\v_1) - \nabla\phi(\v_2)\| \leq L \|\v_1 - \v_2\|$.\\
(iv) Bounded variance: 
\begin{equation}
\begin{split}
&\E[\|\nabla_{\v} f_k(\v, \alpha) - \nabla_{\v} F_k(\v, \alpha; \z)\|^2] \leq \sigma^2, \\ &\E[|\nabla_{\alpha}f_k(\v, \alpha) - \nabla_{\alpha} F_k(\v, \alpha; \z)|^2] 
\leq \sigma^2.
\end{split}
\end{equation}
\label{assumption_1}
\end{ass}
\vspace{-0.1in}
To quantify the drifts between different clients, we introduce the following assumption.
\begin{ass}
\label{ass2}
Bounded client drift: 
\begin{equation}
\begin{split}
\frac{1}{K}\sum\limits_{k=1}^{K} \|\nabla_{\v} f_k(\v, \alpha) - \nabla_{\v} f(\v, \alpha)\|^2 &\leq D^2, \\
\frac{1}{K}\sum\limits_{k=1}^{K}\|\nabla_{\alpha} f_k(\v, \alpha) - \nabla_{\alpha} f(\v, \alpha)\|^2  &\leq D^2. 
\end{split}
\end{equation}
\end{ass} 
\textbf{Remark.} $D$ quantifies the drift between the local objectives and the global objective. $D=0$ denotes the homogeneous data setting that all the local objectives are identical. $D>0$ corresponds to the heterogeneous data setting.  

\section{CODA+: A stronger baseline}
In this section, we present a stronger baseline than CODA~\cite{dist_auc_guo}. The motivation is that (i) the CODA algorithm uses a step to compute the dual variable from the primal variable by using sampled data from all clients; but we find this step is unnecessary by an improved analysis; (ii) the complexity of CODA for homogeneous data is not given in its original paper. Hence, CODA+ is a simplified version of CODA but with much refined analysis.  

We present the steps of CODA+ in Algorithm~\ref{alg:codaplus_outer}. It is similar to CODA that uses stagewise updates.  In $s$-th stage, a strongly convex strongly concave subproblem is constructed: 
\begin{equation}
\begin{split}
\min\limits_{\v} \max_{\alpha} f(\v, \alpha) + \frac{\gamma}{2}\|\v - \v_0^s\|^2,
\end{split}
\end{equation}
where $\v_0^s$ is the output of the previous stage.

CODA+ improves upon CODA in two folds. First, CODA+ algorithm is more concise since the output primal and dual variables of each stage can be directly used as input for the next stage, while CODA needs an extra large batch of data after each stage to compute the dual variable. This modification not only reduces the sample complexity, but also makes the algorithm applicable to a boarder family of nonconvex  min-max problems. Second, CODA+ has a smaller communication complexity for homogeneous data than that for heterogeneous data while the previous  analysis of CODA yields the same communication complexity for homogeneous data and heterogeneous data. 

\begin{algorithm}[t]
\caption{CODA+}
\begin{algorithmic}[1] 
\STATE{Initialization: $(\v_0, \alpha_0, \gamma)$.} 
\FOR{$s=1, ..., S$}
\STATE{$\v_s, \alpha_s = \text{DSG+} (\v_{s-1}, \alpha_{s-1}, \eta_s, I_s, \gamma)$;}
\ENDFOR 
\STATE{Return $\v_{S}, \alpha_S$.}   
\end{algorithmic}
\label{alg:codaplus_outer}
\end{algorithm}

\begin{algorithm}[t]
\caption {DSG+($\v_0, \alpha_0, \eta, T, I, \gamma$)}
\begin{algorithmic}
\STATE{Each machine does initialization: $\v_0^k = \v_0, \alpha_0^k = \alpha_0$,}
\FOR{$t=0, 1, ..., T-1$}
\STATE{Each machine $k$ updates its local solution in parallel:}
\STATE{~~~~$\v_{t+1}^k = \v_t^k - \eta (\nabla_{\v} F_k(\v_t^k, \alpha_t^k; \z_t^k) + \gamma(\v_t^k-\v_0))$,}
\STATE{~~~~$\alpha_{t+1}^k = \alpha_t^k + \eta \nabla_{\alpha} F_k(\v_t^k, \alpha_t^k; \z_t^k)$,}
\IF{$t+1$ mod $I = 0$ } 
\STATE{$\v^k_{t+1} = \frac{1}{K} \sum\limits_{k=1}^{K} \v_{t+1}^k$, \hfill $\diamond$ communicate }
\STATE{$\alpha^k_{t+1} = \frac{1}{K}\sum\limits_{k=1}^{K} \alpha_{t+1}^k$, \hfill $\diamond$ communicate}
\ENDIF 
\ENDFOR 
\STATE{Return $\left(\bar{\v} =  \frac{1}{K}\sum\limits_{k=1}^{K}\frac{1}{T} \sum\limits_{t=1}^{T} \v_t^k, \bar{\alpha} = \frac{1}{K}\sum\limits_{k=1}^{K}\frac{1}{T} \sum\limits_{t=1}^{T} \alpha_t^k\right)$.} 
\end{algorithmic} 
\label{alg:codaplus_inner}
\end{algorithm}

We have the following lemma to bound the convergence for the subproblem in each $s$-th stage.
\begin{lem}\label{lem:1}(One call of Algorithm \ref{alg:codaplus_inner})
Let $(\Bar{\v}, \Bar{\alpha})$ be the output of Algorithm \ref{alg:codaplus_inner}.
Suppose Assumption \ref{ass:1} and \ref{ass2} hold.
By running Algorithm \ref{alg:codaplus_inner} with given input $\v_0, \alpha_0$ for $T$ iterations, $\gamma = 2\ell$, and $\eta\leq \min(\frac{1}{3\ell + 3\ell^2/\mu_2}, \frac{1}{4\ell})$, we have for any $\v$ and $\alpha$\\  
\begin{align*}
&\E[f^s(\bar{\v}, \alpha)  - f^s(\v, \bar{\alpha})] 
\leq \frac{1}{\eta T} \|\v_0 - \v\|^2 + \frac{1}{\eta T} (\alpha_0 - \alpha)^2 \\
&
+\underbrace{\left(\frac{3\ell^2}{2\mu_2} + \frac{3\ell}{2}\right)(12\eta^2 I \sigma^2 + 36 \eta^2 I^2 D^2 )\I_{I>1} }_{A_1}  + \frac{3\eta\sigma^2}{K},
\end{align*} 
where $\mu_2=2p(1-p)$ is the strong concavity coefficient of $f(\v, \alpha)$ in $\alpha$. 
\label{lem:one_stage_codaplus}  
\end{lem}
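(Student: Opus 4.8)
The plan is to run a single-shot analysis of stochastic gradient descent-ascent on the regularized subproblem, treating the effective objective $f^s(\v,\alpha)=f(\v,\alpha)+\frac{\gamma}{2}\|\v-\v_0\|^2$ (the form whose $\v$-gradient is $\gamma(\v-\v_0)$, matching the update in Algorithm~\ref{alg:codaplus_inner}) as strongly convex in $\v$ with modulus $\gamma-\ell=\ell$ (since $\gamma=2\ell$ and $f$ is $\ell$-weakly convex) and $\mu_2$-strongly concave in $\alpha$. I would introduce the averaged iterates $\bar\v_t=\frac1K\sum_k\v_t^k$ and $\bar\alpha_t=\frac1K\sum_k\alpha_t^k$ and note that their dynamics are driven by the averaged local stochastic gradients, which differ from the exact full gradient $\nabla f^s(\bar\v_t,\bar\alpha_t)$ by two sources of error: mean-zero stochastic noise, and a client-drift term arising because $\nabla F_k$ is evaluated at the local points $(\v_t^k,\alpha_t^k)$ rather than at $(\bar\v_t,\bar\alpha_t)$.

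First I would derive a one-step inequality. Expanding $\|\bar\v_{t+1}-\v\|^2$ and $(\bar\alpha_{t+1}-\alpha)^2$ and taking conditional expectation produces cross terms of the form $\langle \frac1K\sum_k\nabla_{\v} f_k^s(\v_t^k,\alpha_t^k),\,\bar\v_t-\v\rangle$ and its dual analogue. I would add and subtract $\nabla f^s(\bar\v_t,\bar\alpha_t)$, then invoke convexity in $\v$ together with concavity in $\alpha$ to lower bound the exact inner products by the per-iterate duality gap $f^s(\bar\v_t,\alpha)-f^s(\v,\bar\alpha_t)$ plus the strong-monotonicity quadratics $\frac{\ell}{2}\|\bar\v_t-\v\|^2+\frac{\mu_2}{2}(\bar\alpha_t-\alpha)^2$. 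The residual discrepancy between the averaged-local gradient and the exact gradient at $(\bar\v_t,\bar\alpha_t)$ is then controlled by $\ell$-smoothness times the consensus error $\frac1K\sum_k(\|\v_t^k-\bar\v_t\|^2+(\alpha_t^k-\bar\alpha_t)^2)$, while the stochastic term contributes variance of order $\sigma^2/K$ after averaging over the $K$ independent machines; the $\eta^2\|\cdot\|^2$ quadratic from the expansion is absorbed using smoothness and the variance bound.

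Second, I would prove a consensus lemma bounding $\frac1K\sum_k\E\|\v_t^k-\bar\v_t\|^2$ and its dual analogue. Since all local copies are synchronized at the most recent communication round, at most $I$ steps earlier, I would unroll the local recursion over that window: the accumulated stochastic noise adds in variance, producing the $O(\eta^2 I\sigma^2)$ contribution (no $1/K$, as the drift is per-machine), whereas the deterministic heterogeneity, bounded through Assumption~\ref{ass2} by $D^2$, accumulates linearly in the number of local steps and hence yields the $O(\eta^2 I^2 D^2)$ contribution; both vanish when $I=1$, which is exactly the indicator $\I_{I>1}$. This is the step I expect to be the main obstacle, since obtaining the sharp $I$-versus-$I^2$ split and the precise constants ($12$ and $36$) requires a careful recursion, and the resulting bound feeds back into the one-step estimate through the smoothness constant, so the two must be balanced simultaneously.

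Finally, I would sum the one-step inequality over $t=0,\dots,T-1$. The squared-distance terms telescope, leaving the initial-distance contributions proportional to $\|\v_0-\v\|^2/(\eta T)$ and $(\alpha_0-\alpha)^2/(\eta T)$ after dividing by $\eta T$; the strong-convexity/concavity quadratics are nonnegative and are either discarded or used to absorb leftover coupling terms. This absorption is precisely where the step-size restriction enters: the factor $\ell^2/\mu_2$ in $\eta\le 1/(3\ell+3\ell^2/\mu_2)$ comes from cancelling the primal-dual coupling error (the $\v$-gradient's dependence on $\alpha$) against the dual strong concavity $\mu_2$, while $\eta\le 1/(4\ell)$ tames the quadratic gradient term. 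Applying Jensen's inequality to convert $\frac1T\sum_t[f^s(\bar\v_t,\alpha)-f^s(\v,\bar\alpha_t)]$ into $\E[f^s(\bar\v,\alpha)-f^s(\v,\bar\alpha)]$ for the reported running averages, and collecting the consensus and variance terms into $A_1$ and $\frac{3\eta\sigma^2}{K}$ respectively, yields the claimed bound.
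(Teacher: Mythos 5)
Your proposal reproduces the paper's overall architecture: analyze the averaged iterates $(\bar{\v}_t,\bar{\alpha}_t)$, decompose the gradient error into mean-zero noise (yielding the $O(\eta\sigma^2/K)$ term) plus a client-drift term bounded by smoothness times the consensus error, prove a consensus lemma with exactly the $\eta^2 I\sigma^2$ versus $\eta^2 I^2 D^2$ split that vanishes when $I=1$ (this is the paper's Lemma~\ref{lem:diverge_codaplus}, which additionally requires $I\le\frac{1}{18\sqrt{2}\eta\ell}$ to close the self-referential recursion you anticipate), cancel the quadratic movement terms via the stated step-size restrictions, telescope, and finish with Jensen. Where you and the paper part ways is the treatment of the stochastic noise, and this is where your proof has a genuine gap.

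You eliminate the noise cross terms by taking conditional expectation of $\langle\xi_t,\bar{\v}_t-\v\rangle$, where $\xi_t$ denotes the averaged stochastic-gradient noise; this is valid only when the comparator $(\v,\alpha)$ is deterministic (or independent of the data). But the lemma is stated for arbitrary $(\v,\alpha)$ and is invoked in the proof of Theorem~\ref{thm:coda_plus} with the \emph{random} comparators $\hat{\v}_s(\alpha_s)=\arg\min_{\v'}f^s(\v',\alpha_s)$ and $\hat{\alpha}_s(\v_s)=\arg\max_{\alpha'}f^s(\v_s,\alpha')$, which are functions of the stage output and hence correlated with every $\xi_t$; for such comparators $\E[\langle\xi_t,\v\rangle]\neq 0$ in general, so your one-step inequality breaks exactly at the step you describe as routine. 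The paper's proof is more involved precisely to handle this: Lemmas~\ref{lem:A1_codaplus} and \ref{lem:A2_codaplus} introduce auxiliary ``ghost'' sequences $\tilde{\v}_t,\tilde{\alpha}_t$ (the technique of \cite{nemirovski2009robust,yan2020optimal}), so that the noise is paired only with past-measurable quantities such as $\hat{\v}_t-\tilde{\v}_{t-1}$ (zero mean regardless of the comparator), while all dependence on $(\v,\alpha)$ is confined to telescoping distances $\|\v-\tilde{\v}_{t-1}\|^2-\|\v-\tilde{\v}_t\|^2$, which are bounded pointwise, for every realization and every comparator, by $\|\v-\v_0\|^2$. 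Incidentally, this is why the lemma's bound carries $\frac{1}{\eta T}\|\v_0-\v\|^2$ rather than the $\frac{1}{2\eta T}\|\v_0-\v\|^2$ your telescoping would produce: the ghost telescope and the real-iterate telescope each contribute $\frac{1}{2\eta T}\|\v_0-\v\|^2$ since $\tilde{\v}_0=\bar{\v}_0=\v_0$. To repair your argument you would need to add this (or an equivalent) device; the remainder of your outline, including the consensus recursion and the role of the two step-size conditions, is sound and matches the paper.
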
 

\textbf{Remark.}  Note that the term $A_1$ on the RHS is the drift of clients caused by skipping communication. When $D=0$, i.e., the machines have homogeneous data distribution, we need $\eta I = O\left(\frac{1}{K}\right)$, then $A_1$ can be merged with the last term. When  $D>0$, we need $\eta I^2 =   O\left(\frac{1}{K} \right)$, which means that $I$ has to be smaller in heterogeneous data setting and thus the communication complexity is higher.  

\textbf{Remark.}  The key difference between the analysis of CODA+ and that of CODA lies at how to handle the term $(\alpha_0 - \alpha)^2$ in Lemma~\ref{lem:1}. In CODA, the initial dual variable $\alpha_0$ is computed from the initial primal variable $\v_0$, which reduces the error term $(\alpha_0 - \alpha)^2$ to  one similar to $\|\v_0 - \v\|^2$, which is then bounded by the primal objective gap due to the PL condition. However, since we do not conduct the extra computation of $\alpha_0$ from $\v_0$, our analysis directly deals with such error term by using the duality gap of $f^s$. This technique is originally developed by~\cite{yan2020optimal}. 

\begin{thm}
\label{thm:coda_plus} 
Define $\hat{L} \hspace{-0.1in} =  \hspace{-0.1in} L\!+\!2\ell, c=\frac{\mu/\hat{L}}{5+\mu/\hat{L}}$. Set $\gamma \!=\! 2\ell$, $\eta_s = \eta_0 \exp(-(s-1)c)$, 
$T_s = \frac{212}{\eta_0 \min(\ell, \mu_2)} \exp( (s-1)c)$. 
 To return $\v_S$ such that $\E[\phi(\v_S) - \phi(\v^*_{\phi})] \leq \epsilon$, it suffices to choose $S\geq O\left(\frac{5\hat{L}+\mu}{\mu} \max\bigg\{\log \left(\frac{2\Delta_0}{\epsilon}\right), \log S + \log\bigg[ \frac{2\eta_0}{\epsilon} \frac{ 12(\sigma^2)}{5K}\bigg]\bigg\}\right)$.
 The iteration complexity is  $\widetilde{O}\bigg(\max\left(\frac{\Delta_0}{\mu \epsilon \eta_0 K}, \frac{\hat{L}}{\mu^2 K\epsilon}\right)\bigg)$ and the communication complexity is $\widetilde{O}\left(\frac{K}{\mu} \right)$ by setting $I_s = \Theta(\frac{1}{K\eta_s})$ if $D=0$, and is $\widetilde{O}\bigg(\max\left(\frac{K}{\mu} +  \frac{\Delta_0^{1/2}}{\mu(\eta_0 \epsilon)^{1/2}}, \frac{K}{\mu} +   \frac{\hat{L}^{1/2}}{\mu^{3/2}\epsilon^{1/2}}\right)\bigg)$ by setting $I_s = \Theta(\frac{1}{\sqrt{K\eta_s}})$ if $D>0$,  where $\widetilde{O}$ suppresses logarithmic factors.
\end{thm}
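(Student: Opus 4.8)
The plan is to treat CODA+ as an inexact proximal-point method on the primal function $\phi$ and to prove a geometric per-stage decay of the suboptimality $\Delta_s:=\E[\phi(\v_s)-\phi(\v^*_{\phi})]$. The outer loop builds the strongly-convex--strongly-concave surrogate $f^s$, whose max over $\alpha$ is the $\ell$-strongly convex function $\phi_s$, and each call of Algorithm~\ref{alg:codaplus_inner} approximately locates its saddle point. The schedules $\eta_s=\eta_0 e^{-(s-1)c}$ and $T_s\propto e^{(s-1)c}$ are chosen so that $\eta_s T_s$ is a fixed constant across stages; this keeps the bias factor $1/(\eta_s T_s)$ in Lemma~\ref{lem:1} uniformly small while letting the noise floor $\eta_s\sigma^2/K$ decay geometrically. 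First I would instantiate Lemma~\ref{lem:1} at the comparison pair $\v=\v^*_{\phi_s}$, $\alpha=\alpha^*(\bar{\v})$ so that its left-hand side lower-bounds the subproblem primal gap, giving $\E[\phi_s(\v_s)-\phi_s(\v^*_{\phi_s})]\le \frac{1}{\eta_s T_s}\big(\|\v_{s-1}-\v^*_{\phi_s}\|^2+(\alpha_{s-1}-\alpha^*)^2\big)+A_1+\frac{3\eta_s\sigma^2}{K}$.

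The core is a one-stage recursion for $\Delta_s$. Using $\phi(\v_s)\le\phi_s(\v_s)$ I would split $\Delta_s\le \E[\phi_s(\v_s)-\phi_s(\v^*_{\phi_s})]+\E[\phi_s(\v^*_{\phi_s})-\phi(\v^*_{\phi})]$ and bound the proximal term in two complementary ways: a descent/PL bound $\phi_s(\v^*_{\phi_s})-\phi(\v^*_{\phi})\le(1-\mu/\hat{L})\Delta_{s-1}$ (upper-bound $\min_{\v}\phi_s(\v)$ by the value after one gradient step from $\v_{s-1}$, then invoke $L$-smoothness and $\mu$-PL, using $\nabla\phi_s(\v_{s-1})=\nabla\phi(\v_{s-1})$ and $\hat{L}=L+2\ell$), and an $\ell$-strong-convexity bound $\phi_s(\v^*_{\phi_s})-\phi(\v^*_{\phi})\le \Delta_{s-1}-\frac{\ell}{2}\|\v_{s-1}-\v^*_{\phi_s}\|^2$. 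Convex-combining these with weight $\lambda=1/(5+\mu/\hat{L})$ produces a contraction factor $1-\lambda\mu/\hat{L}=1-c$ together with a negative multiple $-(1-\lambda)\frac{\ell}{2}\|\v_{s-1}-\v^*_{\phi_s}\|^2$ of the initial primal distance. Since $1/(\eta_s T_s)=\min(\ell,\mu_2)/212$ is far below $(1-\lambda)\ell/2\approx 2\ell/5$, this negative term absorbs the positive $\frac{1}{\eta_s T_s}\|\v_{s-1}-\v^*_{\phi_s}\|^2$ coming from the inner solver; this is exactly what the constant $212$ buys. The outcome is $\Delta_s\le(1-c)\Delta_{s-1}+\frac{(\alpha_{s-1}-\alpha^*)^2}{\eta_s T_s}+A_1+\frac{3\eta_s\sigma^2}{K}$.

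The step I expect to be the main obstacle is the remaining dual term $(\alpha_{s-1}-\alpha^*)^2$, since there is no proximal regularizer in $\alpha$ to generate a cancelling negative term; this is precisely where CODA recomputes $\alpha_0$ from $\v_0$ and CODA+ does not. Following the duality-gap technique of \cite{yan2020optimal}, I would avoid recomputation by carrying the dual suboptimality inside the Lyapunov function: the optimal response $\alpha^*(\cdot)$ depends only on $\v$ (the proximal term never touches $\alpha$) and is $\ell/\mu_2$-Lipschitz, so $(\alpha_{s-1}-\alpha^*(\v^*_{\phi_s}))^2$ is controlled by the previous stage's output duality gap (via $\mu_2$-strong concavity) together with $\|\v^*_{\phi_{s-1}}-\v^*_{\phi_s}\|^2$, both already governed by $\Delta_{s-1}$ and the inner accuracy. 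This closes the recursion to $\Delta_s\le(1-c)\Delta_{s-1}+\mathrm{noise}_s$ with $\mathrm{noise}_s=O(A_1+\eta_s\sigma^2/K)$.

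Finally I would unroll the recursion. Since $(1-c)^{S-1}\Delta_0$ must fall below $\epsilon/2$ and the geometrically decaying noise floor must also be $\le\epsilon/2$, balancing the two contributions yields the stated $S=O\big(\frac{5\hat{L}+\mu}{\mu}\max\{\log(2\Delta_0/\epsilon),\,\log S+\log[\tfrac{2\eta_0}{\epsilon}\tfrac{12\sigma^2}{5K}]\}\big)$, where $1/c=\frac{5\hat{L}+\mu}{\mu}$. The iteration complexity is $\sum_{s=1}^S T_s$, a geometric sum dominated by its last term, giving $\widetilde{O}(\max(\frac{\Delta_0}{\mu\epsilon\eta_0 K},\frac{\hat{L}}{\mu^2K\epsilon}))$, the $1/K$ being inherited from the variance-reduced term $3\eta_s\sigma^2/K$. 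For communication I would count $\sum_s T_s/I_s$: when $D=0$ the drift $A_1$ only needs $\eta_s I_s=O(1/K)$, so $I_s=\Theta(1/(K\eta_s))$ makes each stage cost $\Theta(K\eta_s T_s)=\Theta(K)$ rounds and the total $\widetilde{O}(K/\mu)$; when $D>0$ the term $36\eta^2I^2D^2\I_{I>1}$ forces the stricter $\eta_s I_s^2=O(1/K)$, hence $I_s=\Theta(1/\sqrt{K\eta_s})$, a per-stage cost $\Theta(\sqrt{K/\eta_s})$ that grows geometrically and sums to the heterogeneous bound $\widetilde{O}(\max(\frac{K}{\mu}+\frac{\Delta_0^{1/2}}{\mu(\eta_0\epsilon)^{1/2}},\frac{K}{\mu}+\frac{\hat{L}^{1/2}}{\mu^{3/2}\epsilon^{1/2}}))$.
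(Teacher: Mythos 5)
Your primal-side plan is sound and genuinely different from the paper's argument: the paper never forms your convex combination of a PL one-gradient-step bound and an $\ell$-strong-convexity bound on $\phi_s(\v^*_{\phi_s})-\phi(\v^*_{\phi})$. Instead it proves $(4\ell+2\mu)(\phi(\v_s)-\phi(\v_*))-4\ell(\phi(\v_{s-1})-\phi(\v_*))\le\|\partial\phi_s(\v_s)\|^2$ from weak convexity plus the PL condition, and then converts the subgradient norm into a duality gap via Nesterov's inequality $\|\partial\phi_s(\v_s)\|^2\le 2\hat{L}\,\text{Gap}_s(\v_s,\alpha_s)$. Your route reproduces the same contraction factor $1-c$, correctly explains why $1/(\eta_s T_s)=\min(\ell,\mu_2)/212$ is small enough to absorb the primal distance term, and your stage/iteration/communication accounting at the end mirrors the paper's.

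The genuine gap sits exactly where you predicted the obstacle: the dual error. You bound $(\alpha_{s-1}-\alpha^*(\v^*_{\phi_s}))^2$ by the previous stage's \emph{output} duality gap plus a Lipschitz term, and then assert that both are ``already governed by $\Delta_{s-1}$ and the inner accuracy,'' closing a recursion purely in $\Delta_s=\E[\phi(\v_s)-\phi(\v^*_{\phi})]$. That assertion is circular: by Lemma~\ref{lem:1}, the output gap of stage $s-1$ is controlled by the \emph{input} distances of stage $s-1$, which include the dual error $(\alpha_{s-2}-\alpha')^2$, which is in turn controlled by the output gap of stage $s-2$, and so on --- at no point is the dual error majorized by primal suboptimality alone. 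Closing the argument requires carrying the duality gap as a second state variable, which is exactly what the paper does: its Lyapunov function is $\phi(\v_0^s)-\phi(\v_*)+\frac{8\hat{L}}{53c}\text{Gap}_s(\v_0^s,\alpha_0^s)$ (where $c=4\ell+\frac{248}{53}\hat{L}$ in the proof), and making it contract needs two ingredients absent from your sketch: Lemma~\ref{lem:Yan1}, which converts the initial distances in Lemma~\ref{lem:1} into gaps, and, crucially, Lemma~\ref{lem:Yan5}, which relates $\text{Gap}_{s+1}$ of the \emph{new} subproblem --- whose proximal center has moved from $\v_{s-1}$ to $\v_s$ --- to $\text{Gap}_s$ of the old one plus the primal progress $\phi(\v_0^{s+1})-\phi(\v_0^s)$, together with the check that the weight $\frac{8\hat{L}}{53c}$ survives multiplication by $c+2\mu$. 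Without some inter-stage gap relation of this kind (or an explicit coupled recursion in the pair consisting of $\Delta_s$ and the dual suboptimality $\E[\phi(\v_s)-f(\v_s,\alpha_s)]$, which is stage-independent because the proximal term does not involve $\alpha$), the gap of $f^s$ at its initialization cannot be compared to anything you have bounded, and your single-variable recursion $\Delta_s\le(1-c)\Delta_{s-1}+\mathrm{noise}_s$ does not follow. Your phrase ``carrying the dual suboptimality inside the Lyapunov function'' points at the correct fix, but it is never executed, and the recursion you actually write contradicts it.
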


\textbf{Remark.} Due to the PL condition, the step size $\eta$ decreases geometrically. Accordingly, $I$ increases geometrically due to Lemma~\ref{lem:1}, and $I$ increases with a faster rate when the data are homogeneous than that when data are heterogeneous. In result, the total number of communications in homogeneous setting is much less than that in heterogeneous setting.

\section{CODASCA}
Although CODA+ has a highly reduced communication complexity for homogeneous data, it is still suffering from a high communication complexity for heterogeneous data. Even for the homogeneous data,  CODA+ has a worse communication complexity with a dependence on the number of clients $K$ than the NPA algorithm with a large batch size. 

\Ovalbox{\begin{minipage}[t]{0.95\columnwidth}%
\it Can we further reduce the communication complexity for FDAM for both homogeneous and heterogeneous data without using a large batch size?%
\end{minipage}}

The main reason for the degeneration in the heterogeneous data setting is the data difference. Even at global optimum $(\v_*, \alpha_*)$, the gradient of local functions in different clients could be different and non-zero. In the homogeneous data setting, different clients still produce different solutions due to stochastic error (cf. the $\eta^2 \sigma^2 I$ term of $A_1$ in Lemma~\ref{lem:1}). These together contribute to the client drift.  

To correct the client drift, we propose to leverage the idea of stochastic controlled averaging due to~\cite{karimireddy2019scaffold}. The key idea is to maintain and update a control variate  to accommodate the client drift, which is taken into account when updating the local solutions. 
In the proposed algorithm CODASCA, we apply control variates to both primal and dual variables. 
CODASCA shares the same stagewise framework as CODA+, where a strongly convex strongly concave subproblem is constructed and optimized in a distributed fashion approximately in each stage. The steps of CODASCA are presented in Algorithm \ref{alg:codasca_outer} and Algorithm \ref{alg:codasca_inner}. Below,  we describe the algorithm in each stage.

Each stage has $R$ communication rounds. Between two rounds, there are $I$ local updates, and each machine $k$ does the local updates as
\begin{equation*}
\begin{split}
& \v_{r,t+1}^k=\v^k_{r,t} - \eta_l (\nabla_\v F^s_k(\v^k_{r,t}, \alpha^k_{r,t}; \z^t_{r,t}) - c_\v^k + c_\v), \\
& \alpha_{r,t+1}^k=\alpha^k_{r,t} + \eta_l (\nabla_\alpha F^s_k(\v^k_{r,t}, \alpha^k_{r,t}; \z^k_{r,t})  - c_\alpha^k + c_\alpha),
\end{split}
\end{equation*} 
where $c_\v^k, c_\v$ are local and global control variates for the primal variable, and $c^k_\alpha, c_\alpha$ are local and global control variates for the dual variable.  Note that $\nabla_\v F^s_k(\v^k_{r,t}, \alpha^k_{r,t}; \z^t_{r,t})$ and $\nabla_\alpha F^s_k(\v^k_{r,t}, \alpha^k_{r,t}; \z^k_{r,t})$ are unbiased stochastic gradient on local data. However, they are biased estimate of global gradient when data on different clients are heterogeneous.
Intuitively, the term $-c_\v^k + c_\v$ and $-c_\alpha^k + c_\alpha$ work to correct the local gradients to get closer to the global gradient. They also play a role of reducing variance of stochastic gradients, which is helpful as well to reduce the communication complexity in the homogeneous data setting. 

At each communication round, the primal and dual variables on all clients get aggregated, averaged and broadcast to all clients. The control variates $c$ at $r$-th round get updated as
\begin{equation}
\begin{split}
& c_\v^k = c_\v^k - c_\v + \frac{1}{I \eta_l}(\v_{r-1} - \v^k_{r,I}), \\  
& c_\alpha^k = c_\alpha^k - c_\alpha + \frac{1}{I  \eta_l}(\alpha^k_{r,I}-\alpha_{r-1}),
\end{split}
\end{equation}
which is equivalent to 
\begin{equation}
\begin{split}
& c_\v^k = \frac{1}{I} \sum\limits_{t=1}^{I} \nabla_\v f^s_k(\v_{r,t}^k, \alpha^k_{r,t}; \z^k_{r,t}), \\  
& c_\alpha^k = \frac{1}{I}   \sum\limits_{t=1}^{I}\nabla_\alpha f^s_k(\v_{r,t}^k, \alpha^k_{r,t}; \z^k_{r,t}). 
\end{split}   
\end{equation} 
Notice that they are simply the average of stochastic gradients used in this round. An alternative way to compute the control variates is by computing the stochastic gradient with a large batch of extra samples at each client, but this would bring extra cost and is unnecessary. $c_\v$ and $c_\alpha$ are averages of $c_\v^k$ and $c_\alpha^k$ over all clients. After the local primal and dual variables are averaged, an extrapolation step with $\eta_g>1$ is performed, which will boost the convergence. 
 
In order to establish the convergence of CODASCA, we first present a key lemma below. 
\begin{lem} (One call of Algorithm \ref{alg:codasca_inner})
\label{lem:codasca:one_stage}
Under the same setting as in Theorem \ref{thm:main}, 
with $\teta =  \eta_l \eta_g I \leq  \frac{\mu_2}{40 \ell^2} $, 
for $\v' = \arg\min\limits_{\v} f^s(\v, \alpha_{\Tilde{r}}),\alpha'=\arg\max\limits_{\alpha} f^s(\v_{\Tilde{r}}, \alpha)$ we have
\begin{equation*}
\begin{split}
&\E[f^s(\v_{\Tilde{r}}, \alpha') - f^s(\v', \alpha_{\Tilde{r}})] \leq  \frac{2}{\eta_l\eta_g T} \|\v_0 - \v'\|^2
+ \frac{2}{\eta_l\eta_g T} (\alpha_0 -\alpha')^2 
+  \underbrace{\frac{10\eta_l\sigma^2}{\eta_g}}\limits_{A_2}+ \frac{10\eta_l\eta_g\sigma^2}{K}  \\ 
\end{split} 
\end{equation*}
where $T=I\cdot R$ is the number of iterations for each stage. 
\end{lem}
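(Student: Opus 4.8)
The plan is to mirror the duality-gap argument behind Lemma~\ref{lem:1} but to replace the client-drift term $A_1$ (which scales with the heterogeneity $D^2$) by a purely variance-driven term, exploiting the control-variate correction in the spirit of SCAFFOLD~\cite{karimireddy2019scaffold}. Throughout I work with the $s$-th stage objective $f^s$, which is strongly convex in $\v$ (from the quadratic regularizer with $\gamma=2\ell$) and $\mu_2$-strongly concave in $\alpha$. I introduce the virtual averaged iterates $\bar{\v}_{r,t}=\frac{1}{K}\sum_k\v^k_{r,t}$ and $\bar{\alpha}_{r,t}=\frac{1}{K}\sum_k\alpha^k_{r,t}$. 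The first observation is that because $c_\v=\frac{1}{K}\sum_k c^k_\v$ and $c_\alpha=\frac{1}{K}\sum_k c^k_\alpha$, the correction terms $-c^k_\v+c_\v$ and $-c^k_\alpha+c_\alpha$ vanish upon averaging, so the averaged trajectory obeys $\bar{\v}_{r,t+1}=\bar{\v}_{r,t}-\eta_l\frac{1}{K}\sum_k\nabla_\v F^s_k(\v^k_{r,t},\alpha^k_{r,t};\z^k_{r,t})$ and symmetrically for $\bar{\alpha}$; after $I$ local steps the server extrapolation with $\eta_g>1$ turns one round into a single effective descent-ascent step of size $\eta_l\eta_g$ along the averaged (and now variance-reduced) gradient.

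Next I would derive a one-round primal-dual inequality. Treating the round as an effective step with stepsize $\teta=\eta_l\eta_g I$, I expand $\|\bar{\v}_{r}-\v'\|^2$ and $(\bar{\alpha}_{r}-\alpha')^2$, where $(\v',\alpha')$ is the saddle comparator in the statement, and use the strong convexity-concavity of $f^s$ together with the gap decomposition (the technique of~\cite{yan2020optimal}) to convert the cross terms $\langle\nabla_\v f^s,\bar{\v}-\v'\rangle-\langle\nabla_\alpha f^s,\bar{\alpha}-\alpha'\rangle$ into a lower bound on the duality gap $f^s(\bar{\v},\alpha')-f^s(\v',\bar{\alpha})$. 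This produces a recursion of the form ``distance decrement $\geq$ effective-step times duality gap, minus error,'' where the errors are (i) the averaged stochastic-gradient variance, bounded by $\sigma^2/K$ thanks to averaging over the $K$ clients, and (ii) a bias contributed by the client drift $\frac{1}{K}\sum_k(\|\v^k_{r,t}-\bar{\v}_{r,t}\|^2+|\alpha^k_{r,t}-\bar{\alpha}_{r,t}|^2)$ multiplied by the smoothness $\ell^2$.

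The heart of the proof, and the step I expect to be the main obstacle, is bounding this client drift. Here the control variates are essential: since $c^k_\v$ and $c^k_\alpha$ equal the average over the previous round of the local stochastic gradients, the corrected direction $\nabla_\v F^s_k-c^k_\v+c_\v$ has its heterogeneity bias removed in expectation, so unrolling the local recursion over $t=0,\dots,I-1$ yields a drift bound that depends on $\sigma^2$ but \emph{not} on $D^2$ --- in contrast to the $\eta^2 I^2 D^2$ term in $A_1$ of Lemma~\ref{lem:1}. Closing this recursion requires coupling the primal and dual drifts and using the stepsize restriction $\teta=\eta_l\eta_g I\leq\frac{\mu_2}{40\ell^2}$ to guarantee the $\ell^2$-weighted drift is dominated by, and can be absorbed into, the strong-concavity and descent terms; the staleness of the control variates, which are computed from the preceding round's iterates, is the delicate bookkeeping here.

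Finally I would telescope. Summing the one-round inequality over $r=1,\dots,R$, the squared-distance terms cancel across rounds because the extrapolation step is tuned precisely so that the server update amounts to $\bar{\v}_{r,I}$-type progress; dividing through by the total effective stepsize $\eta_l\eta_g T=\eta_l\eta_g I R$ gives the leading terms $\frac{2}{\eta_l\eta_g T}\|\v_0-\v'\|^2+\frac{2}{\eta_l\eta_g T}(\alpha_0-\alpha')^2$. The averaged-variance error accumulates to $\frac{10\eta_l\eta_g\sigma^2}{K}$, while the drift error, after normalizing by the per-round effective step, yields exactly $A_2=\frac{10\eta_l\sigma^2}{\eta_g}$ --- the $\eta_g$ in the denominator reflecting that the drift is generated at the local scale $\eta_l$ but is measured against the amplified round step $\eta_l\eta_g$. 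Applying Jensen's inequality to the running average of the iterates over the stage then delivers the stated bound on the expected duality gap evaluated at $(\v_{\tilde{r}},\alpha_{\tilde{r}})$.
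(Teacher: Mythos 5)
Your proposal follows essentially the same route as the paper's proof: the cancellation of the control variates in the client average yields an effective round-level step of size $\teta$, your one-round primal--dual inequality via strong convexity/concavity and the gap technique of \cite{yan2020optimal} is the paper's Lemma~\ref{lem:codasca:one_stage_split} combined with Lemma~\ref{lem:product_x}, and your stale-control-variate drift recursion closed by the condition $\teta\le \mu_2/(40\ell^2)$ is exactly the coupled bounds of Lemmas~\ref{lem:bound_A} and~\ref{lem:bound_B}, whose gradient-norm terms the paper absorbs by converting them into duality gaps (via $\|\nabla_\v f^s\|^2+(\nabla_\alpha f^s)^2 \le \tfrac{18\ell^2}{\mu_2}\,\mathrm{Gap}_s$) and moving $\tfrac{1}{2R}\sum_r \mathrm{Gap}_r$ to the left-hand side. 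The one detail to correct is the ending: Algorithm~\ref{alg:codasca_inner} returns a uniformly sampled iterate $(\v_{\tilde r},\alpha_{\tilde r})$ and the comparators $(\v',\alpha')$ in the statement depend on $\tilde r$, so the conclusion follows from $\E_{\tilde r}[\mathrm{Gap}_{\tilde r}]=\tfrac{1}{R}\sum_r \mathrm{Gap}_r$ (linearity of expectation over the sampling), not from Jensen's inequality applied to the running average of the iterates, which would bound the gap of a different point.
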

\textbf{Remark.} Compared the above bound with that in Lemma~\ref{lem:1}, in particular the term $A_2$ vs the term $A_1$, we can see that CODASCA will not be affected by the data heterogeneity $D>0$, and the stochastic variance is also much reduced.  As will seen in the next theorem, the value of $\teta$ and $R$ will keep the same in all stages. Therefore, by decreasing local step size $\eta_l$ geometrically, the communication window size $I_s$ will increase geometrically to ensure $\teta \leq O(1)$.
\setlength{\textfloatsep}{0.2cm}
\setlength{\floatsep}{0.2cm}
\begin{algorithm}[t]
\caption{CODASCA}
\begin{algorithmic}[1] 
\STATE{Initialization: $(\v_0, \alpha_0, \gamma)$.} 
\FOR{$s=1, ..., S$}
\STATE{$\v_s, \alpha_s = \text{DSGSCA+} (\v_{s-1}, \alpha_{s-1}, \eta_l, \eta_g, I_s, R_s, \gamma)$;}
\ENDFOR 
\STATE{Return $\v_{S}, \alpha_S$.}   
\end{algorithmic}
\label{alg:codasca_outer}
\end{algorithm}

\begin{algorithm}[t]
\caption {DSGSCA+($\v_0, \alpha_0, \eta_l, \eta_g, I, R, \gamma$)} 
\begin{algorithmic}
\STATE{Each machine does initialization: $\v_{0,0}^k = \v_0, \alpha_{0,0}^k = \alpha_0$, $c_\v^k = \mathbf{0}$, $c_\alpha^k = 0$}
\FOR{$r=1,...,R$}
\FOR{$t=0, 1, ..., I-1$} 
\STATE{Each machine $k$ updates its local solution in parallel:}
\STATE{~~~~$\v_{r,t+1}^k\!=\!\v^k_{r,t}\! -\! \eta_l (\nabla_\v F^s_k(\v^k_{r,t}, \alpha^k_{r,t}; \z^k_{r,t}) - c_\v^k + c_\v) $,}
\STATE{~~~~$\alpha_{r,t+1}^k\!=\! \alpha^k_{r,t} \!+\! \eta_l (\nabla_\alpha F^s_k(\v^k_{r,t}, \alpha^k_{r,t}; \z^k_{r,t})  \!-\! c_\alpha^k \!+\! c_\alpha)$,}  
\ENDFOR
\STATE{$c_\v^k = c_\v^k - c_\v + \frac{1}{I \eta_l}(\v_{r-1} - \v^k_{r,I}) $} 
\STATE{$c_\alpha^k = c_\alpha^k - c_\alpha + \frac{1}{I  \eta_l}(\alpha^k_{r,I}-\alpha_{r-1}) $}  
\STATE{$c_\v = \frac{1}{K} \sum\limits_{k=1}^K c_\v^k$, $c_\alpha = \frac{1}{K} \sum\limits_{k=1}^K{} c_\alpha^k$  \hfill $\diamond$ communicate  }
\STATE{$\v_r = \frac{1}{K} \sum\limits_{k=1}^{K} \v^k_{r, I}, \alpha_r = \frac{1}{K} \sum\limits_{k=1}^{K} \alpha^k_{r,t}$ \hfill $\diamond$ communicate } 
\STATE{$\v_r = \v_{r-1} + \eta_g (\v_r-\v_{r-1})$,}
\STATE{$\alpha_r = \alpha_{r-1} + \eta_g (\alpha_r-\alpha_{r-1})$}
\STATE{Broadcast $\v_r, \alpha_r, c_\v, c_\alpha$ \hfill $\diamond$ communicate}
\ENDFOR 
\STATE{Return $\v_{\tilde{r}}, \alpha_{\tilde{r}}$ where $\tilde{r}$ is randomly sampled from $1, ..., R$} 
\end{algorithmic}
\label{alg:codasca_inner}
\end{algorithm}

The convergence result of CODASCA is presented below. 
\begin{thm}
\label{thm:main}
Define $\hat{L} \!=\! L + \!2\ell$, $c \!=\! 4\ell \!+\! \frac{248}{53}\hat{L}$. 
Set 
$\eta_g = \sqrt{K}$, $I_s = I_0 \exp\left(\frac{2\mu_1}{c+2\mu_1}(s-1) \right)$,
$R= \frac{1000}{\teta \mu_2}$, 
$\eta_l^s = \frac{\teta}{\eta_g I_s} = \frac{\teta}{\sqrt{K} I_0 } \exp\left(-\frac{2\mu}{c+2\mu}  (s-1)\right)$, 
$\teta \leq \min\{\frac{1}{3\ell + 3\ell^2/\mu_2}, \frac{\mu_2}{40 \ell^2}\}$.
After $S=O( \max\bigg\{\frac{c+2\mu}{2\mu}\log \frac{4\epsilon_0}{\epsilon}, 
\frac{c+2\mu}{2\mu}\log \frac{160 \hat{L} S } {(c+2\mu)\epsilon} \frac{\teta\sigma^2}{K I_0} \bigg\})$ stages, the output $\v_S$ satisfies $\E[\phi(\v_S) - \phi(\v^*_\phi)]  \leq \epsilon$.
The communication complexity is $\widetilde{O}\left( \frac{1}{\mu} \right)$. 
The iteration complexity  is $\widetilde{O}\left( \max\{\frac{1}{\mu \epsilon}, \frac{1}{\mu^2 K  \epsilon}\}\right)$. 
\end{thm}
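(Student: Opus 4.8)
The plan is a two-level argument: Lemma~\ref{lem:codasca:one_stage} bounds a single stage, and the theorem follows by chaining these bounds into a geometrically contracting recursion on the primal gap $\Delta_s := \E[\phi(\v_s) - \phi(\v^*_{\phi})]$. First I would specialize the lemma to the stated schedule. Since $\teta = \eta_l\eta_g I$ and $T = IR$, we have $\eta_l\eta_g T = \teta R = 1000/\mu_2$, so $\tfrac{2}{\eta_l\eta_g T} = \tfrac{\mu_2}{500}$; with $\eta_g=\sqrt{K}$ the two variance terms coincide and sum to $\tfrac{20\eta_l\sigma^2}{\sqrt{K}} = \tfrac{20\teta\sigma^2}{K I_s}$. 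Writing $G_s := f^s(\v_{\tilde r},\alpha') - f^s(\v',\alpha_{\tilde r})$ for the stage-$s$ duality gap and noting $\v_0=\v_{s-1},\ \alpha_0=\alpha_{s-1}$, the lemma reads
\[ \E[G_s] \le \tfrac{\mu_2}{500}\big(\|\v_{s-1}-\v'\|^2 + (\alpha_{s-1}-\alpha')^2\big) + \tfrac{20\teta\sigma^2}{K I_s}. \]

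The core reduction is to turn $G_s$ into progress on $\phi$. Because $\gamma=2\ell$ dominates the $\ell$-weak convexity of $f$, the regularized stage objective $f^s$ is $\ell$-strongly convex in $\v$ and $\mu_2$-strongly concave in $\alpha$; the minimax equality then gives $f^s(\v_{\tilde r},\alpha')=\phi_s(\v_{\tilde r})$ and $f^s(\v',\alpha_{\tilde r})=\min_{\v}f^s(\v,\alpha_{\tilde r})\le\phi_s(\v^*_{\phi_s})$, so $G_s \ge \phi_s(\v_{\tilde r}) - \phi_s(\v^*_{\phi_s})$. The delicate part is the right-hand side, where $\v',\alpha'$ are best responses to the random output $(\v_{\tilde r},\alpha_{\tilde r})$. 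Following the device of~\cite{yan2020optimal}, I would use strong convexity/concavity to get $G_s \ge \tfrac{\ell}{2}\|\v_{\tilde r}-\v'\|^2 + \tfrac{\mu_2}{2}(\alpha_{\tilde r}-\alpha')^2$ and $\ell$-smoothness to control the sensitivity of the best responses to the saddle point $(\v^*_{\phi_s},\alpha^*_s)$; combined with $\|\v_{s-1}-\v'\|^2 \le 2\|\v_{s-1}-\v^*_{\phi_s}\|^2 + 2\|\v'-\v^*_{\phi_s}\|^2$ (and similarly for $\alpha$), this replaces $\v',\alpha'$ by the saddle point plus extra multiples of $G_s$. The choice $R=1000/(\teta\mu_2)$ keeps the coefficient $\tfrac{\mu_2}{500}$ small enough to absorb those $G_s$ terms into the left-hand side.

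It remains to run the proximal-point recursion under the PL condition. Using optimality of $\v^*_{\phi_s}$ for $\phi_s(\v)=\phi(\v)+\tfrac{\gamma}{2}\|\v-\v_{s-1}\|^2$, namely $\nabla\phi(\v^*_{\phi_s})=\gamma(\v_{s-1}-\v^*_{\phi_s})$, together with the descent inequality $\phi_s(\v^*_{\phi_s})\le\phi_s(\v_{s-1})=\phi(\v_{s-1})$ and the $\mu$-PL condition evaluated at $\v^*_{\phi_s}$, I obtain the exact-prox contraction $\phi(\v^*_{\phi_s})-\phi(\v^*_{\phi}) \le \tfrac{1}{1+\mu/\gamma}\,(\phi(\v_{s-1})-\phi(\v^*_{\phi}))$ and the bound $\|\v_{s-1}-\v^*_{\phi_s}\|^2 \lesssim \Delta_{s-1}$. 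Then $L$-smoothness of $\phi$ transfers the absorbed subproblem-gap bound from $\v^*_{\phi_s}$ to $\v_s$; since the prox gradient $\nabla\phi(\v^*_{\phi_s})$ is nonzero this uses a Young split. Collecting terms yields the one-stage recursion
\[ \Delta_s \le \rho\,\Delta_{s-1} + b_s, \qquad \rho = \tfrac{c}{c+2\mu}, \quad b_s = O\!\big(\tfrac{\teta\sigma^2}{K I_s}\big) = b_1\,\rho^{\,s-1}, \]
with $c = 4\ell + \tfrac{248}{53}\hat{L}$ tracking the inflation of the constant caused by the approximate solve.

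Unrolling gives $\Delta_S \le \rho^S\Delta_0 + \sum_{s=1}^S \rho^{S-s}b_s = \rho^S\Delta_0 + S\rho^{S-1}b_1$; forcing each term below $\epsilon/2$ and using $\log(1/\rho)\approx \tfrac{2\mu}{c+2\mu}$ produces exactly the two branches of $S$ in the statement. For the complexities: every stage uses the same $R = 1000/(\teta\mu_2) = O(1)$ communication rounds, so the total communication is $S\cdot R = \widetilde{O}(1/\mu)$, independent of $K$ and $\epsilon$; the total number of local iterations is $R\sum_{s\le S} I_s$, and since $I_s$ grows geometrically this is dominated by $R\,I_S\cdot O(\tfrac{c+2\mu}{2\mu})$, which after substituting $I_S = I_0\rho^{-(S-1)}$ and the value of $S$ gives $\widetilde{O}(\max\{\tfrac{1}{\mu\epsilon},\tfrac{1}{\mu^2 K\epsilon}\})$, the second branch being forced by the variance term $b_s$. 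I expect the main obstacle to be the reduction in the second paragraph: converting the best-response duality gap into distances to the saddle point, tracking the dual term $(\alpha_{s-1}-\alpha^*_s)^2$ without recomputing $\alpha_0$ as CODA does, and balancing the Young splits tightly enough to obtain the precise constant $c=4\ell+\tfrac{248}{53}\hat{L}$.
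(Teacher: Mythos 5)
Your plan follows the paper's skeleton for most of its length: specializing Lemma~\ref{lem:codasca:one_stage} to the stated schedule (your computation $\frac{2}{\eta_l\eta_g T}=\frac{\mu_2}{500}$ and the merged variance term $\frac{20\teta\sigma^2}{KI_s}$ are exactly right), invoking the device of \cite{yan2020optimal} to handle the best responses $(\v',\alpha')$, unrolling a geometric recursion with noise $b_s\propto \teta\sigma^2/(KI_s)$ decaying at the contraction rate, and the final accounting (communication $S\cdot R=\widetilde O(1/\mu)$, iterations $R\sum_s I_s$) are all essentially what the paper does. The gap is in the recursion itself. You run the contraction on the purely primal quantity $\Delta_s:=\E[\phi(\v_s)-\phi(\v^*_\phi)]$. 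But after your Young/triangle manipulations, the one-stage bound still contains the dual initialization error --- the distance of the inherited $\alpha_{s-1}$ to the dual optimum of the stage-$s$ subproblem, equivalently the dual half of $\text{Gap}_s(\v_0^s,\alpha_0^s)$ --- and this term is \emph{not} bounded by any multiple of $\Delta_{s-1}$. The primal half is fine, as you note: $\frac{\gamma}{2}\|\v_{s-1}-\v^*_{\phi_s}\|^2\leq \phi(\v_{s-1})-\phi(\v^*_{\phi_s})$. The dual half has no such bound, because nothing in the primal gap certifies the quality of the inherited dual variable; this is precisely why CODA recomputed $\alpha_0$ from $\v_0$ at every stage, and why removing that step (the whole point of CODA+/CODASCA) forces a different potential. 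You flag this ("tracking the dual term $(\alpha_{s-1}-\alpha^*_s)^2$") as an expected obstacle, but it is not a constant-chasing detail to be balanced at the end --- it is the central idea, and without it the recursion you wrote does not close.

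The paper closes the loop with two ingredients absent from your proposal. First, the Lyapunov function is augmented with the subproblem duality gap at the stage initialization, $\Delta_s=\phi(\v_0^s)-\phi(\v_*)+\frac{8\hat{L}}{53c}\text{Gap}_s(\v_0^s,\alpha_0^s)$, so the dual error is carried across stages rather than bounded by primal quantities. Second, since $\text{Gap}_s$ and $\text{Gap}_{s+1}$ refer to different objectives ($f^s$ and $f^{s+1}$ have different prox centers), one needs an inter-stage inequality converting the stage-$s$ output gap into the stage-$(s+1)$ initialization gap; this is Lemma~\ref{lem:Yan5},
\begin{equation*}
\text{Gap}_s(\v_s,\alpha_s)\;\geq\;\tfrac{3}{50}\,\text{Gap}_{s+1}(\v_0^{s+1},\alpha_0^{s+1})+\tfrac{4}{5}\bigl(\phi(\v_0^{s+1})-\phi(\v_0^s)\bigr),
\end{equation*}
which simultaneously supplies the primal descent and the next stage's gap term. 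Combined with $\E\|\partial\phi_s(\v_s)\|^2\leq 2\hat{L}\,\E[\text{Gap}_s(\v_s,\alpha_s)]$ (smoothness of $\phi_s$) and the weak-convexity/PL inequality $(4\ell+2\mu)(\phi(\v_s)-\phi(\v_*))-4\ell(\phi(\v_{s-1})-\phi(\v_*))\leq\|\partial\phi_s(\v_s)\|^2$, this yields the joint contraction $\E[\Delta_{s+1}]\leq\frac{c}{c+2\mu}\E[\Delta_s]+\frac{80\hat{L}}{c+2\mu}\frac{\teta\sigma^2}{KI_s}$ with $c=4\ell+\frac{248}{53}\hat{L}$; your unrolling and complexity accounting then go through verbatim. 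Your alternative primal machinery (exact-prox contraction plus $L$-smoothness transfer) could plausibly replace the paper's weak-convexity route for the primal part, but it cannot repair the missing dual bookkeeping: until you introduce the gap-augmented potential and the inter-stage gap lemma, the recursion has an uncontrolled term and the proof does not go through.
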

\vspace{-0.1in} 
\textbf{Remark.} 
(i) The number of communications is $\widetilde{O}\left( \frac{1}{\mu} \right)$, independent of number of clients $K$ and the accuracy level $\epsilon$. 
This is a significant improvement over CODA+, which has a communication complexity of $\widetilde{O}\left( K/\mu + 1/({\mu^{3/2} \epsilon^{1/2}}) \right)$
in heterogeneous setting. Moreover, $\widetilde{O}\left(1/({\mu}) \right)$ is a nearly optimal rate up to a logarithmic factor, since $O(1/\mu)$ is the lower bound communication complexity of distributed strongly convex optimization \cite{karimireddy2019scaffold,ArjevaniS15} and strongly convexity is a stronger condition than the PL condition. 

(ii) Each stage has the same number of communication rounds. However, $I_s$ increases geometrically. Therefore, the number of iterations and samples in a stage increase geometrically. Theoretically, we can also set $\eta^s_l$ to the same value as the one in the last stage, correspondingly $I_s$ can be set as a fixed large value. But this increases the number of required samples  without further speeding up the convergence. Our setting of $I_s$ is a balance between skipping communications and reducing sample complexity.  For simplicity,  we use the fixed setting of $I_s$ to compare  CODASCA and the baseline CODA+ in our experiment to corroborate the theory.

(iii) The local step size $\eta_l$ of CODASCA decreases similarly as the step size $\eta$ in CODA+. But $I_s =O(1/(\sqrt{K}\eta_l^s))$ in CODASCA increases faster than that $I_s= O(1/(\sqrt{K\eta_s}))$ in CODA+ on heterogeneous data. It is noticeable that different from CODA+, we do not need Assumption \ref{ass2} which bounds the client drift, meaning that CODASCA can be applied to optimize the global objective even if local objectives arbitrarily deviate from the global function.

\begin{figure}[t!]
    \centering
    {\includegraphics[scale=0.11]{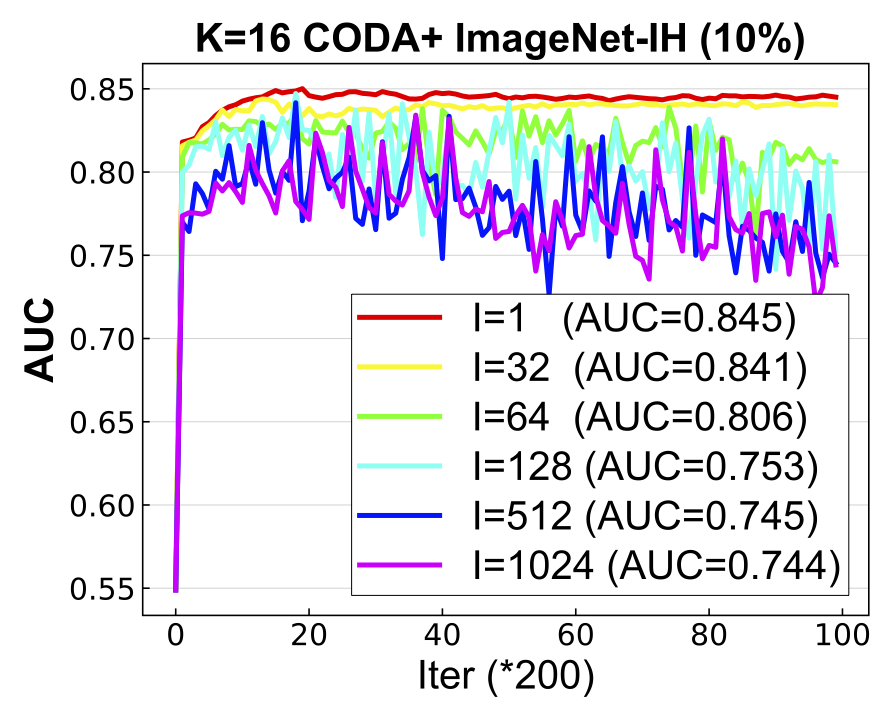}
    \includegraphics[scale=0.11]{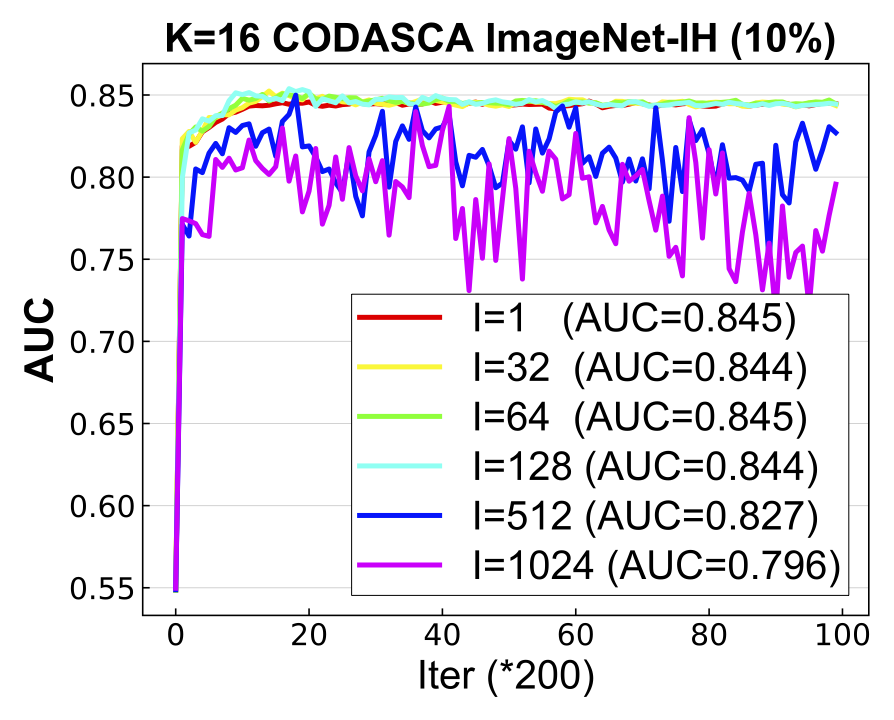}
    }
    {
    \includegraphics[scale=0.11]{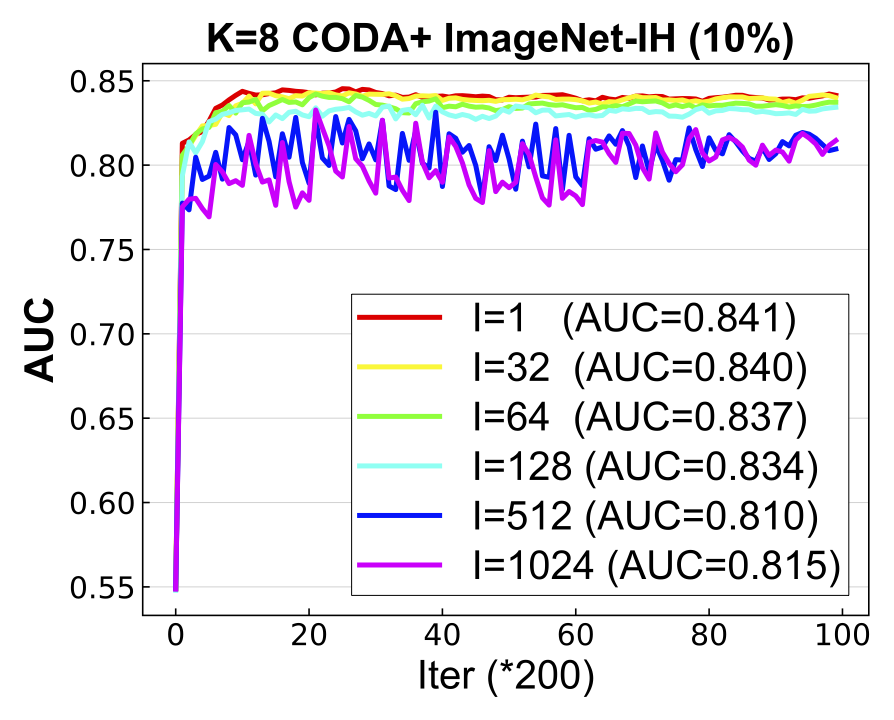}
    \includegraphics[scale=0.11]{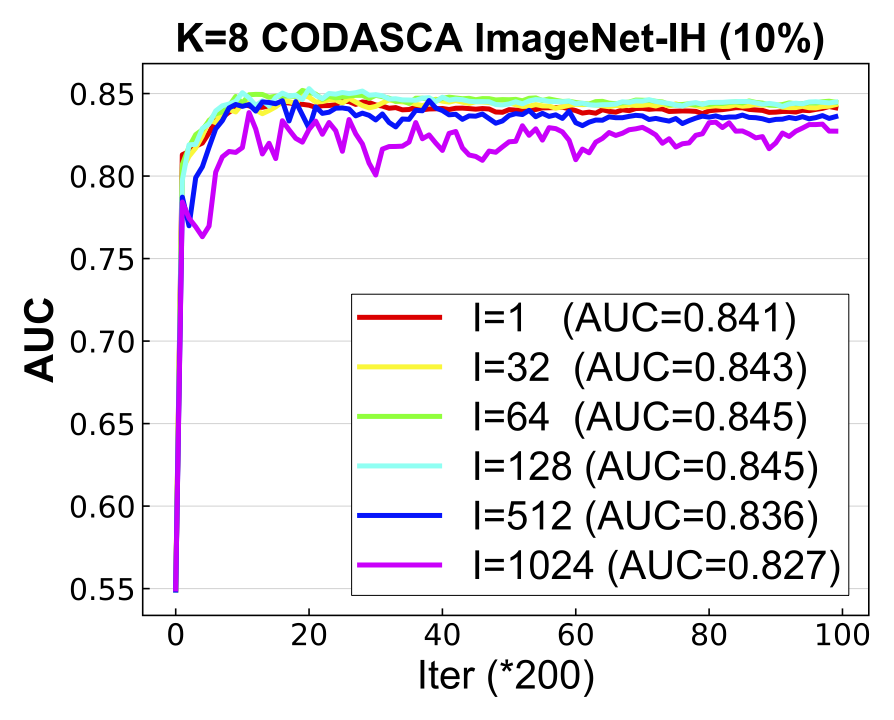}
    }
    \label{fig:imagenet_0.1}

    \centering
    {
    \includegraphics[scale=0.11]{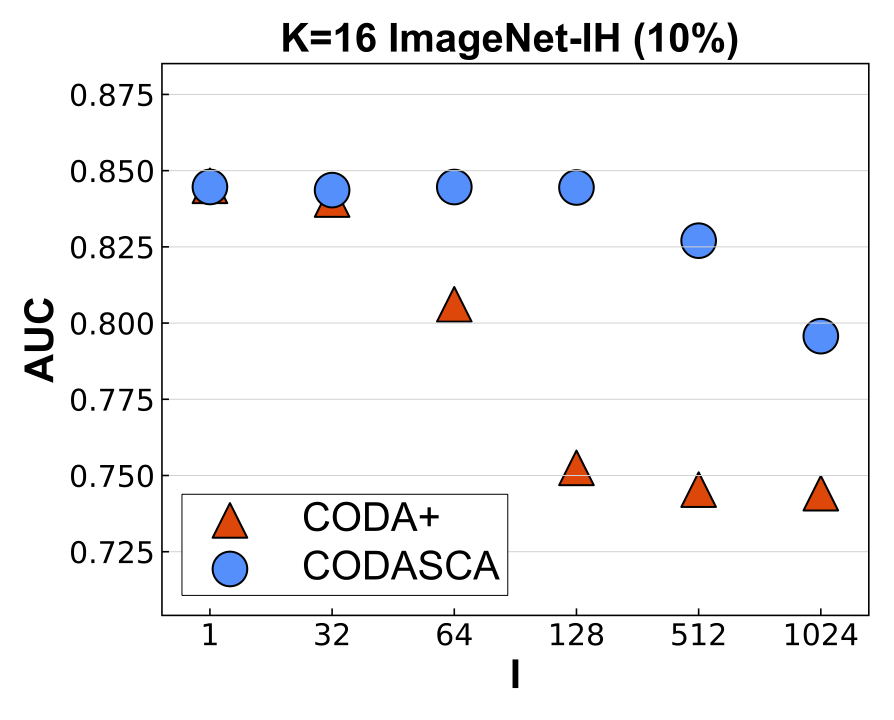}
    \includegraphics[scale=0.11]{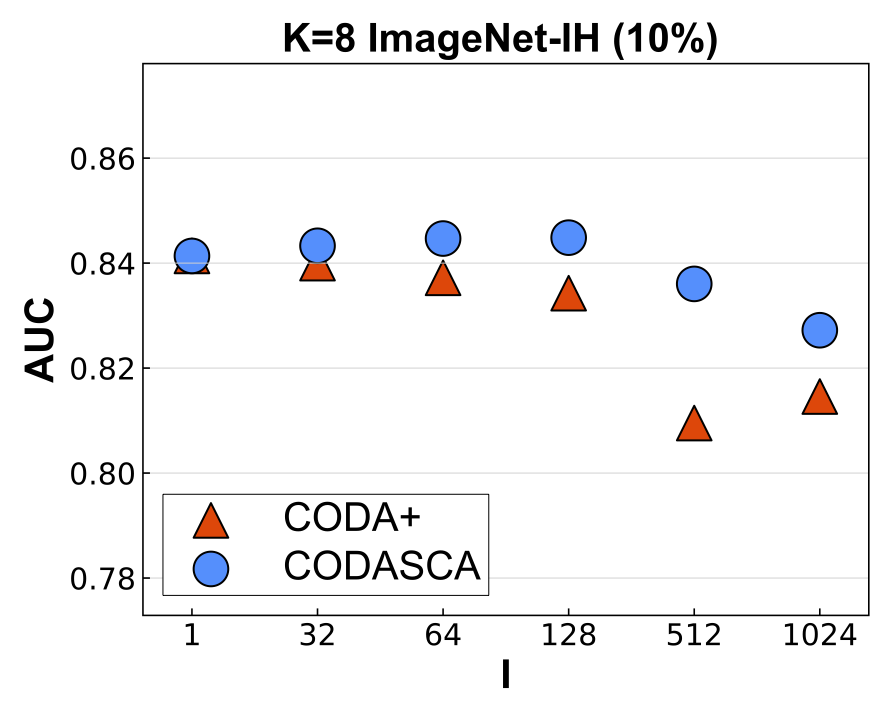}
    } 
    {
     \includegraphics[scale=0.11]{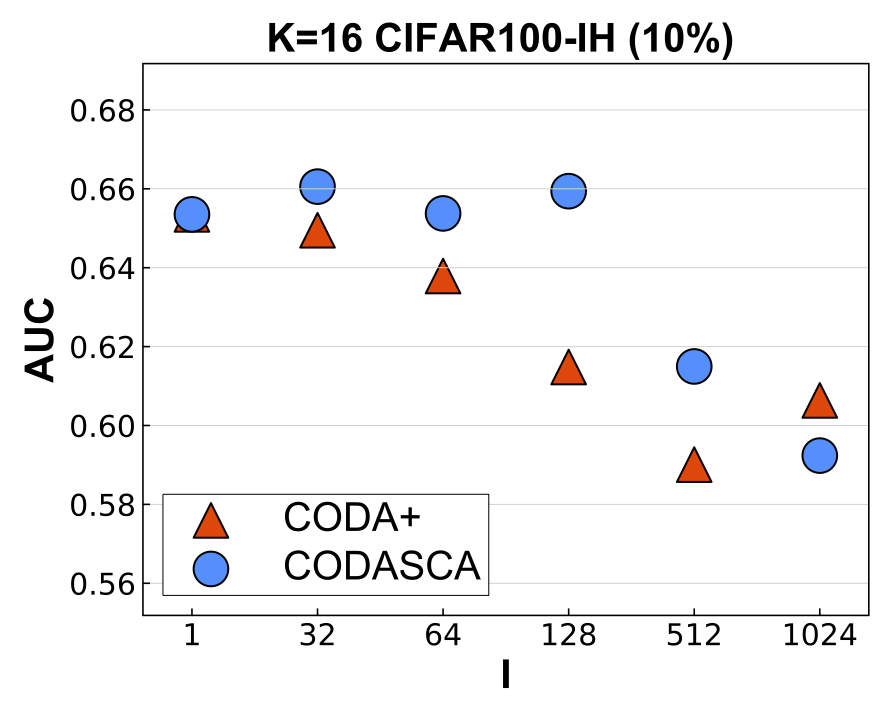}
     \includegraphics[scale=0.11]{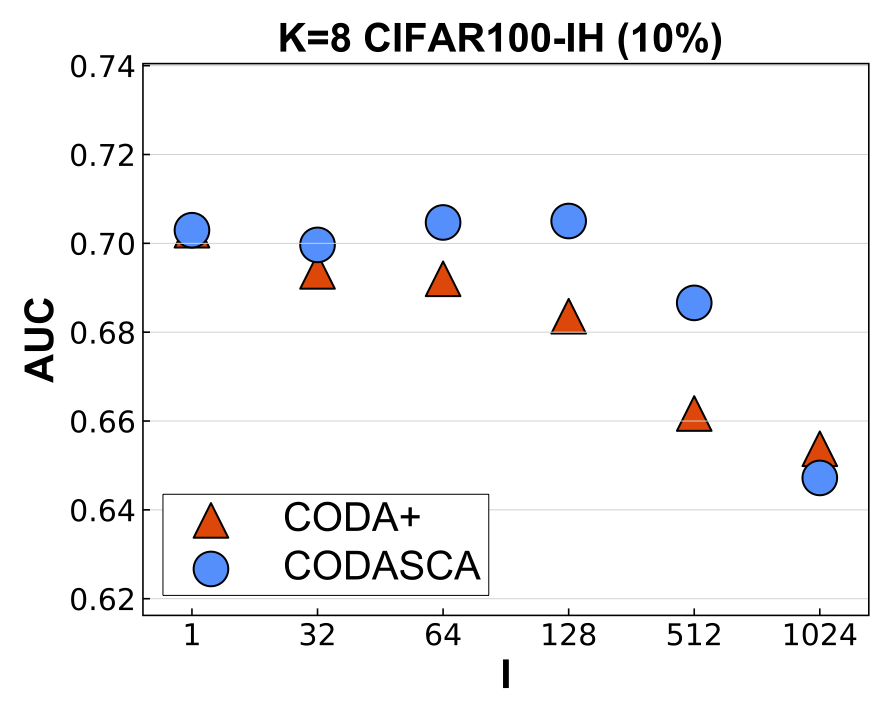}
    } 
    \caption{Top row: the testing AUC score of CODASCA vs \# of iterations for different values of $I$ on ImageNet-IH and CIFAR100-IH with imratio = 10\% and $K$=16, 8 on Densenet121. Bottom row: the achieved testing AUC vs different values of $I$ for CODASCA and CODA+. The AUC score in the legend in top row figures represent the AUC score at the last iteration.  }
    \label{fig:codasca}
    \vspace{-0.2in}
\end{figure}

\section{Experiments}
In this section, we first verify the effectiveness of CODASCA compared to CODA+ on various datasets, including two benchmark datasets, i.e., ImageNet, CIFAR100 \cite{imagenet_cvpr09, krizhevsky2009cifar} and a constructed large-scale chest X-ray dataset. Then, we demonstrate the effectiveness of FDAM on improving the performance on a single domain (CheXpert) by using data from multiple sources. For notations, $K$ denotes the number of ``clients" (\# of machines, \# of data sources)  and $I$ denotes the communication window size. The code used for
the experiments are available at \url{https://github.com/Optimization-AI/ICML2021_FedDeepAUC_CODASCA/}. 

\textbf{Chest X-ray datasets}. Five medical chest X-ray datasets, i.e., CheXpert, ChestXray14, MIMIC-CXR, PadChest, ChestXray-AD \cite{chexpert19, wang2017chestx, johnson2019mimic, bustos2020padchest, nguyen2020vindr} are collected from different organizations. The statistics of these medical datasets are summarized in Table \ref{table:xray_stat}. We construct five binary classification tasks for predicting  five popular diseases, Cardiomegaly (C0), Edema (C1), Consolidation (C2), Atelectasis (C3), P. Effusion (C4), as in CheXpert competition~\cite{chexpert19}. These datasets are naturally imbalanced and heterogeneous due to different patients' populations, different data collection protocols and etc. We refer to the whole medical dataset as ChestXray-IH. 
\begin{table}[t]
\centering
\caption{Statistics of Medical Chest X-ray Datasets.}
\scalebox{0.9}{
\begin{tabular}{ccc}
\hline
\textbf{Dataset} & \textbf{Source} & \textbf{Samples}  \\ \hline
CheXpert      & Stanford Hospital (US)   & 224,316                  \\ 
ChestXray8  & NIH Clinical Center (US)    & 112,120              \\
PadChest  & Hospital San Juan (Spain)       & 110,641             \\ 
MIMIC-CXR   & BIDMC (US)     & 377,110             \\
ChestXrayAD   & H108 and HMUH (Vietnam)   & 15,000               \\ \hline
\end{tabular}}
\label{table:xray_stat}%
\end{table}

\textbf{Imbalanced and Heterogeneous (IH) Benchmark Datasets.} For benchmark datasets, we manually construct the imbalanced heterogeneous dataset. For ImageNet, we first randomly select 500 classes as positive class and 500 classes as negative class. To increase data heterogeneity, we further split all positive/negative classes into $K$ groups so that each split only owns samples from unique classes without overlapping with that of other groups. To increase data imbalance level, we randomly remove some samples from positive classes for each machine. Please note that due to this operation, the whole sample set for different $K$ is different.  We refer to the proportion of positive samples in all samples as imbalance ratio ($imratio$). For CIFAR100, we follow similar steps to construct imbalanced heterogeneous data. We keep the testing/validation set untouched and keep them balanced. For imbalance ratio (imratio), we explore two ratios: 10\% and 30\%. We refer to the constructed datasets as ImageNet-IH (10\%), ImageNet-IH (30\%), CIFAR100-IH (10\%), CIFAR100-IH (30\%).   Due to the limited space, we only report imratio=10\% with DenseNet121 and defer the other results to supplement. 

\textbf{Parameters and Settings.} 
We train Desenet121 on all datasets.  For the parameters in CODASCA/CODA+, we tune $1/\gamma$ in [500, 700, 1000] and $\eta$ in [0.1, 0.01, 0.001]. For learning rate schedule, we decay the step size by 3 times every $T_0$ iterations, where $T_0$ is tuned in [2000, 3000, 4000]. We experiment with a fixed value of $I$ selected from [1, 32, 64, 128, 512, 1024] and we include experiments with increasing $I_s$ in the supplement. We tune $\eta_g$ in [1.1, 1, 0.99, 0.999]. The local batch size is set to 32 for each machine. We run a total of 20000 iterations for all experiments.  

\subsection{Comparison with CODA+}
We plot the testing AUC on ImageNet (10\%) vs \# of iterations for CODASCA and CODA+ in Figure~\ref{fig:codasca} (top row) by varying the value of $I$ for different values of $K$. Results on CIFAR100 are shown in the Supplement.  In the bottom row of Figure~\ref{fig:codasca}, we plot the achieved testing AUC score vs different values of $I$ for CODASCA and CODA+. 
We have the following observations: 
\vskip -0.05in
\noindent $\bullet~$\textbf{CODASCA enjoys a larger communication window size}. Comparing CODASCA and CODA+ in the bottom panel of Figure~\ref{fig:codasca}, we can see that CODASCA enjoys a larger communication window size without hurting the performance than CODA+, which is consistent with our theory. 

\noindent $\bullet~$  \textbf{CODASCA is consistently better for different values of $K$}. We compare the largest value of $I$ such that the performance does not degenerate too much compared with $I=1$, which is denoted by $I_{\max}$. From the bottom figures of Figure~\ref{fig:codasca}, we can see that the $I_{\max}$ value of CODASCA on ImageNet is 128 ($K$=16) and 512 ($K$=8), respectively, and that of CODA+ on ImageNet is 32 ($K$=16) and 128 ($K$=8). This demonstrates that CODASCA enjoys consistent advantage over CODA+, i.e., when $K=16$, $I^{\text{CODASCA}}_{\max}/I^{\text{CODA+}}_{\max}=4$, and when $K=8$,  $I^{\text{CODASCA}}_{\max}/I^{\text{CODA+}}_{\max}=4$. The same phenomena occur on CIFAR100 data. 

Next, we compare CODASCA with CODA+ on the ChestXray-IH medical dataset, which is also highly heterogeneous.  We  split the ChestXray-IH data into $K=16$ groups according to the patient ID and each machine only owns samples from one organization without overlapping patients. The testing set is the collection of 5\% data sampled from each organization. In addition, we use train/val split = 7:3 for the parameter tuning. We run CODASCA and CODA+ with the same number of iterations. The performance on testing set are reported in Table \ref{tab:chestxray_auc}. From the results, we can observe that CODASCA performs consistently better than CODA+ on C0, C2, C3, C4. 

\begin{table}[t]
\centering
\caption{Performance on ChestXray-IH testing set when $K$=16.}
\scalebox{0.8}{
\begin{tabular}{ccccccc}
\hline
\textbf{Method} &     $I$     & \textbf{C0}     & \textbf{C1}     & \textbf{C2}    & \textbf{C3}     & \textbf{C4}     \\ \hline
  & 1      & 0.8472 & 0.8499 & 0.7406 & 0.7475 & 0.8688  \\  \hline
CODA+ & 512      & 0.8361 & \textbf{0.8464} & 0.7356 & 0.7449 & 0.8680  \\ 
CODASCA & 512 & \textbf{0.8427} & 0.8457 & \textbf{0.7401} & \textbf{0.7468} & \textbf{0.8680} \\ 
\hline
CODA+  & 1024     & 0.8280  & \textbf{0.8451} & 0.7322 & 0.7431 & 0.8660  \\ 
CODASCA & 1024 & \textbf{0.8363} & 0.8444 & \textbf{0.7346} & \textbf{0.7481} & \textbf{0.8674} \\ \hline
\end{tabular}}
\label{tab:chestxray_auc}
\end{table}

\begin{table}[t]
\caption{Performance of FDAM on Chexpert validation set for DenseNet121. } 
\centering
\scalebox{0.8}{
\begin{tabular}{ccccccc}
\hline
\textbf{\#of sources} & \textbf{C0} & \textbf{C1}     & \textbf{C2}    & \textbf{C3}     & \textbf{C4} & \textbf{AVG}     \\ \hline
$K$=1    & 0.9007      & 0.9536          & 0.9542         & 0.9090 & 0.9571  &  0.9353         \\ 
$K$=2             & 0.9027      & 0.9586          & 0.9542         & 0.9065          & 0.9583 &  0.9361        \\
$K$=3             & 0.9021      & 0.9558          & \textbf{0.9550} & 0.9068          & 0.9583  &   0.9356       \\
$K$=4             & 0.9055      & \textbf{0.9603} & 0.9542         & 0.9072          & \textbf{0.9588} & 0.9372  \\
$K$=5             &  \textbf{0.9066}     & 0.9583          & 0.9544         & \textbf{0.9101}          & 0.9584  &  \textbf{0.9376} \\\hline
\end{tabular}}
\label{table:chexpert_auc}
\end{table}
\begin{table}[t]
\caption{Performance of FDAM on Chexpert validation set for DenSenet161.} 
\centering
\scalebox{0.8}{
\begin{tabular}{ccccccc}
\hline
\textbf{\#of sources} & \textbf{C0} & \textbf{C1} & \textbf{C2} & \textbf{C3} & \textbf{C4} & \textbf{AVG} \\ \hline
K=1 & 0.8946 & 0.9527 & 0.9544 & 0.9008 & 0.9556 & 0.9316 \\
K=2 & 0.8938 & 0.9615 & 0.9568 & 0.9109 & 0.9517 & 0.9333 \\
K=3 & \textbf{0.9008} & 0.9603 & 0.9568 & 0.9127 & 0.9505 & 0.9356 \\ 
K=4 & 0.8986 & \textbf{0.9615} & 0.9561 & 0.9128 & \textbf{0.9564} & 0.9367 \\ 
K=5 & 0.8986 & 0.9612 & \textbf{0.9568} & \textbf{0.9130} & 0.9552 & \textbf{0.9370} \\ \hline
\end{tabular}}
\label{tab:densenet161}
\end{table}

\vspace{-0.1in}
\subsection{FDAM for improving performance on CheXpert}
Finally, we show that FDAM can be used to leverage data from multiple hospitals  to improve the performance at a single target hospital. For this experiment, we choose CheXpert data from Stanford Hospital as the target data. Its validation data will be used for evaluating the performance of our FDAM method. Note that improving the AUC score on CheXpert is a very challenging task. The top 7 teams on CheXpert leaderboard differ by only 0.1\%~\footnote{\url{https://stanfordmlgroup.github.io/competitions/chexpert/}}. Hence, we consider any improvement over $0.1\%$ significant. Our procedure is following: we gradually increase the number of data resources, e.g., $K=1$ only includes the CheXpert training data, $K=2$ includes the CheXpert training data and ChestXray8, $K=3$ includes  the CheXpert training data and ChestXray8 and PadChest, and so on. 


\textbf{Parameters and Settings}. Due to the limited computing resources, we resize all images to 320x320. We follow the two stage method proposed in \cite{yuan2021robust} and compare with the baseline on a single machine with a single data source (CheXpert training data) ($K$=1) for learning DenseNet121, DenseNet161. More specifically, we first train a base model by minimizing the Cross-Entropy loss on CheXpert training dataset using Adam with a initial learning rate of 1e-5 and batch size of 32 for 2 epochs. Then, we discard the trained classifier, use the same pretrained model for initializing the local models at all machines and continue training using CODASCA. For the parameter tuning, we try $I$=[16, 32, 64, 128], learning rate=[0.1, 0.01] and we fix $\gamma$=1e-3, $T_0$=1000 and batch size=32. 

\textbf{Results.} We report all results in term of AUC score on the CheXpert validation data in Table \ref{table:chexpert_auc} and Table~\ref{tab:densenet161}. 
We can see that using more data sources from different organizations can efficiently improve the performance on CheXpert. For DenseNet121, the average improvement across all 5 classification tasks from $K=1$ to $K=5$ is over $0.2\%$ which is significant in light of the top CheXpert leaderboard results.  Specifically, we can see that CODASCA with $K$=5 achieves the highest validation AUC score on C0 and C3, and with $K$=4 achieves the highest on C1 and C4. For DenseNet161, the improvement of average AUC is over 0.5\%, which doubles the 0.2\% improvement for DenseNet121.

\vskip -0.4in
\section{Conclusion}
In this work, we have conducted comprehensive studies of federated learning for deep AUC maximization. We analyzed a stronger baseline for deep AUC maximization by establishing its convergence for both homogeneous data and heterogeneous data. 
We also developed an improved variant by adding control variates to the local stochastic gradients for both primal and dual variables, which dramatically reduces the communication complexity. Besides a strong theory guarantee, we exhibit the power of FDAM on real world medical imaging problems. We have shown that our FDAM method can improve the performance on medical imaging classification tasks by leveraging  data from different organizations that are kept locally. 

\section*{Acknowledgements}
We are grateful to the anonymous reviewers for their constructive comments and suggestions. This work is partially supported by NSF \#1933212 and NSF CAREER Award \#1844403.


\bibliography{reference,reference2} 

\onecolumn
\appendix
\section{Auxiliary Lemmas}

Noting all algorithms discussed in thpaper including the baselines implement a stagewise framework, we define the duality gap of $s$-th stage at a point $(\v, \alpha)$ as
\begin{equation}
\begin{split}
Gap_s(\v, \alpha) = \max\limits_{\alpha'} f^s(\v, \alpha') 
- \min\limits_{\v'} f^s(\v', \alpha).  
\end{split}
\end{equation}
Before we show the proofs, we first present the lemmas from \cite{yan2020optimal}.

\begin{lem} [Lemma 1 of \cite{yan2020optimal}]
Suppose a function $h(\v, \alpha)$ is $\lambda_1$-strongly convex in $\v$ and $\lambda_2$-strongly concave in $\alpha$. 
Consider the following problem 
\begin{align*}
\min\limits_{\v\in X} \max\limits_{\alpha\in Y} h(\v, \alpha), 
\end{align*}
\label{lem:Yan1}
where $X$ and $Y$ are convex compact sets.
Denote $\hat{\v}_h (y) = \arg\min\limits_{\v'\in X} h(\v', \alpha)$ 
and $\hat{\alpha}_h(\v) = \arg\max\limits_{\alpha' \in Y} h(\v, \alpha')$.
Suppose we have two solutions $(\v_0, \alpha_0)$ and $(\v_1, \alpha_1)$.
Then the following relation between variable distance and duality gap holds
\begin{align}
\begin{split}
\frac{\lambda_1}{4} \|\hat{\v}_h(\alpha_1)-\v_0\|^2 + \frac{\lambda_2}{4}\|\hat{\alpha}_h(\v_1) - \alpha_0\|^2 \leq& 
\max\limits_{\alpha' \in Y} h(\v_0, \alpha') - \min\limits_{\v' \in X}h(\v', \alpha_0) \\ 
&+ \max\limits_{\alpha' \in Y} h(\v_1, \alpha') - \min\limits_{\v' \in X} h(\v', \alpha_1).
\end{split}
\end{align}
\end{lem}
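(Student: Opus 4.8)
The plan is to prove a slightly \emph{stronger} inequality, with the constants $\lambda_1/2$ and $\lambda_2/2$ in place of $\lambda_1/4$ and $\lambda_2/4$; the stated bound then follows immediately since $\lambda_i/2 \ge \lambda_i/4$. The entire argument rests on two elementary facts: (i) a strongly convex (resp.\ concave) function grows at least quadratically away from its constrained optimizer, and (ii) a sum of four separate $\max_{\alpha'}/\min_{\v'}$ terms can be re-paired freely. I would first introduce the two scalars $T_1 := h(\v_0,\alpha_1) - h(\hat{\v}_h(\alpha_1),\alpha_1)$ and $T_2 := h(\v_1,\hat{\alpha}_h(\v_1)) - h(\v_1,\alpha_0)$, which will simultaneously lower-bound the left-hand side distances and upper-bound into the duality gaps.

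For the lower bounds: since $h(\cdot,\alpha_1)$ is $\lambda_1$-strongly convex and $\hat{\v}_h(\alpha_1)$ is its minimizer over the convex set $X$, the first-order optimality condition at the constrained minimizer makes the subgradient inner-product term nonnegative, so strong convexity gives $T_1 \ge \frac{\lambda_1}{2}\|\v_0 - \hat{\v}_h(\alpha_1)\|^2$. Symmetrically, strong concavity of $h(\v_1,\cdot)$ together with the optimality of the constrained maximizer $\hat{\alpha}_h(\v_1)$ over $Y$ yields $T_2 \ge \frac{\lambda_2}{2}\|\alpha_0 - \hat{\alpha}_h(\v_1)\|^2$. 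For the upper bounds: using $\alpha_1 \in Y$ we have $h(\v_0,\alpha_1) \le \max_{\alpha'} h(\v_0,\alpha')$ while $h(\hat{\v}_h(\alpha_1),\alpha_1) = \min_{\v'} h(\v',\alpha_1)$, hence $T_1 \le \max_{\alpha'} h(\v_0,\alpha') - \min_{\v'} h(\v',\alpha_1)$; likewise, using $\v_1 \in X$, $T_2 \le \max_{\alpha'} h(\v_1,\alpha') - \min_{\v'} h(\v',\alpha_0)$.

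Adding the lower bounds and the upper bounds chains everything together:
\[
\tfrac{\lambda_1}{2}\|\v_0 - \hat{\v}_h(\alpha_1)\|^2 + \tfrac{\lambda_2}{2}\|\alpha_0 - \hat{\alpha}_h(\v_1)\|^2 \le T_1 + T_2 \le \Big[\max_{\alpha'} h(\v_0,\alpha') - \min_{\v'} h(\v',\alpha_1)\Big] + \Big[\max_{\alpha'} h(\v_1,\alpha') - \min_{\v'} h(\v',\alpha_0)\Big].
\]
The final step is the re-pairing: the right-hand side equals $\max_{\alpha'} h(\v_0,\alpha') + \max_{\alpha'} h(\v_1,\alpha') - \min_{\v'} h(\v',\alpha_0) - \min_{\v'} h(\v',\alpha_1)$, which is \emph{identically} $\big[\max_{\alpha'} h(\v_0,\alpha') - \min_{\v'} h(\v',\alpha_0)\big] + \big[\max_{\alpha'} h(\v_1,\alpha') - \min_{\v'} h(\v',\alpha_1)\big]$, i.e.\ the sum of the duality gaps at $(\v_0,\alpha_0)$ and $(\v_1,\alpha_1)$, which is exactly the RHS of the lemma. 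There is no genuinely hard step here; the only points requiring care are invoking the constrained first-order optimality condition correctly (so the subgradient term is nonnegative, handling nondifferentiability through subgradients of the strongly convex/concave slices), and recognizing that the cross-paired $\max$/$\min$ terms must be rearranged — this rearrangement is precisely what converts the ``mismatched'' bound on $T_1+T_2$ into the clean sum-of-gaps form, and it is the one place where the argument could be mishandled.
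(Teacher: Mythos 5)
Your proof is correct. The paper itself gives no proof of this lemma---it is imported verbatim from Lemma 1 of~\cite{yan2020optimal} and stated without argument---and your derivation is the standard one: quadratic growth of the strongly convex slice $h(\cdot,\alpha_1)$ at its constrained minimizer $\hat{\v}_h(\alpha_1)$ (and symmetrically for the concave slice $h(\v_1,\cdot)$), with the first-order optimality condition killing the subgradient term, followed by bounding $T_1+T_2$ by the cross-paired $\max/\min$ terms and re-pairing them into the two duality gaps. In fact your argument yields the stronger constants $\lambda_1/2$ and $\lambda_2/2$, from which the stated $\lambda_1/4$, $\lambda_2/4$ bound follows trivially (the weaker constants in~\cite{yan2020optimal} cost nothing in the downstream analysis, where only the order matters).
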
 
$\hfill \Box$

\begin{lem}[Lemma 5 of \cite{yan2020optimal}]
We have the following lower bound for $\text{Gap}_s(\v_s, \alpha_s)$ 
\begin{align*} 
\text{Gap}_s(\v_s, \alpha_s) \geq  \frac{3}{50}\text{Gap}_{s+1}(\v_0^{s+1}, \alpha_0^{s+1}) + \frac{4}{5}(\phi(\v_0^{s+1}) - \phi(\v_0^s)),
\end{align*} 
\label{lem:Yan5} 
\end{lem}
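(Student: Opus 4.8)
The plan is to turn the claimed inequality into an exact decomposition of $\text{Gap}_s(\v_s,\alpha_s)$ and then control a single residual gap. Throughout I identify the stage input/output as $\v_0^{s+1}=\v_s$, $\alpha_0^{s+1}=\alpha_s$, $\v_0^s=\v_{s-1}$, and use that the proximal term in $f^s$ is independent of $\alpha$, so $\max_{\alpha'}f^s(\v_s,\alpha')=\phi(\v_s)+\tfrac{\gamma}{2}\|\v_s-\v_0^s\|^2$. First I would establish the identity
\begin{equation*}
\text{Gap}_s(\v_s,\alpha_s)=\big(\phi(\v_0^{s+1})-\phi(\v_0^s)\big)+\tfrac{\gamma}{2}\|\v_0^{s+1}-\v_0^s\|^2+\text{Gap}_s(\v_0^s,\alpha_s),
\end{equation*}
obtained by adding and subtracting $\phi(\v_0^s)$ and recognizing $\phi(\v_0^s)-\min_{\v'}f^s(\v',\alpha_s)=\text{Gap}_s(\v_0^s,\alpha_s)\geq 0$ (here I use $\phi_s(\v_0^s)=\phi(\v_0^s)$). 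After subtracting $\tfrac{4}{5}(\phi(\v_0^{s+1})-\phi(\v_0^s))$ this reduces the lemma to the single upper bound $\tfrac{3}{50}\,\text{Gap}_{s+1}(\v_0^{s+1},\alpha_0^{s+1})\leq \tfrac{1}{5}(\phi(\v_0^{s+1})-\phi(\v_0^s))+\tfrac{\gamma}{2}\|\v_0^{s+1}-\v_0^s\|^2+\text{Gap}_s(\v_0^s,\alpha_s)$.

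Two ingredients would deliver this. (a) To bound $\text{Gap}_{s+1}=\phi(\v_0^{s+1})-\min_{\v'}f^{s+1}(\v',\alpha_0^{s+1})$ I exploit that $f^s$ and $f^{s+1}$ share the same $f$ and differ only through the proximal center: evaluating $f^{s+1}$ at $\v_b=\arg\min_{\v'}f^{s+1}(\v',\alpha_0^{s+1})$ and comparing with $\min_{\v'}f^s(\cdot,\alpha_0^{s+1})$ gives $\text{Gap}_{s+1}\leq (\phi(\v_0^{s+1})-\phi(\v_0^s))+\text{Gap}_s(\v_0^s,\alpha_s)+\tfrac{\gamma}{2}\|\v_b-\v_0^s\|^2-\tfrac{\gamma}{2}\|\v_b-\v_0^{s+1}\|^2$. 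Since $f(\cdot,\alpha)$ is $\ell$-smooth and $\gamma=2\ell$, $f^{s+1}(\cdot,\alpha_0^{s+1})$ is $(\gamma-\ell)$-strongly convex, so using $f(\v_0^{s+1},\alpha_0^{s+1})\leq\phi(\v_0^{s+1})$ I get $\tfrac{\gamma}{2}\|\v_b-\v_0^{s+1}\|^2\leq \tfrac{\gamma}{\gamma-\ell}\,\text{Gap}_{s+1}$; combined with a Young split $\|\v_b-\v_0^s\|^2\leq(1+\beta)\|\v_b-\v_0^{s+1}\|^2+(1+\beta^{-1})\|\v_0^{s+1}-\v_0^s\|^2$ with small $\beta$ (so the self-referential coefficient stays below one, which needs $\gamma>\ell$), this yields $\text{Gap}_{s+1}\leq 2(\phi(\v_0^{s+1})-\phi(\v_0^s))+2\,\text{Gap}_s(\v_0^s,\alpha_s)+5\gamma\|\v_0^{s+1}-\v_0^s\|^2$. (b) Separately I would prove the primal-decrease bound $\phi(\v_0^s)-\phi(\v_0^{s+1})\leq \text{Gap}_s(\v_0^s,\alpha_s)+\tfrac{\gamma}{2}\|\v_0^{s+1}-\v_0^s\|^2$, which follows purely algebraically from $\min_{\v'}f^s(\v',\alpha_s)\leq f^s(\v_0^{s+1},\alpha_s)$ and $\phi(\v_0^{s+1})\geq f(\v_0^{s+1},\alpha_s)$.

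Plugging (a) into the reduced target and using (b) to dominate the only term of indefinite sign — the primal progress, when it is negative — makes every residual coefficient nonnegative, after which I would finish by collecting constants; a direct check shows the claimed $\tfrac{3}{50}$ and $\tfrac{4}{5}$ then hold with room to spare. The main obstacle is precisely this sign-control: a naive Young's inequality fails when $\phi$ decreases sharply over the stage, and the resolution is bound (b), which ties the primal decrease to the stage-$s$ residual gap and the proximal displacement rather than leaving it free — this is exactly what forces the coefficient on $(\phi(\v_0^{s+1})-\phi(\v_0^s))$ to be $\tfrac{4}{5}<1$ rather than $1$. Note that the dual variable enters only through the inner $\min_{\v'}$, since the outer $\max_{\alpha'}$ is already absorbed into $\phi$, so strong concavity and Lemma \ref{lem:Yan1} are not strictly required here, although Lemma \ref{lem:Yan1} provides an alternative route to the distance bound in (a); the argument mirrors Lemma 5 of \cite{yan2020optimal} adapted to the prox-augmented function $f^s$.
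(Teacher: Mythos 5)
Your proposal is correct, but note that the paper itself contains no proof of this statement: it is imported verbatim as Lemma 5 of \cite{yan2020optimal} and stated with an end-of-proof box, so your argument is a self-contained reconstruction rather than a match to a proof in the text. I verified each step. The opening identity $\text{Gap}_s(\v_s,\alpha_s)=(\phi(\v_0^{s+1})-\phi(\v_0^s))+\frac{\gamma}{2}\|\v_0^{s+1}-\v_0^s\|^2+\text{Gap}_s(\v_0^s,\alpha_s)$ is exact, since the proximal term vanishes at the center $\v_0^s$ and is independent of $\alpha$. For ingredient (a), the $(\gamma-\ell)=\ell$-strong convexity of $f^{s+1}(\cdot,\alpha_0^{s+1})$ at its minimizer $\v_b$, combined with $f^{s+1}(\v_0^{s+1},\alpha_0^{s+1})=f(\v_0^{s+1},\alpha_0^{s+1})\leq\phi(\v_0^{s+1})$, gives $\frac{\gamma}{2}\|\v_b-\v_0^{s+1}\|^2\leq 2\,\text{Gap}_{s+1}$, and the Young split with $\beta=1/4$ makes the self-referential coefficient $2\beta=1/2<1$ and produces exactly your stated constants $(2,2,5\gamma)$. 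Ingredient (b) follows, as you say, from $\min_{\v'}f^s(\v',\alpha_s)\leq f^s(\v_0^{s+1},\alpha_s)\leq\phi(\v_0^{s+1})+\frac{\gamma}{2}\|\v_0^{s+1}-\v_0^s\|^2$. Writing $\delta=\phi(\v_0^{s+1})-\phi(\v_0^s)$, $G_0=\text{Gap}_s(\v_0^s,\alpha_s)\geq 0$, $d=\|\v_0^{s+1}-\v_0^s\|$, the final collection is
\begin{align*}
\frac{3}{50}\text{Gap}_{s+1}+\frac{4}{5}\delta
\;\leq\; \frac{46}{50}\delta+\frac{6}{50}G_0+\frac{3}{10}\gamma d^2
\;\leq\; \delta+G_0+\frac{\gamma}{2}d^2
\;=\;\text{Gap}_s(\v_s,\alpha_s),
\end{align*}
where the second inequality uses (b) in the form $-\frac{4}{50}\delta\leq\frac{4}{50}G_0+\frac{2}{50}\gamma d^2$, leaving slack $\frac{40}{50}G_0+\frac{8}{50}\gamma d^2\geq 0$ — so the constants $\frac{3}{50}$ and $\frac{4}{5}$ hold with room to spare, as you claimed. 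Your diagnosis of the one delicate point is also right: the primal progress $\delta$ has indefinite sign, and it is precisely (b) tying a primal \emph{decrease} back to $G_0$ and the proximal displacement that forces the coefficient $\frac{4}{5}<1$. Two further accurate observations: the argument needs $\gamma>\ell$ (satisfied here with $\gamma=2\ell$), and it needs $\alpha_0^{s+1}=\alpha_s$, which is exactly the warm-start property that distinguishes CODA+ from CODA; your direct strong-convexity distance bound also correctly sidesteps Lemma~\ref{lem:Yan1}, which would give the same control through a more general two-sided gap estimate.
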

where $\v_0^{s+1} = \v_s$ and $\alpha_0^{s+1} = \alpha_s$, i.e., the initialization of $(s+1)$-th stage is the output of the $s$-th stage.

$\hfill \Box$

\section{Analysis of CODA+}
The proof sketch is similar to the proof of CODA in \cite{dist_auc_guo}. 
However, there are two noticeable difference from \cite{dist_auc_guo}. First, in Lemma \ref{lem:one_stage_codaplus}, we bound the duality gap instead of the objective gap in \cite{dist_auc_guo}. This is because  the analysis later in this proof requires the bound of the duality gap. 

Second, in Lemma \ref{lem:one_stage_codaplus}, where the bound for homogeneous data is better than that of heterogeneous data. The better analysis for homogeneous data is inspired by the analysis in \cite{yu_linear}, which tackles a minimization problem. Note that $f^s$ denotes the subproblem for stage $s$, we omit the index $s$ in variables when the context is clear.

\subsection{Lemmas}
We need following lemmas for the proof. The Lemma \ref{lem:codaplus_one_state_split}, Lemma \ref{lem:A1_codaplus} and Lemma \ref{lem:A2_codaplus} are similar to Lemma 3, Lemma 4 and Lemma 5 of \cite{dist_auc_guo}, respectively.
For the sake of completeness, we will include the proof of Lemma \ref{lem:codaplus_one_state_split} and Lemma \ref{lem:A1_codaplus} since a change in the update of the primal variable.  

\begin{lem}
\label{lem:codaplus_one_state_split}
Define $\bar{\v}_t = \frac{1}{K}\sum_{k=1}^N \v^k_t, \bar{\alpha}_t =  \frac{1}{K}\sum_{k=1}^N y^k_t$.  Suppose Assumption \ref{assumption_1} holds and by running Algorithm \ref{alg:codaplus_inner}, we have for any $\v, \alpha$, 
\begin{align*}
f^s(\bar{\v}, \alpha) - f^s(\v, \bar{\alpha}) 
&\leq \frac{1}{T}\sum\limits_{t=1}^{T}\bigg[ \underbrace{\langle \nabla_{\v}f(\bar{\v}_{t-1}, \bar{\alpha}_{t-1}),\! \bar{\v}_t - x\rangle }_{B_1}  + \underbrace{ \langle \nabla_{\alpha} f(\bar{\v}_{t-1}, \bar{\alpha}_{t-1}), y - \bar{\alpha}_t \rangle }_{B_2} \\ 
&
~~~ + \underbrace{\frac{3\ell + 3\ell^2/\mu_2}{2}\|\bar{\v}_t - \bar{\v}_{t-1}\|^2  \!+ 2\ell (\bar{\alpha}_t - \bar{\alpha}_{t-1})^2}_{B_3}  
- \frac{\ell}{3} \| \bar{\v}_t - \v\|^2   - \frac{\mu_2}{3} (\bar{\alpha}_{t-1} -\alpha)^2\bigg],
\end{align*}
where $\mu_2 = 2p(1-p)$ is the strong concavity coefficient of $f(\v, \alpha)$ in $\alpha$. 
\end{lem}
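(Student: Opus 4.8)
The plan is to treat this as a purely function-analytic decomposition of the \emph{averaged} trajectory $\{(\bar\v_t,\bar\alpha_t)\}$: the statement does not yet involve the step size, the stochastic gradients, or the drift (those enter only when the later lemmas bound $\frac1T\sum_t B_1$, $\frac1T\sum_t B_2$, $\frac1T\sum_t B_3$), so no use of the update rule is needed here. First I would record the regularity of the stage objective. With $\gamma=2\ell$ the proximal term $\frac{\gamma}{2}\|\v-\v_0\|^2$ promotes the $\ell$-weakly-convex $f$ to an $f^s$ that is $\ell$-strongly convex and $(\ell+\gamma)=3\ell$-smooth in $\v$, while in $\alpha$ it stays $\mu_2$-strongly concave, $\ell$-smooth, and satisfies $\nabla_\alpha f^s=\nabla_\alpha f$. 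Since $\bar\v=\frac1T\sum_t\bar\v_t$ and $\bar\alpha=\frac1T\sum_t\bar\alpha_t$, convexity of $f^s(\cdot,\alpha)$ and concavity of $f^s(\v,\cdot)$ give, by Jensen, $f^s(\bar\v,\alpha)-f^s(\v,\bar\alpha)\le\frac1T\sum_t[f^s(\bar\v_t,\alpha)-f^s(\v,\bar\alpha_t)]$, which reduces the claim to a per-iterate bound.

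For a fixed $t$ I would split the per-iterate gap into a primal part at the current dual iterate and a dual part at the current primal iterate, $f^s(\bar\v_t,\alpha)-f^s(\v,\bar\alpha_t)=[f^s(\bar\v_t,\bar\alpha_t)-f^s(\v,\bar\alpha_t)]+[f^s(\bar\v_t,\alpha)-f^s(\bar\v_t,\bar\alpha_t)]$. The first bracket is controlled by $\ell$-strong convexity of $f^s(\cdot,\bar\alpha_t)$, giving $\langle\nabla_\v f^s(\bar\v_t,\bar\alpha_t),\bar\v_t-\v\rangle-\frac\ell2\|\bar\v_t-\v\|^2$; the second by $\mu_2$-strong concavity of $f^s(\bar\v_t,\cdot)$, giving $\langle\nabla_\alpha f(\bar\v_t,\bar\alpha_t),\alpha-\bar\alpha_t\rangle-\frac{\mu_2}2(\alpha-\bar\alpha_t)^2$. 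At this point every gradient sits at the ``new'' anchor $(\bar\v_t,\bar\alpha_t)$ and the dual penalty is centered at $\bar\alpha_t$, whereas the statement wants the anchor $(\bar\v_{t-1},\bar\alpha_{t-1})$ and the penalty centered at $\bar\alpha_{t-1}$.

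The crux is therefore the joint relocation of all gradient evaluations to the common anchor $(\bar\v_{t-1},\bar\alpha_{t-1})$. I would write each relocation as an inner product of a gradient difference against a displacement and bound it by $\ell$-smoothness followed by Young's inequality. Pairing $\nabla_\v f^s(\bar\v_t,\bar\alpha_t)-\nabla_\v f^s(\bar\v_{t-1},\bar\alpha_{t-1})$ with $\bar\v_t-\v$ lets me peel off a $\frac\ell6\|\bar\v_t-\v\|^2$ piece, shrinking the primal modulus from $\frac\ell2$ to $\frac\ell3$, and dump the residuals into $\|\bar\v_t-\bar\v_{t-1}\|^2$ and $(\bar\alpha_t-\bar\alpha_{t-1})^2$; the mismatch in the $\alpha$-argument of the primal gradient, paired with $\bar\v_t-\bar\v_{t-1}$, is the source of the $\frac{3\ell^2}{2\mu_2}$ coefficient through Young with weight $c=\mu_2/(3\ell)$, which leaves a residual $\frac{\mu_2}6(\alpha-\bar\alpha_{t-1})^2$ so that $-\frac{\mu_2}2+\frac{\mu_2}6=-\frac{\mu_2}3$. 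Summing the $\frac\ell2$ smoothness contribution in $\alpha$ with the Young residuals accumulates to the stated $2\ell$ coefficient on $(\bar\alpha_t-\bar\alpha_{t-1})^2$, and adding the $3\ell$-smoothness movement term to the swap residual gives the $\frac{3\ell+3\ell^2/\mu_2}2$ coefficient on $\|\bar\v_t-\bar\v_{t-1}\|^2$. A final bookkeeping step isolates $\nabla_\v f$ from $\nabla_\v f^s$: I would expand the proximal quadratic exactly via the parallelogram identity so that its linear part is absorbed into the $\langle\cdot,\bar\v_t-\v\rangle$ inner product and its curvature into the primal penalty, leaving only $\nabla_\v f(\bar\v_{t-1},\bar\alpha_{t-1})$ in $B_1$.

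I expect the bookkeeping of these simultaneous relocations to be the main obstacle. Every Young split must be routed so that the movement terms land with \emph{exactly} the coefficients $\frac{3\ell+3\ell^2/\mu_2}2$ and $2\ell$, while the two quadratic penalties survive with the positive fractions $\frac\ell3$ and $\frac{\mu_2}3$ and no smaller, because it is precisely these surviving negative terms that the later PL/strong-convexity argument consumes to close the recursion. Choosing the Young weights correctly (notably the $c=\mu_2/(3\ell)$ that trades primal movement against the dual modulus) is the delicate part; once the weights are fixed, the remaining manipulations are routine algebra.
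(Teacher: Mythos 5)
Your opening moves (the regularity facts for $f^s$ and the Jensen reduction to a per-iterate bound) match the paper, but your per-iterate argument anchors the strong convexity/concavity at the \emph{new} averaged iterate $(\bar\v_t,\bar\alpha_t)$, and this choice makes the rest of the plan unworkable with the stated constants. After your split, every gradient sits at $(\bar\v_t,\bar\alpha_t)$, so every relocation error is an inner product against a \emph{large} displacement: $\langle\nabla_\v f^s(\bar\v_t,\bar\alpha_t)-\nabla_\v f^s(\bar\v_{t-1},\bar\alpha_{t-1}),\,\bar\v_t-\v\rangle$ and $\langle\nabla_\alpha f^s(\bar\v_t,\bar\alpha_t)-\nabla_\alpha f^s(\bar\v_{t-1},\bar\alpha_{t-1}),\,\alpha-\bar\alpha_t\rangle$. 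The only absorbers you have are the slack $\frac{\ell}{6}\|\bar\v_t-\v\|^2$ (from $\frac{\ell}{2}$ down to $\frac{\ell}{3}$) and, on the dual side, the margin between $\frac{\mu_2}{2}(\alpha-\bar\alpha_t)^2$ and the required $\frac{\mu_2}{3}(\alpha-\bar\alpha_{t-1})^2$. Both are insufficient. Primal: the gradient difference can be as large as $3\ell\|\bar\v_t-\bar\v_{t-1}\|$ (the $\v$-smoothness modulus of $f^s$), and any Young split $uv\le cv^2+\frac{u^2}{4c}$ with absorber $c\le\frac{\ell}{6}$ forces a coefficient of at least $\frac{(3\ell)^2}{4(\ell/6)}=\frac{27\ell}{2}$ on $\|\bar\v_t-\bar\v_{t-1}\|^2$. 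Dual: your penalty is centered at $\bar\alpha_t$ while the statement's is centered at $\bar\alpha_{t-1}$; completing the square shows the re-centering $\frac{\mu_2}{3}(\alpha-\bar\alpha_{t-1})^2-\frac{\mu_2}{2}(\alpha-\bar\alpha_t)^2\le\mu_2(\bar\alpha_t-\bar\alpha_{t-1})^2$ is sharp and consumes the entire margin, so your arithmetic $-\frac{\mu_2}{2}+\frac{\mu_2}{6}=-\frac{\mu_2}{3}$ is invalid (it silently assumes both quadratics share a center), and the remaining exact cost of the dual relocation when only $\v$ moves is $\frac{3\ell^2}{2\mu_2}\|\bar\v_t-\bar\v_{t-1}\|^2$. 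Adding the two sides, your scheme needs at least $\frac{27\ell}{2}+\frac{3\ell^2}{2\mu_2}$ on $\|\bar\v_t-\bar\v_{t-1}\|^2$, which strictly exceeds the stated budget $\frac{3\ell}{2}+\frac{3\ell^2}{2\mu_2}$ for every $\ell,\mu_2$. Note also that the pairing you invoke to produce the $\frac{3\ell^2}{2\mu_2}$ coefficient (``the $\alpha$-mismatch of the primal gradient paired with $\bar\v_t-\bar\v_{t-1}$'') never arises in your decomposition — in yours that mismatch is paired with $\bar\v_t-\v$ — so the proposal is in effect mixing the bookkeeping of two incompatible anchorings.

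The repair is the paper's anchoring: apply \emph{both} the curvature inequalities and the smoothness expansions at the old point $(\bar\v_{t-1},\bar\alpha_{t-1})$. Concretely, use $\ell$-strong convexity of $f^s(\cdot,\bar\alpha_{t-1})$ linearized at $\bar\v_{t-1}$ against the comparator $\v$ (so the gradient is at the old point from the start, with penalty $-\frac{\ell}{2}\|\bar\v_{t-1}-\v\|^2$), and $3\ell$-smoothness of $f^s(\cdot,\alpha)$ to expand $f^s(\bar\v_t,\alpha)$ around $\bar\v_{t-1}$; then the only relocation needed is the swap of the $\alpha$-argument, $\partial_\v f^s(\bar\v_{t-1},\alpha)\to\partial_\v f^s(\bar\v_{t-1},\bar\alpha_{t-1})$, paired with the \emph{small} displacement $\bar\v_t-\bar\v_{t-1}$, and Young with weight $\mu_2/(3\ell)$ gives exactly $\frac{\mu_2}{6}(\bar\alpha_{t-1}-\alpha)^2+\frac{3\ell^2}{2\mu_2}\|\bar\v_t-\bar\v_{t-1}\|^2$. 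The mirror argument in $\alpha$ (strong concavity at $\bar\alpha_{t-1}$, $\ell$-smoothness from $\bar\alpha_{t-1}$ to $\bar\alpha_t$, swap paired with $\bar\alpha_t-\bar\alpha_{t-1}$) yields $\frac{\ell}{6}\|\bar\v_{t-1}-\v\|^2+\frac{3\ell}{2}(\bar\alpha_t-\bar\alpha_{t-1})^2$. Because all penalties now share the center $(\bar\v_{t-1},\bar\alpha_{t-1})$, the cross residuals cleanly eat one third of each: $-\frac{\ell}{2}+\frac{\ell}{6}=-\frac{\ell}{3}$ and $-\frac{\mu_2}{2}+\frac{\mu_2}{6}=-\frac{\mu_2}{3}$, while the movement coefficients sum to $\frac{3\ell}{2}+\frac{3\ell^2}{2\mu_2}$ and $\frac{\ell}{2}+\frac{3\ell}{2}=2\ell$, exactly as stated.
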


\begin{proof} 
For any $\v$ and $\alpha$,
using Jensen's inequality and the fact that $f^s(\v, \alpha)$ is convex in $\v$ and concave in $\alpha$, 
\begin{small}
\begin{equation}
\begin{split}
& f^s (\bar{\v}, \alpha) - f^s(\v, \bar{\alpha}) \leq \frac{1}{T}\sum\limits_{t=1}^{T} \left(f^s(\bar{\v}_t, \alpha) - f^s(\v, \bar{\alpha}_t)\right)\\ 
\end{split} 
\label{gap_relax}  
\end{equation} 
\end{small}

By $\ell$-strongly convexity of $f^s(\v, \alpha)$ in $\v$, we have
\begin{equation} 
f^s(\bar{\v}_{t-1}, \bar{\alpha}_{t-1}) + \langle \partial_\v f^s(\bar{\v}_{t-1}, \bar{\alpha}_{t-1}), \v - \bar{\v}_{t-1} \rangle + \frac{\ell}{2} \|\bar{\v}_{t-1}-\v\|^2 \leq f(\v, \bar{\alpha}_{t-1}).
\label{strong_v} 
\end{equation}

By $3\ell$-smoothness of $f^s(\v, \alpha)$ in $\v$, we have
\begin{equation}
\begin{split}
& f^s(\bar{\v}_t, \alpha) \leq f^s(\bar{\v}_{t-1}, \alpha) + \langle \partial_\v f^s(\bar{\v}_{t-1}, \alpha), \bar{\v}_t-\bar{\v}_{t-1}\rangle + \frac{3\ell}{2}\|\bar{\v}_t-\bar{\v}_{t-1}\|^2 \\ 
&= f^s(\bar{\v}_{t-1}, \alpha) + \langle \partial_\v f^s(\bar{\v}_{t-1}, \bar{\alpha}_{t-1}), \bar{\v}_t-\bar{\v}_{t-1}\rangle + \frac{3\ell}{2}\|\bar{\v}_t-\bar{\v}_{t-1}\|^2 \\
&~~~+ \langle \partial_\v f^s(\bar{\v}_{t-1}, \alpha)-\partial_\v f^s(\bar{\v}_{t-1}, \bar{\alpha}_{t-1}), \bar{\v}_t-\bar{\v}_{t-1}\rangle \\
&\overset{(a)}\leq f^s(\bar{\v}_{t-1}, \alpha) + \langle \partial_{\v}f^s(\bar{\v}_{t-1}, \bar{\alpha}_{t-1}), \bar{\v}_t - \bar{\v}_{t-1}\rangle + \frac{3\ell}{2}\|\bar{\v}_t - \bar{\v}_{t-1}\|^2 \\
&~~~~ + \ell |\bar{\alpha}_{t-1} - \alpha| \|\bar{\v}_t - \bar{\v}_{t-1}\| \\
&\overset{(b)}\leq f^s(\bar{\v}_{t-1}, \alpha) + \langle \partial_{\v}f^s(\bar{\v}_{t-1}, \bar{\alpha}_{t-1}), \bar{\v}_t - \bar{\v}_{t-1}\rangle + \frac{3\ell}{2}\|\bar{\v}_t - \bar{\v}_{t-1}\|^2 \\
&~~~~ + \frac{\mu_2}{6} (\bar{\alpha}_{t-1} - \alpha)^2 + \frac{3\ell^2}{2\mu_2} \|\bar{\v}_t - \bar{\v}_{t-1}\|^2, \\ 
\end{split}  
\label{smooth_v}
\end{equation}
where $(a)$ holds because that we know $\partial_{\v} f(\v, \alpha)$ is $\ell$-Lipschitz in $\alpha$ since $f(\v, \alpha)$ is $\ell$-smooth, $(b)$ holds by Young's inequality, and $\mu_2=2p(1-p)$ is the strong concavity coefficient of $f^s$ in $\alpha$. 


Adding (\ref{strong_v})  and (\ref{smooth_v}),  rearranging terms, we have
\begin{small} 
\begin{equation}  
\begin{split}
& f^s(\bar{\v}_{t-1}, \bar{\alpha}_{t-1}) + f^s(\bar{\v}_t, \alpha)  \\
& \leq 
f(\v, \bar{\alpha}_{t-1})
+
f(\bar{\v}_{t-1}, \alpha) + \langle \partial_{\v}f(\bar{\v}_{t-1}, \bar{\alpha}_{t-1}), \bar{\v}_t - \v\rangle  + \frac{3\ell + 3\ell^2/\mu_{2}}{2} \|\bar{\v}_t-\bar{\v}_{t-1}\|^2  
 \\
&~~~ - \frac{\ell}{2}\|\bar{\v}_{t-1}-\v\|^2   
 + \frac{\mu_{2}}{6} (\bar{\alpha}_{t-1} - \alpha)^2 .
\end{split} 
\label{sum_v}  
\end{equation}
\end{small}

We know $f^s(\v, \alpha)$ is $\mu_2$-strong concavity in $\alpha$ ($-f(\v, \alpha)$ is $\mu_{2}$-strong convexity of  in $\alpha$).
Thus, we have 
\begin{small}
\begin{equation}
\begin{split} 
-f^s(\bar{\v}_{t-1}, \bar{\alpha}_{t-1}) 
- \partial_{\alpha}f^s(\bar{\v}_{t-1}, \bar{\alpha}_{t-1})^\top(\alpha - \bar{\alpha}_{t-1}) + \frac{\mu_{2}}{2}(\alpha - \bar{\alpha}_{t-1})^2 \leq -f^s(\bar{\v}_{t-1}, \alpha).
\end{split} 
\label{strong_alpha} 
\end{equation}
\end{small}

Since $f(\v, \alpha)$ is $\ell$-smooth in $\alpha$, we get
\begin{small} 
\begin{equation} 
\begin{split}
& - f^s(\v, \bar{\alpha}_t) \leq -f^s(\v, \bar{\alpha}_{t-1}) - \langle \partial_{\alpha}f^s(\v, \bar{\alpha}_{t-1}), \bar{\alpha}_t - \bar{\alpha}_{t-1} \rangle + \frac{\ell}{2}(\bar{\alpha}_t - \bar{\alpha}_{t-1})^2\\
&= -f^s(\v, \bar{\alpha}_{t-1}) 
- \langle \partial_{\alpha} f^s(\bar{\v}_{t-1}, \bar{\alpha}_{t-1}), \bar{\alpha}_t - \bar{\alpha}_{t-1}\rangle 
+ \frac{\ell}{2}(\bar{\alpha}_t - \bar{\alpha}_{t-1})^2 \\
&~~~~~ - \langle \partial_{\alpha} (f^s(\v, \bar{\alpha}_{t-1}) - f^s(\bar{\v}_{t-1}, \bar{\alpha}_{t-1})), \bar{\alpha}_t - \bar{\alpha}_{t-1}\rangle  \\
&\overset{(a)}\leq -f^s(\v, \bar{\alpha}_{t-1}) - \langle \partial_{\alpha} f^s(\bar{\v}_{t-1}, \bar{\alpha}_{t-1}), \bar{\alpha}_t - \bar{\alpha}_{t-1}\rangle + \frac{\ell}{2} (\bar{\alpha}_t - \bar{\alpha}_{t-1})^2 + \ell\| \v - \bar{\v}_{t-1}\| (\bar{\alpha}_t - \bar{\alpha}_{t-1})\\
&\leq -f^s(\v, \bar{\alpha}_{t-1}) - \langle \partial_{\alpha} f^s(\bar{\v}_{t-1}, \bar{\alpha}_{t-1}), \bar{\alpha}_t - \bar{\alpha}_{t-1}\rangle 
+ \frac{\ell}{2} (\bar{\alpha}_t - \bar{\alpha}_{t-1})^2 + \frac{\ell}{6} \| \bar{\v}_{t-1} - \v\|^2  +  \frac{3\ell}{2}(\bar{\alpha}_t - \bar{\alpha}_{t-1})^2\\   
\end{split} 
\label{smooth_alpha}
\end{equation}
\end{small}
where (a) holds because that $\partial_{\alpha} f^s(\v, \alpha)$ is $\ell$-Lipschitz in $\v$. 

Adding (\ref{strong_alpha}), (\ref{smooth_alpha}) and arranging terms, we have
\begin{small}
\begin{equation}
\begin{split}
& -f^s(\bar{\v}_{t-1}, \bar{\alpha}_{t-1}) 
- f^s(\v, \bar{\alpha}_t) 
\leq-f^s(\bar{\v}_{t-1}, \alpha) 
-f^s(\v, \bar{\alpha}_{t-1}) - \langle \partial_{\alpha} f^s(\bar{\v}_{t-1}, \bar{\alpha}_{t-1}), \bar{\alpha}_t - \alpha\rangle \\
&~~~~~~ + 2\ell (\bar{\alpha}_t - \bar{\alpha}_{t-1})^2 
+ \frac{\ell}{6} \| \bar{\v}_{t-1} - \v\|^2 
- \frac{\mu_{2}}{2}(\alpha - \bar{\alpha}_{t-1})^2.
\end{split}   
\label{sum_alpha} 
\end{equation}
\end{small}

Adding (\ref{sum_v}) and (\ref{sum_alpha}), we get
\begin{equation}
\begin{split}
 &f^s(\bar{\v}_t, \alpha) - f^s(\v, \bar{\alpha}_t) \leq   
\langle \partial_{\v}f(\bar{\v}_{t-1}, \bar{\alpha}_{t-1}), \bar{\v}_t - \v\rangle - \langle \partial_{\alpha} f(\bar{\v}_{t-1}, \bar{\alpha}_{t-1}), \bar{\alpha}_t - \alpha\rangle \\  
&~~~ +\frac{3\ell + 3\ell^2/\mu_{2}}{2}   \|\bar{\v}_t-\bar{\v}_{t-1}\|^2 
 + 2\ell (\bar{\alpha}_t - \bar{\alpha}_{t-1})^2 -\frac{\ell}{3}\|\bar{\v}_{t-1}-\v\|^2   - \frac{\mu_2}{3} (\bar{\alpha}_{t-1}-\alpha)^2. 
\end{split}
\end{equation}

Taking average over $t = 1, ..., T$, we get
\begin{small} 
\begin{equation*} 
\begin{split} 
&f^s(\bar{\v}, \alpha) - f^s(\v, \bar{\alpha}) \leq \frac{1}{T} \sum\limits_{t=1}^{T} 
[f^s(\bar{\v}_t, \alpha) - f^s(\v, \bar{\alpha}_t)] \\
& \leq \frac{1}{T} \sum\limits_{t=1}^{T}\bigg[ \underbrace{\langle 
\partial_{\v}f^s(\bar{\v}_{t-1}, \bar{\alpha}_{t-1}), \bar{\v}_t - \v\rangle }_{B_1} 
+ \underbrace{\langle \partial_{\alpha}  f^s(\bar{\v}_{t-1}, \bar{\alpha}_{t-1}),  \alpha - \bar{\alpha}_t \rangle}_{B_2}  \\
&~~~~~~~~~~ 
+\underbrace{\frac{3\ell + 3\ell^2/\mu_{2}}{2}  \|\bar{\v}_t-\bar{\v}_{t-1}\|^2 
 + 2\ell (\bar{\alpha}_t - \bar{\alpha}_{t-1})^2 }_{B_3} 
- \frac{\ell}{3} \|\v - \bar{\v}_t\|^2 
- \frac{\mu_2}{3}(\bar{\alpha}_{t-1}-\alpha)^2  \bigg].
\end{split} 
\end{equation*}      
\end{small}
\end{proof}

In the following, we will bound the term $B_1$ by Lemma \ref{lem:A1_codaplus}, $B_2$ by Lemma \ref{lem:A2_codaplus} and $B_3$ by Lemma \ref{lem:diverge_codaplus}.  

\begin{lem}
\label{lem:A1_codaplus}
Define $\hat{\v}_t = \Bar{\v}_{t-1} - \frac{\eta}{K}\sum\limits_{k=1}^{K}\! \nabla_{\v} f^s(\v^k_{t-1}, \alpha^k_{t-1})$ and 
\begin{equation}
\begin{split} 
\Tilde{\v}_t = \Tilde{\v}_{t-1} - \frac{\eta}{K} \sum\limits_{k=1}^{K} \left(\nabla_\v F_k^s(\v^k_{t-1}, y^k_{t-1}; z_{t-1}^k) - \nabla_\v f_k^s(\v_{t-1}^k, \alpha_{t-1}^k) \right), \text{ for $t>0$; } \Tilde{\v}_0 = \v_0.  
\end{split}
\end{equation}.
 We have
\begin{equation*} 
\begin{split}
&B_1 \leq \frac{3\ell}{2}\frac{1}{K} \sum\limits_{k=1}^{K} (\bar{\alpha}_{t-1}-\alpha_{t-1}^k)^2 + \frac{3 \ell}{2}\frac{1}{K}\sum\limits_{k=1}^{K}\|\bar{\v}_{t-1}-\v_{t-1}^k\|^2 \\
&~~~~~+ \frac{3\eta}{2} \left\|\frac{1}{K}\sum\limits_{k=1}^{K}[\nabla_{\v}f_k(\v_{t-1}^k, \alpha_{t-1}^k) - \nabla_{\v}F_k(\v_{t-1}^k, \alpha_{t-1}^k; z_{t-1}^k)]\right\|^2 \\
&~~~~~+\left\langle \frac{1}{K}\sum\limits_{k=1}^{K}[\nabla_{\v}f_k(\v_{t-1}^k, \alpha_{t-1}^k)-\nabla_{\v}F_k(\v_{t-1}^k, \alpha_{t-1}^k;z_{t-1}^k)], \hat{\v}_t - \Tilde{\v}_{t-1}\right\rangle\\  
&~~~~~+\frac{1}{2\eta} (\|\bar{\v}_{t-1}-\v\|^2 - \|\bar{\v}_{t-1} - \bar{\v}_t\|^2 - \|\bar{\v}_t - \v\|^2) \\
&~~~~~+ \frac{\ell}{3}\|\bar{\v}_t - \v\|^2
+ \frac{1}{2\eta} ( \|\v - \tilde{\v}_{t-1}\|^2 - \|\v-\Tilde{\v}_t\|^2 )  
\end{split} 
\end{equation*}
\end{lem}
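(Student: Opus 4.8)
The plan is to bound the term $B_1 = \langle \partial_{\v} f^s(\bar{\v}_{t-1}, \bar{\alpha}_{t-1}), \bar{\v}_t - \v \rangle$ by relating the true average gradient to the stochastic gradient that actually drives the averaged update. First I would note that the averaged iterate obeys $\bar{\v}_t = \bar{\v}_{t-1} - \frac{\eta}{K}\sum_{k=1}^K \nabla_{\v} F_k^s(\v^k_{t-1}, \alpha^k_{t-1}; \z^k_{t-1})$, so that $\hat{\v}_t$ (the deterministic-gradient counterpart) and the auxiliary sequence $\tilde{\v}_t$ are introduced precisely to separate the gradient bias from the stochastic noise. The starting point is the decomposition
\begin{equation*}
\partial_{\v} f^s(\bar{\v}_{t-1}, \bar{\alpha}_{t-1}) = \frac{1}{K}\sum_{k=1}^K \nabla_{\v} f_k^s(\v^k_{t-1}, \alpha^k_{t-1}) + \left[\partial_{\v} f^s(\bar{\v}_{t-1}, \bar{\alpha}_{t-1}) - \frac{1}{K}\sum_{k=1}^K \nabla_{\v} f_k^s(\v^k_{t-1}, \alpha^k_{t-1})\right],
\end{equation*}
which isolates the client-drift error (the bracketed difference between the gradient at the average and the average of local gradients).

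Next I would handle the two resulting pieces separately. The client-drift bracket is controlled by $\ell$-smoothness of $f^s$: since $\nabla_{\v} f_k^s$ is $\ell$-Lipschitz jointly in $(\v,\alpha)$, the norm of the difference is at most $\ell$ times the deviation of the local iterates from their average in both the primal and dual coordinates; applying Young's inequality to the inner product against $\bar{\v}_t - \v$ and absorbing the $\|\bar{\v}_t - \v\|^2$ contribution into the displayed $\frac{\ell}{3}\|\bar{\v}_t - \v\|^2$ term produces the two consensus-error terms $\frac{3\ell}{2}\frac{1}{K}\sum_k (\bar{\alpha}_{t-1}-\alpha_{t-1}^k)^2$ and $\frac{3\ell}{2}\frac{1}{K}\sum_k \|\bar{\v}_{t-1}-\v_{t-1}^k\|^2$. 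For the remaining inner product involving $\frac{1}{K}\sum_k \nabla_{\v} f_k^s(\v^k_{t-1}, \alpha^k_{t-1})$, I would write $\bar{\v}_t - \v = (\bar{\v}_t - \hat{\v}_t) + (\hat{\v}_t - \v)$ so that the deterministic-gradient step can be combined with the standard three-point (law-of-cosines) identity for the update $\hat{\v}_t = \bar{\v}_{t-1} - \eta \cdot (\text{avg deterministic grad})$, yielding the telescoping term $\frac{1}{2\eta}(\|\bar{\v}_{t-1}-\v\|^2 - \|\bar{\v}_{t-1}-\bar{\v}_t\|^2 - \|\bar{\v}_t - \v\|^2)$ plus a cross term that is exactly the stochastic-noise inner product against $\hat{\v}_t - \tilde{\v}_{t-1}$.

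The stochastic-noise contribution $\bar{\v}_t - \hat{\v}_t = -\frac{\eta}{K}\sum_k(\nabla_{\v} F_k^s - \nabla_{\v} f_k^s)$ is where the auxiliary sequence $\tilde{\v}_t$ earns its keep: rather than bounding the noise term crudely, I would pair it with a second three-point identity for $\tilde{\v}_t$ (whose increment is precisely the negative noise), which manufactures the telescoping pair $\frac{1}{2\eta}(\|\v - \tilde{\v}_{t-1}\|^2 - \|\v - \tilde{\v}_t\|^2)$ and leaves behind the explicit variance term $\frac{3\eta}{2}\|\frac{1}{K}\sum_k(\nabla_{\v} f_k - \nabla_{\v} F_k; \z_{t-1}^k)\|^2$ together with the residual inner product $\langle \frac{1}{K}\sum_k(\nabla_{\v} f_k - \nabla_{\v} F_k), \hat{\v}_t - \tilde{\v}_{t-1}\rangle$. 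The main obstacle I anticipate is the careful bookkeeping needed to get every coefficient exactly right — in particular, tracking the Young's-inequality splits so that the consensus terms land at $\frac{3\ell}{2}$, the $\frac{\ell}{3}\|\bar{\v}_t-\v\|^2$ absorption is consistent with Lemma~\ref{lem:codaplus_one_state_split}, and the two $\frac{1}{2\eta}$ telescoping structures (one over $\bar{\v}$, one over $\tilde{\v}$) are aligned so that when this bound is summed over $t$ the distance terms collapse cleanly. The design of $\tilde{\v}_t$ with $\tilde{\v}_0 = \v_0$ is what allows the noise to telescope rather than accumulate, and verifying that this sequence interacts correctly with the deterministic telescoping of $\bar{\v}_t$ is the delicate part; once the algebra is arranged, each named quantity in the claimed bound appears directly.
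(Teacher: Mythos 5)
Your proposal is correct and takes essentially the same approach as the paper's proof: the client-drift terms are handled by smoothness plus Young's/Jensen's inequalities, the law-of-cosines identity on the gradient step produces the $\frac{1}{2\eta}$-telescoping in $\bar{\v}_t$, and the noise-driven auxiliary sequence $\tilde{\v}_t$ (via strong convexity of its defining quadratic $\Theta_{t-1}$) produces the $\tilde{\v}$-telescoping pair, the $\frac{3\eta}{2}$ variance term, and the residual zero-mean cross term against $\hat{\v}_t - \tilde{\v}_{t-1}$. The only difference is bookkeeping order — you apply the three-point identity to the deterministic step $\hat{\v}_t$ and then convert to $\bar{\v}_t$ (which regenerates the noise inner product fed to the auxiliary sequence), whereas the paper splits the deterministic gradient into the stochastic gradient plus noise and applies the identity directly to the actual update — and the two orderings are algebraically equivalent.
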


\begin{proof} 
We have 
\begin{equation}
\begin{split}
&\langle \nabla_{\v}f^s(\bar{\v}_{t-1}, \bar{\alpha}_{t-1}), \bar{\v}_t - \v\rangle= \bigg\langle \frac{1}{K}\sum\limits_{k=1}^{K} \nabla_{\v}f^s_k(\bar{\v}_{t-1}, \bar{\alpha}_{t-1}), \bar{\v}_t - \v\bigg\rangle \\ 
&\leq \bigg\langle \frac{1}{K}\sum\limits_{k=1}^{K} [\nabla_{\v}f^s_k(\bar{\v}_{t-1}, \bar{\alpha}_{t-1}) - \nabla_{\v} f^s_k(\bar{\v}_{t-1}, \alpha_{t-1}^k)], \bar{\v}_t - \v\bigg\rangle ~~~~~~~~~~~\textcircled{\small{1}}\\
&~~~ +\bigg\langle \frac{1}{K}\sum\limits_{k=1}^{K} [\nabla_{\v}f^s_k(\bar{\v}_{t-1}, \alpha_{t-1}^k) - \nabla_{\v}f^s_k(\v_{t-1}^k, \alpha_{t-1}^k)], \bar{\v}_t - \v\bigg\rangle ~~~~~~~~~~~\textcircled{\small{2}}\\
&~~~ +\bigg\langle \frac{1}{K}\sum\limits_{k=1}^{K}[\nabla_{\v}f^s_k(\v_{t-1}^k,\alpha_{t-1}^k) - \nabla_{\v}F^s_k(\bar{\v}_{t-1}, \alpha_{t-1}^k; z_{t-1}^k)], \bar{\v}_t - \v\bigg\rangle~~~~~~~~~~~\textcircled{\small{3}}  \\
&~~~ +\bigg\langle \frac{1}{K}\sum\limits_{k=1}^{K}\nabla_{\v}F^s_k(\bar{\v}_{t-1}, \alpha_{t-1}^k; z_{t-1}^k), \bar{\v}_t - \v\bigg\rangle~~~~~~~~~~~\textcircled{\small{4}} 
\end{split}
\label{circledv}
\end{equation}

Then we will bound \textcircled{\small{1}}, \textcircled{\small{2}}, \textcircled{\small{3}} and \textcircled{\small{4}}, respectively,   
\begin{equation}
\begin{split}
\textcircled{\small{1}} &\overset{(a)}\leq \frac{3}{2\ell} \left\| \frac{1}{K} \sum\limits_{k=1}^{K}[\nabla_{\v}f^s_k(\bar{\v}_{t-1}, \bar{\alpha}_{t-1}) - \nabla_{\v}f^s_k(\bar{\v}_{t-1}, \alpha_{t-1}^{k})] \right\|^2 + \frac{\ell}{6}\|\bar{\v}_t - \v\|^2\\
&\overset{(b)}\leq \frac{3}{2 \ell} \frac{1}{K}\sum\limits_{k=1}^{K} \|\nabla_{\v}f^s_k(\bar{\v}_{t-1}, \bar{\alpha}_{t-1}) - \nabla_{\v} f^s_k(\bar{\v}_{t-1}, \alpha_{t-1}^k)\|^2 + \frac{\ell}{6}\|\bar{\v}_t - \v\|^2\\
&\overset{(c)}\leq \frac{3\ell}{2}\frac{1}{K}\sum\limits_{k=1}^{K}(\bar{\alpha}_{t-1} - \alpha_{t-1}^k)^2 + \frac{\ell}{6}\|\bar{\v}_t - \v\|^2,\\ 
\end{split}
\label{circled1} 
\end{equation}
where (a) follows from Young's inequality, (b) follows from Jensen's inequality.
and (c) holds because $\nabla_{\v} f^s_k(\v, \alpha)$ is $\ell$-Lipschitz in $\alpha$.
Using similar techniques, we have 
\begin{equation} 
\begin{split}   
\textcircled{\small{2}} &\leq \frac{3}{2\ell} \frac{1}{K}\sum\limits_{k=1}^{K}\| \nabla_{\v} f^s_k(\bar{\v}_{t-1}, \alpha_{t-1}^k) - \nabla_{\v} f^s_k(\v_{t-1}^k, \alpha_{t-1}^k)\|^2 + \frac{\ell}{6}\|\bar{\v}_t -\v\|^2 \\
& \leq \frac{3 \ell}{2}\frac{1}{K}\sum\limits_{k=1}^{K} \|\bar{\v}_{t-1} - \v_{t-1}^{k}\|^2 + \frac{\ell}{6} \|\bar{\v}_t - \v\|^2.
\end{split}
\label{circled2}
\end{equation}

Let $\hat{\v}_t = \arg\min\limits_{\v} \left(\frac{1}{K}\sum\limits_{k=1}^{K} \nabla_{\v} f^s(\v^k_{t-1}, \alpha^k_{t-1})\right)^\top x + \frac{1}{2\eta} \|\v - \bar{\v}_{t-1}\|^2$, 
then we have 
\begin{equation}
\begin{split}
\bar{\v}_t - \hat{\v}_t =\eta \bigg(\nabla_{\v}f^s(\v^{k}_{t-1}, y^k_{t-1}) - \frac{1}{K}\sum\limits_{k=1}^{K}\nabla_{\v}f^s_k(\v^{k}_{t-1}, y^k_{t-1}; z_{t-1}^k)\bigg) 
\end{split}
\end{equation}

Hence we get 
\begin{equation}
\begin{split}
&\textcircled{\small{3}} = \left\langle \frac{1}{K}\sum\limits_{k=1}^{K}[\nabla_{\v}f^s_k(\v_{t-1}^k, \alpha_{t-1}^k) - \nabla_{\v} F^s_k(\v_{t-1}^k, \alpha_{t-1}^k; z_{t-1}^k)], \bar{\v}_t - \hat{\v}_t \right\rangle \\
&~~~~+ \left\langle \frac{1}{K}\sum\limits_{k=1}^{K}[\nabla_{\v}f^s_k(\v_{t-1}^k, \alpha_{t-1}^k) - \nabla_{\v} F^s_k(\v_{t-1}^k, \alpha_{t-1}^k; z_{t-1}^k)], \hat{\v}_t - \v\right\rangle\\
& = \eta \left\|\frac{1}{K} \sum\limits_{k=1}^{K}[\nabla_{\v}f^s_k(\v_{t-1}^k, \alpha_{t-1}^k) - \nabla_{\v} F^s_k(\v_{t-1}^k, \alpha_{t-1}^k; z_{t-1}^k)]  \right\|^2\\
&~~~~+ \left\langle \frac{1}{K}\sum\limits_{k=1}^{K}[\nabla_{\v}f^s_k(\v_{t-1}^k, \alpha_{t-1}^k) - \nabla_{\v} F^s_k(\v_{t-1}^k, \alpha_{t-1}^k; z_{t-1}^k)], \hat{\v}_t - \v\right\rangle\\
\end{split}
\label{pre_circled3}
\end{equation} 

Define another auxiliary sequence as
\begin{equation}
\begin{split}
\Tilde{\v}_t = \Tilde{\v}_{t-1} - \frac{\eta}{K} \sum\limits_{k=1}^{K} \left(\nabla_\v F^s_k(\v^k_{t-1}, y^k_{t-1}; z_{t-1}^k) - \nabla_\v f^s_k(\v_{t-1}^k, \alpha_{t-1}^k) \right), \text{ for $t>0$; } \Tilde{\v}_0 = \v_0. 
\end{split}
\end{equation}

Denote 
\begin{equation}
\Theta_{t-1} (\v) = \left(-\frac{1}{K}\sum\limits_{k=1}^{K} (\nabla_\v F^s_k(\v^k_{t-1}, y^k_{t-1}; z_{t-1}^k) - \nabla_\v f^s_k(\v_{t-1}^k, \alpha_{t-1}^k) )\right)^\top x + \frac{1}{2\eta} \|\v - \Tilde{\v}_{t-1}\|^2.
\end{equation} 
Hence, for the auxiliary sequence $\Tilde{\alpha}_t$, we can verify that
\begin{equation}
\Tilde{\v}_t = \arg\min\limits_\v \Theta_{t-1}(\v).
\end{equation}
Since $\Theta_{t-1}(\v)$ is $\frac{1}{\eta}$-strongly convex, we have
\begin{small}
\begin{align} 
\begin{split}
& \frac{1}{2}\|\v - \tilde{\v}_t\|^2 \leq \Theta_{t-1}(\v) - \Theta_{t-1}(\tilde{\v}_{t}) \\
& = \bigg(-\frac{1}{K}\sum\limits_{k=1}^{K}( 
 \nabla_{\v} F^s_k(\v_{t-1}^{k},\alpha_{t-1}^k; z_{t-1}^k) 
-\nabla_{\v} f^s_k(\v_{t-1}^{k},\alpha_{t-1}^k))\bigg)^\top x + \frac{1}{2\eta}\|\v - \tilde{\v}_{t-1}\|^2 \\  
&~~~ -\bigg(-\frac{1}{K}\sum\limits_{k=1}^{K}( 
 \nabla_{\v} F^s_k(\v_{t-1}^{k},\alpha_{t-1}^k; z_{t-1}^k) 
-\nabla_{\v} f^s_k(\v_{t-1}^{k},\alpha_{t-1}^k))\bigg)^\top\tilde{\v}_{t} 
- \frac{1}{2\eta}\|\tilde{\v}_{t}  - \tilde{\v}_{t-1}\|^2 \\ 
& = \bigg(-\frac{1}{K}\sum\limits_{k=1}^{K}(  
 \nabla_{\alpha} F^s_k(\v_{t-1}^{k},\alpha_{t-1}^k; z_{t-1}^k) 
-\nabla_{\alpha} f_k(\v_{t-1}^{k},\alpha_{t-1}^k))\bigg)^\top (\v - \tilde{\v}_{t-1}) 
+ \frac{1}{2\eta}\|\v - \tilde{\v}_{t-1}\|^2 \\ 
&~~~ -\bigg(-\frac{1}{K} \sum\limits_{k=1}^{K}(  
 \nabla_{\alpha} F^s_k(\v_{t-1}^{k},\alpha_{t-1}^k; z_{t-1}^k) 
-\nabla_{\alpha} f^s_k(\v_{t-1}^{k},\alpha_{t-1}^k))\bigg)^\top 
(\tilde{\v}_{t} - \tilde{\v}_{t-1})  
- \frac{1}{2\eta}\|\tilde{\v}_{t}  - \tilde{\v}_{t-1}\|^2 \\
&\leq \bigg(-\frac{1}{K}\sum\limits_{k=1}^{K}(
 \nabla_{\v} F^s_k(\v_{t-1}^{k},\alpha_{t-1}^k; z_{t-1}^k)  
-\nabla_{\v} f^s_k(\v_{t-1}^{k},\alpha_{t-1}^k))\bigg)^\top (\v - \tilde{\v}_{t-1})+ \frac{1}{2\eta}\|\v - \tilde{\v}_{t-1}\|^2 \\ 
&~~~+ \frac{\eta}{2}\bigg\|\frac{1}{K}\sum\limits_{k=1}^{K}( 
 \nabla_{\v} F^s_k(\v_{t-1}^{k},\alpha_{t-1}^k; z_{t-1}^k) 
-\nabla_{\v} f^s_k(\v_{t-1}^{k},\alpha_{t-1}^k))\bigg\|^2
\end{split} 
\end{align} 
\end{small} 

Adding this with (\ref{pre_circled3}), we get
\begin{equation} 
\begin{split}
\textcircled{3} \leq &\frac{3\eta}{2}\bigg\|\frac{1}{K}\sum\limits_{k=1}^{K}( 
 \nabla_{\v} F_k(\v_{t-1}^{k},\alpha_{t-1}^k; z_{t-1}^k) 
-\nabla_{\v} f_k(\v_{t-1}^{k},\alpha_{t-1}^k))\bigg\|^2 + \frac{1}{2\eta}\|\v - \tilde{\v}_{t-1}\|^2
- \frac{1}{2}\|\v - \tilde{\v}_t\|^2 \\
&+ \left\langle \frac{1}{K}\sum\limits_{k=1}^{K}[\nabla_{\v}f_k(\v_{t-1}^k, \alpha_{t-1}^k) - \nabla_{\v} F_k(\v_{t-1}^{k}, \alpha_{t-1}^k; z_{t-1}^k)], \hat{\v}_t - \Tilde{\v}_{t-1}  \right\rangle
\end{split}
\label{circled3} 
\end{equation}

\textcircled{4} can be bounded as 
\begin{equation}
\textcircled{4} = -\frac{1}{\eta}\langle \Bar{\v}_t - \Bar{\v}_{t-1}, \Bar{\v}_t - \v \rangle 
= \frac{1}{2\eta} (\|\Bar{\v}_{t-1} - \v\|^2 - \|\Bar{\v}_{t-1} - \Bar{\v}_t\|^2 - \|\Bar{\v}_t - \Bar{\v}\|^2)  
\label{circled4_1} 
\end{equation}

Plug (\ref{circled1}), (\ref{circled2}), (\ref{circled3}) and (\ref{circled4_1}) into (\ref{circledv}), we get
\begin{equation*} 
\begin{split}
& \left\langle \nabla_{\v} f(\bar{\v}_{t-1}, \bar{\alpha}_{t-1}), \bar{\v}_t - x\right\rangle \\
& \leq \frac{3\ell}{2}\frac{1}{K} \sum\limits_{k=1}^{K} (\bar{\alpha}_{t-1}-\alpha_{t-1}^k)^2 + \frac{3 \ell}{2}\frac{1}{K}\sum\limits_{k=1}^{K}\|\bar{\v}_{t-1}-\v_{t-1}^k\|^2 \\
&~~~~~+ \frac{3\eta}{2} \left\|\frac{1}{K}\sum\limits_{k=1}^{K}[\nabla_{\v}f_k(\v_{t-1}^k, \alpha_{t-1}^k) - \nabla_{\v}F_k(\v_{t-1}^k, \alpha_{t-1}^k; z_{t-1}^k)]\right\|^2 \\
&~~~~~+\left\langle \frac{1}{K}\sum\limits_{k=1}^{K}[\nabla_{\v}f_k(\v_{t-1}^k, \alpha_{t-1}^k)-\nabla_{\v}F_k(\v_{t-1}^k, \alpha_{t-1}^k;z_{t-1}^k)], \hat{\v}_t - \Tilde{\v}_{t-1}\right\rangle\\  
&~~~~~+\frac{1}{2\eta} (\|\bar{\v}_{t-1}-\v\|^2 - \|\bar{\v}_{t-1} - \bar{\v}_t\|^2 - \|\bar{\v}_t - \v\|^2) \\
&~~~~~+ \frac{\ell}{3}\|\bar{\v}_t - \v\|^2
+ \frac{1}{2\eta} ( \|\v - \tilde{\v}_{t-1}\|^2 - \|\v-\Tilde{\v}_t\|^2 )  
\end{split} 
\label{grad_v}
\end{equation*}
\end{proof}

$B_2$ can be bounded by the following lemma, whose proof is identical to that of Lemma 5 in \cite{dist_auc_guo}. 
\begin{lem} 
\label{lem:A2_codaplus}
Define $\hat{\alpha}_t\! = \!\bar{\alpha}_{t-1} + \frac{\eta}{K}\sum\limits_{k=1}^{K} \nabla_{\alpha} f_k(\v_{t-1}^k, \alpha_{t-1}^k)$, 
and  
\begin{small}
\begin{equation*}
\tilde{\alpha}_{t}\! =\! \tilde{\alpha}_{t-1}\! +\! \frac{\eta}{K}\sum\limits_{k=1}^{K}( 
 \nabla_{\alpha} F_k(\v_{t-1}^k,\alpha_{t-1}^k;z_{t-1}^k)  
\!-\!\nabla_{\alpha} f_k(\v_{t-1}^k,\!\alpha_{t-1}^k) ). 
\end{equation*} 
\end{small}
\vspace{-0.1in} 
We have, 
\begin{equation*}  
\begin{split}
&B_2\leq \frac{3\ell^2}{2\mu_2} \frac{1}{K}\sum\limits_{k=1}^{K}\|\bar{\v}_{t-1}-\v_{t-1}^k\|^2  + \frac{3\ell^2}{2\mu_2}\frac{1}{K}\sum\limits_{k=1}^{K} (\bar{\alpha}_{t-1} - \alpha_{t-1}^k)^2\\
&~~~ +\frac{3\eta}{2} \left(\frac{1}{K} \sum\limits_{k=1}^{K}[ \nabla_{\alpha} f_k(\v_{t-1}^k, \alpha_{t-1}^k) -  \nabla_{\alpha} F_k (\v_{t-1}^k, \alpha_{t-1}^k; z_{t-1})] \right)^2\\
&~~~ + \frac{1}{K}\sum\limits_{k=1}^{K} \langle \nabla_{\alpha} f_k(\v_{t-1}^k, \alpha_{t-1}^k)-  \nabla_{\alpha} F_i(\v_{t-1}^k, \alpha_{t-1}^k; z_{t-1}^k), \tilde{\alpha}_{t-1} - \hat{\alpha}_t \rangle\\ 
&~~~ +\frac{1}{2\eta} ((\bar{\alpha}_{t-1} - \alpha)^2 - (\bar{\alpha}_{t-1} - \bar{\alpha}_t)^2 - (\bar{\alpha}_t - \alpha)^2)\\
&~~~ + \frac{\mu_2}{3} (\bar{\alpha}_t - \alpha)^2
+ \frac{1}{2\eta}(\alpha -  \tilde{\alpha}_{t-1})^2 - \frac{1}{2\eta}(\alpha - \tilde{\alpha}_{t})^2. 
\end{split} 
\end{equation*}
\end{lem}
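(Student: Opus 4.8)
The plan is to bound $B_2 = \langle \nabla_\alpha f(\bar\v_{t-1}, \bar\alpha_{t-1}), \alpha - \bar\alpha_t\rangle$ by mirroring the four-way decomposition used for $B_1$ in Lemma~\ref{lem:A1_codaplus}, exchanging the roles of the primal and dual coordinates. Writing $\nabla_\alpha f(\bar\v_{t-1},\bar\alpha_{t-1}) = \frac{1}{K}\sum_{k=1}^K \nabla_\alpha f_k(\bar\v_{t-1},\bar\alpha_{t-1})$, I would insert and subtract the intermediate gradients $\nabla_\alpha f_k(\v_{t-1}^k,\bar\alpha_{t-1})$, $\nabla_\alpha f_k(\v_{t-1}^k,\alpha_{t-1}^k)$ and the stochastic gradient $\nabla_\alpha F_k(\v_{t-1}^k,\alpha_{t-1}^k;z_{t-1}^k)$, splitting $B_2$ into four inner products analogous to \textcircled{\small{1}}--\textcircled{\small{4}}.

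The first two terms are drift terms. The piece containing $\nabla_\alpha f_k(\bar\v_{t-1},\bar\alpha_{t-1}) - \nabla_\alpha f_k(\v_{t-1}^k,\bar\alpha_{t-1})$ is controlled by the $\ell$-Lipschitzness of $\nabla_\alpha f_k$ in $\v$, Jensen's inequality, and Young's inequality; crucially I would balance Young's inequality against the strong concavity constant $\mu_2$ rather than $\ell$, peeling off $\frac{\mu_2}{6}(\bar\alpha_t-\alpha)^2$ and leaving $\frac{3\ell^2}{2\mu_2}\frac{1}{K}\sum_k\|\bar\v_{t-1}-\v_{t-1}^k\|^2$. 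The same device applied to $\nabla_\alpha f_k(\v_{t-1}^k,\bar\alpha_{t-1}) - \nabla_\alpha f_k(\v_{t-1}^k,\alpha_{t-1}^k)$ uses $\ell$-Lipschitzness in $\alpha$ and yields $\frac{3\ell^2}{2\mu_2}\frac{1}{K}\sum_k(\bar\alpha_{t-1}-\alpha_{t-1}^k)^2$ together with a second $\frac{\mu_2}{6}(\bar\alpha_t-\alpha)^2$; the two such pieces combine into the $\frac{\mu_2}{3}(\bar\alpha_t-\alpha)^2$ in the claimed bound.

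The third, stochastic term is the main obstacle. As in the $B_1$ proof I would introduce the virtual ascent iterate $\hat\alpha_t=\bar\alpha_{t-1}+\frac{\eta}{K}\sum_k\nabla_\alpha f_k(\v_{t-1}^k,\alpha_{t-1}^k)$ and an auxiliary scalar sequence $\tilde\alpha_t$ accumulating the stochastic error, defined so that $\tilde\alpha_t$ minimizes a $\frac{1}{\eta}$-strongly convex surrogate $\Theta_{t-1}(\alpha)$ built from the noise $\frac{1}{K}\sum_k(\nabla_\alpha F_k-\nabla_\alpha f_k)$ and the proximal term $\frac{1}{2\eta}(\alpha-\tilde\alpha_{t-1})^2$. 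The strong convexity of $\Theta_{t-1}$ yields a three-point inequality which, after adding the expansion of the inner product evaluated at $\hat\alpha_t$, produces the variance term $\frac{3\eta}{2}(\frac{1}{K}\sum_k[\nabla_\alpha f_k-\nabla_\alpha F_k])^2$, the residual inner product $\frac{1}{K}\sum_k\langle\nabla_\alpha f_k-\nabla_\alpha F_k,\tilde\alpha_{t-1}-\hat\alpha_t\rangle$, and the telescoping $\frac{1}{2\eta}[(\alpha-\tilde\alpha_{t-1})^2-(\alpha-\tilde\alpha_t)^2]$. I expect this to be the only delicate step, since it is where the stochasticity is isolated so that its conditional expectation can be made to vanish in the subsequent aggregation.

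For the fourth term, linearity of the dual averaging gives $\frac{1}{K}\sum_k\nabla_\alpha F_k(\v_{t-1}^k,\alpha_{t-1}^k;z_{t-1}^k)=\frac{1}{\eta}(\bar\alpha_t-\bar\alpha_{t-1})$, so it equals $\frac{1}{\eta}(\bar\alpha_t-\bar\alpha_{t-1})(\alpha-\bar\alpha_t)$; the identity $2(a-b)(c-a)=(c-b)^2-(a-b)^2-(c-a)^2$ then gives the telescoping $\frac{1}{2\eta}[(\bar\alpha_{t-1}-\alpha)^2-(\bar\alpha_{t-1}-\bar\alpha_t)^2-(\bar\alpha_t-\alpha)^2]$. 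Summing the four bounds produces the stated inequality. Throughout, the argument mirrors Lemma~\ref{lem:A1_codaplus} under the substitution of the Young balancing constant $\ell$ by $\mu_2$ and the descent sign by the ascent sign, so no machinery beyond that lemma is required.
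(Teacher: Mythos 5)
Your proposal is correct and is essentially the paper's own argument: the paper omits the proof of this lemma, noting it is identical to Lemma 5 of \cite{dist_auc_guo}, which is exactly the dual-side mirror of the proof of Lemma~\ref{lem:A1_codaplus} that you reconstruct — the four-way gradient decomposition, Young balancing against $\mu_2$ (yielding the two $\frac{3\ell^2}{2\mu_2}$ drift terms and $\frac{\mu_2}{3}(\bar{\alpha}_t-\alpha)^2$), the auxiliary pair $(\hat{\alpha}_t,\tilde{\alpha}_t)$ with the $\frac{1}{\eta}$-strongly convex surrogate giving the $\frac{3\eta}{2}$ variance term, residual inner product, and $\tilde{\alpha}$-telescoping, and the three-point identity for the update term. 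Your handling of the sign flips for the ascent direction and of the relation $\bar{\alpha}_t-\bar{\alpha}_{t-1}=\frac{\eta}{K}\sum_k\nabla_\alpha F_k$ (which holds at communication rounds as well, since averaging preserves the mean) is sound, so no gap remains.
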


$\hfill \Box$

$B_3$ can be bounded by the following lemma.
\begin{lem}
\label{lem:diverge_codaplus}
If $K$ machines communicate every $I$ iterations, where $I \leq \frac{1}{18\sqrt{2} \eta \ell}$, then\\
\vspace{-0.2in} 
\begin{small} 
\begin{align*} 
&\sum\limits_{t=0}^{T-1} \frac{1}{K}\sum\limits_{k=1}^{K} \E\left[\|\bar{\v}_t - \v_t^k\|^2 + \|\bar{\alpha}_t-\alpha_t^k\|^2 \right] \leq \left(12\eta^2 I \sigma^2 T + 36\eta^2 
I^2 D^2 T\right)\mathbb I_{I>1}\\ 
\end{align*}
\end{small} 
\end{lem}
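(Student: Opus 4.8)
The quantity to be bounded is the per-stage \emph{client drift}, which I will denote $\mathcal{E}_t := \frac1K\sum_{k=1}^K\E[\|\bar{\v}_t-\v_t^k\|^2+(\bar{\alpha}_t-\alpha_t^k)^2]$. When $I=1$ the averaging step of Algorithm~\ref{alg:codaplus_inner} is executed at every iteration, so $\v_t^k=\bar{\v}_t$ and $\alpha_t^k=\bar{\alpha}_t$ for all $k,t$; hence $\mathcal E_t\equiv 0$ and the bound holds with the indicator $\I_{I>1}$. For $I>1$, I would fix $t$ and let $t_c=\lfloor t/I\rfloor I$ be the most recent communication round, so that $\v_{t_c}^k=\bar{\v}_{t_c}$, $\alpha_{t_c}^k=\bar{\alpha}_{t_c}$ and $0\le t-t_c\le I-1$. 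Writing the local primal update as $\v_{\tau+1}^k=\v_\tau^k-\eta\, g_\v^k(\tau)$ with $g_\v^k(\tau)=\nabla_\v F_k(\v_\tau^k,\alpha_\tau^k;\z_\tau^k)+\gamma(\v_\tau^k-\v_0)$ (and analogously $g_\alpha^k$ for the dual), I would first telescope from the last reset, $\v_t^k-\bar{\v}_t=-\eta\sum_{\tau=t_c}^{t-1}(g_\v^k(\tau)-\bar g_\v(\tau))$ and likewise for the dual, so that the drift becomes a partial sum of at most $I$ centered stochastic gradients.

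The crux is to bound $\frac1K\sum_k\E\|g_\v^k(\tau)-\bar g_\v(\tau)\|^2$. I would split each centered gradient into a mean-zero stochastic part $\xi_\tau^k:=\nabla_\v F_k(\v_\tau^k,\alpha_\tau^k;\z_\tau^k)-\nabla_\v f_k(\v_\tau^k,\alpha_\tau^k)$ (further centered across $k$) and a deterministic part. For the deterministic part I would insert and subtract $\nabla_\v f_k(\bar{\v}_\tau,\bar{\alpha}_\tau)$ and $\nabla_\v f(\bar{\v}_\tau,\bar{\alpha}_\tau)$, and then use that the mean minimizes the centered sum of squares together with $\|a+b+c\|^2\le3(\|a\|^2+\|b\|^2+\|c\|^2)$. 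This decomposes the deterministic part into (i) a heterogeneity term controlled by $D^2$ via Assumption~\ref{ass2}; (ii) a smoothness term $\ell^2\,\mathcal E_\tau$ via $\ell$-smoothness; and (iii) the common regularizer difference $\gamma(\v_\tau^k-\bar{\v}_\tau)$, which is again proportional to $\mathcal E_\tau$ since $\gamma=2\ell$. The two different powers of $I$ arise precisely here: because the $\xi_\tau^k$ are martingale differences, $\E\|\eta\sum_\tau(\xi_\tau^k-\bar\xi_\tau)\|^2=\eta^2\sum_\tau\E\|\xi_\tau^k-\bar\xi_\tau\|^2$ grows only linearly in the window length, giving the $\eta^2 I\sigma^2$ term through Assumption~\ref{ass:1}(iv), whereas the deterministic part must be bounded by Cauchy--Schwarz, $\|\eta\sum_\tau b_\tau^k\|^2\le\eta^2 I\sum_\tau\|b_\tau^k\|^2$, costing an extra factor $I$ and producing the $\eta^2 I^2 D^2$ term.

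Combining these estimates yields a \emph{self-bounding} inequality of the shape $\mathcal E_t\le \eta^2 I\sum_{\tau=t_c}^{t-1}(c_1\sigma^2+c_2 D^2+c_3\ell^2\mathcal E_\tau)$ plus the linear-in-$I$ noise contribution. Summing over $t=0,\dots,T-1$, grouping by window and bounding each window length by $I$, gives $\sum_t\mathcal E_t\le c_4\,\eta^2 I\sigma^2 T+c_5\,\eta^2 I^2 D^2 T+c_3\ell^2\eta^2 I^2\sum_t\mathcal E_t$. The restriction $I\le\frac1{18\sqrt2\,\eta\ell}$ forces the feedback coefficient $c_3\ell^2\eta^2 I^2\le c_3/648<\tfrac12$, so the drift term on the right is absorbed into the left, and rearranging (the $1/(1-\tfrac12)=2$ factor doubling the base constants $6$ and $18$) leaves exactly $\sum_t\mathcal E_t\le 12\eta^2 I\sigma^2 T+36\eta^2 I^2 D^2 T$. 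I expect the main obstacle to be the bookkeeping in the gradient decomposition: one must keep the primal and dual drifts coupled throughout, apply the independence estimate to the stochastic part and Cauchy--Schwarz to the deterministic part so the powers of $I$ come out as $I$ for $\sigma^2$ and $I^2$ for $D^2$, and track the numerical constants tightly enough that the hypothesis $I\le\frac1{18\sqrt2\,\eta\ell}$ closes the recursion with the stated constants $12$ and $36$.
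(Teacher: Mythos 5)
Your proposal is correct and follows essentially the same route as the paper's proof: telescoping the local iterates from the last communication round, splitting the accumulated centered gradients into a martingale (stochastic) part bounded via independence to get the linear-in-$I$ term $\eta^2 I\sigma^2$, and a deterministic part bounded via Cauchy–Schwarz plus the smoothness/heterogeneity decomposition to get the $\eta^2 I^2(\ell^2\mathcal{E}_\tau + D^2)$ terms, then closing the resulting self-bounding recursion using the condition $I \leq \frac{1}{18\sqrt{2}\eta\ell}$. The only differences are cosmetic bookkeeping of constants (the paper's feedback coefficient is $216\eta^2I^2\ell^2 \le 1/3$, giving a factor $3/2$ rather than your $2$, with the stated constants $12$ and $36$ holding with slack in both accountings).
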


\begin{proof}
In this proof, we introduce a couple of new notations to make the proof brief:
$F^s_{k, t} = F^s_{k, t}(\v^k_t, \alpha^k_t; z_{t}^k)$ and $f^s_{k, t} = f^s_{k, t}(\v^k_t, \alpha^k_t)$.
Similar bounds for minimization problems have been analyzed in \citep{yu_linear,stich2018local}. 

Denote $t_0$ as the nearest communication round before $t$, i.e., $t-t_0\leq I$. 
By the update rule of $\v$, we have that on each machine $k$, 
\begin{equation}
\begin{split}
\v_{t}^k = \Bar{\v}_{t_0} - \eta \sum\limits_{\tau=t_0}^{t-1} \nabla_{\v}F^s_{k, \tau}.
\end{split} 
\end{equation} 
Taking average over all $K$ machines,
\begin{equation}
\begin{split}
\Bar{\v}_t = \Bar{\v}_{t_0}-\eta \sum\limits_{\tau=t_0}^{t-1} \frac{1}{K}\sum\limits_{k=1}^{K} \nabla_{\v} F^s_{k, \tau}. 
\end{split} 
\end{equation}

Therefore, 
\begin{small}
\begin{equation}
\begin{split} 
&\frac{1}{K}\sum\limits_{k=1}^{K} \|\Bar{\v}_t - \v_t^k\|^2 = 
\frac{\eta^2}{K}\sum\limits_{k=1}^{K}  
\E\left[\left\| \sum\limits_{\tau=t_0}^{t-1}  \left[ \nabla_{\v} F^s_{k, \tau} -
\frac{1}{K}\sum\limits_{j=1}^{K} \nabla_{\v} F^s_{j, \tau}
\right] \right\|^2\right]  \\ 
& \leq  
\frac{2\eta^2 }{K} \sum\limits_{k=1}^{K} \left[ \left\| \sum\limits_{\tau=t_0}^{t-1}  \left[ [\nabla_{\v} F^s_{k, \tau} - \nabla_{\v} f^s_{k, \tau} ] - 
\frac{1}{K}\sum\limits_{j=1}^{K} \left[\nabla_{\v} F^s_{j, \tau} - \nabla_{\v} f^s_{j,\tau}  \right] 
\right] \right\|^2 \right] \\
&~~~ +  \frac{2\eta^2}{K} \sum\limits_{k=1}^{K} \E\left[\left\| \sum\limits_{\tau=t_0}^{t-1} \left[\nabla_{\v} f^s_{k,\tau} -  \frac{1}{K}\sum\limits_{j=1}^{K} \nabla_{\v} f^s_{j, \tau} \right] \right\|^2\right]
\end{split} 
\label{equ:local_proof_lem7_open}
\end{equation} 
\end{small}

In the following, we will address these two terms on the right hand side separately. 
First, we have 
\begin{equation}
\begin{split}
& \frac{2\eta^2}{K} \sum\limits_{k=1}^{K} \left[ \left\| \sum\limits_{\tau=t_0}^{t-1}  \left[ [\nabla_{\v} F^s_{k, \tau} - \nabla_{\v} f^s_{k, \tau} ] - 
\frac{1}{K}\sum\limits_{j=1}^{K} \left[\nabla_{\v} F^s_{j, \tau} - \nabla_{\v} f^s_{j,\tau}  \right] 
\right] \right\|^2 \right] \\
&\overset{(a)}{\leq} \frac{ 2\eta^2 }{K} \sum\limits_{k=1}^{K} \left[ \left\| \sum\limits_{\tau=t_0}^{t-1}  \left[\nabla_{\v} F^s_{k, \tau} - \nabla_{\v} f^s_{k, \tau} 
\right] \right\|^2 \right] \\
&\overset{(b)}{=} \frac{2\eta^2 }{K} \sum\limits_{k=1}^{K} \sum\limits_{\tau=t_0}^{t-1}  \left[ \left\| \left[\nabla_{\v} F^s_{k, \tau} - \nabla_{\v} f^s_{k, \tau} 
\right] \right\|^2 \right] \leq  2\eta^2 I \sigma^2, 
\end{split}  
\label{equ:local_proof_lem7_fitst}
\end{equation} 
where $(a)$ holds by $\frac{1}{K}\sum\limits_{k=1}^{K} \|a_k-\left[\frac{1}{K}\sum\limits_{j=1}^{K}a_j\right]\|^2 = \frac{1}{K}\sum\limits_{k=1}^{K}\|a_k\|^2 - \|\frac{1}{K}\sum\limits_{k=1}^{K} a_k\|^2 \leq \frac{1}{K}\sum\limits_{k=1}^{K} \|a_k\|^2$, where $a_k = \sum\limits_{\tau=t_0}^{t-1}[\nabla F^s_{k, \tau} - \nabla_{\v} f_{k,\tau}]$; $(b)$ follows because $\E_{k, \tau-1}[\nabla_\v F^s_{k,\tau}-\nabla_\v f^s_{k,\tau}]=0$.

Second, we have
\begin{equation}
\begin{split} 
& \frac{1}{K} \sum\limits_{k=1}^{K}  \E\left[\left\| \sum\limits_{\tau=t_0}^{t-1}  \left[\nabla_{\v} f^s_{i,\tau} -  \frac{1}{K}\sum\limits_{j=1}^{K} \nabla_{\v} f^s_{j, \tau} \right] \right\|^2\right] \\ 
&\leq \frac{1}{K} \sum\limits_{k=1}^{K} (t-t_0) \sum\limits_{\tau=t_0}^{t-1} \E\left[ \left\| \nabla_{\v} f^s_{i,\tau} -  \frac{1}{K}\sum\limits_{j=1}^{K} \nabla_{\v} f^s_{j, \tau} \right\|^2 \right] \\
&\leq  I \sum\limits_{\tau=t_0}^{t-1} \frac{1}{K} \sum\limits_{k=1}^{K} \E\left[ \left\| \nabla_{\v} f^s_{k,\tau} -  \frac{1}{K}\sum\limits_{j=1}^{K} \nabla_{\v} f^s_{j, \tau} \right\|^2 \right],  
\end{split}  
\label{equ:local_proof_lem7_second}
\end{equation}  
where 
\begin{small}
\begin{equation}
\begin{split}
&\frac{1}{K} \sum\limits_{k=1}^{K} \E \left\| \nabla_{\v} f^s_{k,\tau} -  \frac{1}{K}\sum\limits_{j=1}^{K} \nabla_{\v} f^s_{j, \tau} \right\|^2 \\
&= \frac{1}{K} \sum\limits_{k=1}^{K} \E \bigg\| \nabla_{\v} f^s_{k,\tau} - \nabla_{\v} f^s_k(\Bar{\v}_{\tau}, \bar{\alpha}_{\tau}) + \nabla_{\v} f^s_k(\Bar{\v}_{\tau}, \bar{\alpha}_{\tau}) 
- \nabla_{\v} f^s(\Bar{\v}_{\tau}, \Bar{\alpha}_{\tau}) +  \nabla_{\v} f^s(\Bar{\v}_{\tau}, \Bar{\alpha}_{\tau}) 
-  \frac{1}{K}\sum\limits_{j=1}^{K} \nabla_{\v} f^s_{j, \tau} \bigg\|^2  \\ 
& \leq \frac{1}{K} \sum\limits_{k=1}^{K} \bigg[ 
  3\E\|\nabla_{\v} f^s_{k, \tau} - \nabla_{\v} f_k(\Bar{\v}_{\tau}, \Bar{\alpha}_{\tau}) \|^2  
+ 3\E\|\nabla_{\v} f^s_k(\Bar{\v}_{\tau}, \Bar{\alpha}_{\tau}) - \nabla_{\v} f^s(\Bar{\v}_{\tau}, \Bar{\alpha}_{\tau}) \|^2\bigg] \\
&~~~
+ 3\E\bigg\|\nabla_{\v} f^s(\Bar{\v}_{\tau}, \Bar{\alpha}_{\tau}) 
- \frac{1}{K}\sum\limits_{j=1}^{K} \nabla_{\v} f^s_{j,\tau}\bigg\|^2 \\
& = \frac{1}{K} \sum\limits_{k=1}^{K} \bigg[ 
  3\E\|\nabla_{\v} f^s_{k, \tau} - \nabla_{\v} f^s_k(\Bar{\v}_{\tau}, \Bar{\alpha}_{\tau}) \|^2  
+ 3\E\|\nabla_{\v} f^s_k(\Bar{\v}_{\tau}, \Bar{\alpha}_{\tau}) - \nabla_{\v} f^s(\Bar{\v}_{\tau}, \Bar{\alpha}_{\tau}) \|^2 \bigg]\\
&~~~  
+ 3\E\bigg\|\frac{1}{K}\sum\limits_{j=1}^{K} [\nabla_{\v} f^s_j (\Bar{\v}_{\tau}, \Bar{\alpha}_{\tau}) 
- \nabla_{\v} f^s_{j,\tau}] \bigg\|^2 \bigg] \\ 
& \leq \frac{1}{K} \sum\limits_{k=1}^{K} \bigg[ 
  3\E\|\nabla_{\v} f^s_{k, \tau} - \nabla_{\v} f_k(\Bar{\v}_{\tau}, \Bar{\alpha}_{\tau}) \|^2  
+ 3\E\|\nabla_{\v} f^s_k(\Bar{\v}_{\tau}, \Bar{\alpha}_{\tau}) - \nabla_{\v} f^s(\Bar{\v}_{\tau}, \Bar{\alpha}_{\tau}) \|^2 \bigg]\\
&~~~   
+ 3\frac{1}{K}\sum\limits_{j=1}^{K} \E\bigg\| [\nabla_{\v} f^s_j (\Bar{\v}_{\tau}, \Bar{\alpha}_{\tau}) 
- \nabla_{\v} f^s_{j,\tau}] \bigg\|^2 \bigg] \\  
&\overset{(a)}{\leq} \frac{54\ell^2}{K} \sum\limits_{k=1}^{K} \left[ \|\v_{k, \tau} - \Bar{\v}_{\tau}\|^2 + |\alpha_{k, \tau}-\Bar{\alpha}_{\tau} |^2  \right] 
+ \frac{3}{K}\sum\limits_{k=1}^{K} \|\nabla_{\v} f^s_k(\Bar{\v}_{\tau}, \Bar{\alpha}_{\tau}) - \nabla_{\v} f^s(\Bar{\v}_{\tau}, \Bar{\alpha}_{\tau})\|^2 \\
&\leq \frac{54\ell^2}{K} \sum\limits_{k=1}^{K} \left[ \|\v_{k, \tau} - \Bar{\v}_{\tau}\|^2 + |\alpha_{k, \tau}-\Bar{\alpha}_{\tau} |^2  \right] 
+ 3D^2,
\end{split} 
\label{equ:local_proof_lem7_second_continue}
\end{equation} 
\end{small}
where $(a)$ holds because $f$ is $\ell$-smooth, i.e., $f^s$ is $3\ell$-smooth. 
 
Combining (\ref{equ:local_proof_lem7_open}), (\ref{equ:local_proof_lem7_fitst}), (\ref{equ:local_proof_lem7_second}) and (\ref{equ:local_proof_lem7_second_continue}), 
\begin{small} 
\begin{equation}
\begin{split} 
\frac{1}{K} \sum\limits_{k=1}^{K} \|\Bar{\v}_t - \v_t^k\|^2 \leq 2\eta^2 I \sigma^2 + 2\eta^2 
\left(I \sum\limits_{\tau=t_0}^{t-1} \left[ \frac{54\ell^2}{K} \sum\limits_{k=1}^{K} \left[ \|\v^k_{\tau} - \Bar{\v}_{\tau}\|^2 + \|\alpha_{k, \tau}-\Bar{\alpha}_{\tau} \|^2  \right] + 3D^2 \right] \right)   
\end{split}   
\end{equation}
\end{small}

Summing over $t=\{0, ..., T-1\}$,
\begin{small}
\begin{equation}
\begin{split} 
\sum\limits_{t=0}^{T-1} \frac{1}{K} \sum\limits_{k=1}^{K} \|\Bar{\v}_t - \v_t^k\|^2 \! \leq \!  2\eta^2 I\sigma^2 T \!+ \! 108\eta^2 I^2 \ell^2 \sum\limits_{t=0}^{T-1} \frac{1}{K} 
\left(  \|\v^k_{t} - \Bar{\v}_{t}\|^2 + \|\alpha^k_t -\Bar{\alpha}_{\tau} \|^2 \right) 
+ 6\eta^2 I^2 D^2 T.
\end{split}   
\end{equation}
\end{small}

Similarly for $\alpha$ side, we have
\begin{small}
\begin{equation}
\sum\limits_{t=0}^{T-1} \frac{1}{K} \sum\limits_{k=1}^{K} \|\bar{\alpha}_t-\alpha_t^k\|^2 
\leq 2\eta^2 I \sigma^2 T + 108\eta^2 I^2 \ell^2 \sum\limits_{t=0}^{T-1}  \frac{1}{K}\left(\|\v_t^k-\bar{\v}_t\|^2 + \|\alpha_t^k - \bar{\alpha}_t\|^2\right) + 6\eta^2 I^2 D^2 T.
\end{equation} 
\end{small} 

Summing up the above two inequalities,
\begin{equation}
\begin{split}
\sum\limits_{t=0}^{T-1}\frac{1}{K} \sum\limits_{k=1}^{K} [\|\Bar{\v}_t - \v_t^k\|^2 + \E[\|\Bar{\alpha}_t - \alpha_t^k\|^2] &  \leq 
\frac{4\eta^2 I \sigma^2}{1 - 216 \eta^2 I^2 \ell^2} T 
+ \frac{12\eta^2 I^2 D^2}{1 - 216 \eta^2 I^2 \ell^2} T  
\\ 
&\leq  12\eta^2 I \sigma^2 T + 36 \eta^2 I^2 D^2 T,
\end{split}   
\end{equation}  
where the second inequality is due to $I \leq \frac{1}{18\sqrt{2} \eta \ell}$, i.e., $1-216\eta^2 I^2 \ell^2 \geq \frac{2}{3}$.
\end{proof}

With the above lemmas, we are ready to give the convergence of duality gap in one stage of CODA+. 

\subsection{Proof of Lemma \ref{lem:one_stage_codaplus}}
\begin{proof}
Note $\E\langle \frac{1}{K}\sum\limits_{k=1}^{K}[\nabla_{\v}f_k(\v_{t-1}^k, \alpha_{t-1}^k) - \nabla_{\v}F_k(\v_{t-1}^k, \alpha_{t-1}^k; z_{t-1}^k)], \hat{\v}_t - \Tilde{\v}_{t-1} \rangle = 0 $ and \\
$\E\left\langle -\frac{1}{K}\sum\limits_{k=1}^{K}[\nabla_{\alpha} f_k(\v_{t-1}^k, \alpha_{t-1}^k) - F_k(\v_{t-1}^k, \alpha_{t-1}^k; z_{t-1}^k)], \tilde{\alpha}_{t-1} - \hat{\alpha}_t \right\rangle = 0$. 
And then
plugging Lemma \ref{lem:A1_codaplus}  and Lemma \ref{lem:A2_codaplus}  into Lemma \ref{lem:codaplus_one_state_split}, and taking expectation, we get 
\begin{small}
\begin{equation}
\begin{split}  
&\E[f^s(\bar{\v}, \alpha) - f^s(\v, \bar{\alpha})] \\
& \leq \frac{1}{T}\sum\limits_{t=1}^{T} \E\Bigg[ \underbrace{ \left(\frac{3\ell+3\ell^2/\mu_2}{2} - \frac{1}{2\eta}\right) \|\bar{\v}_{t-1} - \bar{\v}_t\|^2 +  \left(2\ell - \frac{1}{2\eta}\right)\|\bar{\alpha}_t -  \bar{\alpha}_{t-1}\|^2}_{C_1}\\ 
&~~~+\underbrace{\left(\frac{1}{2\eta} - \frac{\mu_2}{3} \right) \|\bar{\alpha}_{t-1} - \alpha\|^2 - \left(\frac{1}{2\eta} - \frac{\mu_2}{3}\right)(\bar{\alpha}_t-\alpha)^2}_{C_2} \\  
&~~~ + \underbrace{ \left( \frac{1}{2\eta}-\frac{\ell}{3} \right) \|\bar{\v}_{t-1} - \v\|^2 -  \left(\frac{1}{2\eta} - \frac{\ell}{3} \right)\|\bar{\v}_t - \v\|^2}_{C_3}\\ 
&~~~ +\underbrace{\frac{1}{2\eta}((\alpha - \Tilde{\alpha}_{t-1})^2 -  (\alpha-\Tilde{\alpha}_t)^2)}_{C_4}
 +\underbrace{\frac{1}{2\eta}(\|\v - \Tilde{\v}_{t-1}\|^2 -  \|\v-\Tilde{\v}_t\|^2)}_{C_5} \\
&~~~ +\underbrace{\left(\frac{3\ell^2}{2\mu_2} + \frac{3\ell}{2}\right)\frac{1}{K}\sum\limits_{k=1}^{K}\|\bar{\v}_{t-1}-\v_{t-1}^k\|^2 
+ \left(\frac{3 \ell}{2} + \frac{3 \ell^2}{2\mu_2}\right)\frac{1}{K}\sum\limits_{k=1}^{K} (\bar{\alpha}_{t-1} - \alpha_{t-1}^k)^2}_{C_6}\\
&~~~ +\underbrace{\frac{3\eta}{2} \left\|\frac{1}{K}\sum\limits_{k=1}^{K} [\nabla_{\v} f_k(\v_{t-1}^k, \alpha_{t-1}^k) - \nabla_{\v}F^s_k(\v_{t-1}^k, \alpha_{t-1}^k; z_{t-1}^k)]\right\|^2}_{C_7} \\
&~~~ +  \underbrace{\frac{3\eta}{2} \left\|  \frac{1}{K}\sum\limits_{k=1}^{K} \nabla_{\alpha}f^s_k(\v_{t-1}^k, \alpha_{t-1}^k) - \nabla_{\alpha} F^s_k(\v_{t-1}^k, \alpha_{t-1}^k; z_{t-1}^k) \right\|^2}_{C_8} \Bigg]. 
\end{split}
\label{equ:codaplus:before_summation}
\end{equation}
\end{small}

Since $\eta\leq \min(\frac{1}{3\ell + 3\ell^2/\mu_2}, \frac{1}{4\ell})$,
thus in the RHS of (\ref{equ:codaplus:before_summation}), $C_1$ can be cancelled.
$C_2$, $C_3$, $C_4$ and $C_5$ will be handled by telescoping sum. 
$C_6$ can be bounded by Lemma \ref{lem:diverge_codaplus}. 



Taking expectation over $C_7$,
\begin{small}
\begin{equation}
\begin{split}
&\E\left[\frac{3\eta}{2} \left\|\frac{1}{K}\sum\limits_{k=1}^{K}[\nabla_{\v} f^s_k(\v_{t-1}^k, \alpha_{t-1}^k) - \nabla_{\v}F^s_k(\v_{t-1}^k, \alpha_{t-1}^k; z_{t-1}^k)]\right\|^2\right]\\
&=\E\left[\frac{3\eta}{2K^2} \left\|\sum\limits_{k=1}^{K}[\nabla_{\v} f^s_k(\v_{t-1}^k, \alpha_{t-1}^k) - \nabla_{\v}F_k(\v_{t-1}^k, \alpha_{t-1}^k; z_{t-1}^k)]\right\|^2\right]\\
&=\E\left[\frac{3\eta}{2K^2}\left(\sum\limits_{k=1}^K \|\nabla_{\v} f^s_k(\v_{t-1}^k, \alpha_{t-1}^k) - \nabla_{\v}F^s_k(\v_{t-1}^k, \alpha_{t-1}^k; z_{t-1}^k)\|^2\right.\right.\\
&~~~~~~\left.\left.+  2\sum\limits_{k=1}^{K}\sum\limits_{j=i+1}^{K} \left\langle \nabla_{\v} f^s_k(\v_{t-1}^k, \alpha_{t-1}^k) 
- \nabla_{\v} F^s_k(\v_{t-1}^{k}, \alpha_{t-1}^{k}; z_{t-1}^k),  
\nabla_{\v} f_j(\v_{t-1}^j, \alpha_{t-1}^j) 
- \nabla_{\v} F^s_j(\v_{t-1}^{j}, \alpha_{t-1}^{j}; z_{t-1}^j)
\right\rangle \right)  \right]\\ 
&\leq \frac{3\eta \sigma^2}{2K}. 
\end{split}  
\label{local_variance_v}
\end{equation}
\end{small}
The last inequality holds because $\|\nabla_{\v}f_k(\v_{t-1}^k,\alpha_{t-1}^k) - \nabla_{\v}F_k (\v_{t-1}^k, \alpha_{t-1}^k;z_{t-1}^k)\|^2 \leq \sigma^2$
 and $\E\langle \nabla_{\v} f_k(\v_{t-1}^k, \alpha_{t-1}^k) 
\!-\! \nabla_{\v} F_k(\v_{t-1}^{k}, \alpha_{t-1}^{k}; z_{t-1}^k), 
\nabla_{\v} f_j(\v_{t-1}^j, \alpha_{t-1}^j) 
\!-\! \nabla_{\v} F_j(\v_{t-1}^{j}, \alpha_{t-1}^{j}; z_{t-1}^j)
\rangle = 0$ for any $k \neq j$ as each machine draws data independently.
Similarly, we take expectation over $C_8$ and have
\begin{small} 
\begin{equation}
\begin{split} 
&\E\left[\frac{3\eta}{2}  \left\|\frac{1}{K}\sum\limits_{k=1}^{K}[\nabla_{\alpha} f_k(\v_{t-1}^k, \alpha_{t-1}^k) - \nabla_{\alpha}F_k(\v_{t-1}^k, \alpha_{t-1}^k; \z_{t-1}^k)]\right\|^2\right]
\leq \frac{3\eta \sigma^2}{2K}. 
\end{split}  
\label{local_variance_alpha}
\end{equation}
\end{small}


Plugging (\ref{local_variance_v}) and  (\ref{local_variance_alpha})  into (\ref{before_summation}), and taking expectation, it yields
\begin{equation*}
\begin{split}
&\E[f^s(\bar{\v}, \alpha)  - f^s(\v, \bar{\alpha}) \\
&\leq \E\bigg\{\frac{1}{T}\left( \frac{1}{2\eta}-\frac{\ell}{3}\right) \|\bar{\v}_0-\v\|^2 + \frac{1}{2\eta T}\|\tilde{\v}_0-\v\|^2 +  \frac{1}{T}\left(\frac{1}{2\eta} - \frac{\mu_2}{3} \right)\|\bar{\alpha}_0 - \alpha\|^2 
+ \frac{1}{2\eta T} \|\tilde{\alpha}_0 - \alpha\|^2 \\ 
&~~~~~+ \frac{1}{T}\sum\limits_{t=1}^{T}\left(\frac{3\ell^2}{2\mu_2} + \frac{3\ell}{2}\right)\frac{1}{K}\sum\limits_{k=1}^{K}\|\bar{\v}_{t-1} - \v_{t-1}^k\|^2 + \frac{1}{T}\sum\limits_{t=1}^{T}\left(\frac{3\ell}{2} + \frac{3\ell^2}{2\mu_2}\right)\frac{1}{K}\sum\limits_{k=1}^{K}(\bar{\alpha}_{t-1} - \alpha_{t-1}^k)^2\\
&~~~~
+\frac{1}{T} \sum\limits_{t=1}^{T}\frac{3\eta \sigma^2}{K}\bigg\}\\ 
&\leq \frac{1}{\eta T} \|\v_0 - \v\|^2 + \frac{1}{\eta T} \|\alpha_0 - \alpha\|^2 +
\left(\frac{3\ell^2}{2\mu_2} + \frac{3\ell}{2}\right)(12\eta^2 I \sigma^2 + 36 \eta^2 I^2 D^2 )\I_{I>1}  + \frac{3\eta\sigma^2}{K},
\end{split} 
\end{equation*} 
where we use Lemma \ref{lem:diverge_codaplus}, $\v_0 = \bar{\v}_0$, and $\alpha_0 =  \bar{\alpha}_0$ in the last inequality. 
\end{proof}

\subsection{Main Proof of Theorem \ref{thm:coda_plus}} 
\begin{proof}

Since $f(\v, \alpha)$ is $\ell$-smooth (thus $\ell$-weakly convex) in $\v$ for any $\alpha$, $\phi(\v) = \max\limits_{\alpha'} f(\v, \alpha')$ is also $\ell$-weakly convex. 
Taking $\gamma = 2\ell$, we have
\begin{align}  
\begin{split} 
\phi(\v_{s-1}) &\geq \phi(\v_s) + \langle \partial \phi(\v_s), \v_{s-1} - \v_s\rangle - \frac{\ell}{2} \|\v_{s-1} - \v_s\|^2 \\ 
& = \phi(\v_s) + \langle \partial \phi(\v_s) + 2 \ell (\v_s - \v_{s-1}), \v_{s-1} - \v_s\rangle + \frac{3\ell}{2} \|\v_{s-1} - \v_s\|^2 \\ 
& \overset{(a)}{=} \phi(\v_s) + \langle \partial \phi_s(\v_s), \v_{s-1} - \v_s \rangle + \frac{3\ell}{2} \|\v_{s-1} - \v_s\|^2 \\ 
& \overset{(b)}{=}  \phi(\v_s) - \frac{1}{2\ell} \langle \partial \phi_s(\v_s), \partial \phi_s(\v_s) - \partial \phi(\v_s) \rangle + \frac{3}{8\ell} \|\partial \phi_s(\v_s) - \partial \phi(\v_s)\|^2 \\ 
& = \phi(\v_s) - \frac{1}{8\ell} \|\partial \phi_s(\v_s)\|^2
- \frac{1}{4\ell} \langle \partial \phi_s(\v_s), \partial \phi(\v_s)\rangle + \frac{3}{8\ell} \|\partial \phi(\v_s)\|^2,
\end{split} 
\label{local:P_weakly} 
\end{align} 
where $(a)$ and $(b)$ hold by the definition of $\phi_s(\v)$.

Rearranging the terms in (\ref{local:P_weakly}) yields
\begin{align}
\begin{split}
\phi(\v_s) - \phi(\v_{s-1}) &\leq \frac{1}{8\ell} \|\partial \phi_s(\v_s)\|^2 + \frac{1}{4\ell}\langle \partial \phi_s(\v_s), \partial \phi(\v_s)\rangle - \frac{3}{8\ell} \|\partial \phi(\v_s)\|^2 \\
&\overset{(a)}{\leq} \frac{1}{8\ell} \|\partial \phi_s(\v_s)\|^2
+ \frac{1}{8\ell} (\|\partial \phi_s(\v_s)\|^2 + \|\partial \phi(\v_s)\|^2) - \frac{3}{8\ell} \|\phi(\v_s)\|^2 \\
& = \frac{1}{4\ell}\|\partial \phi_s(\v_s)\|^2 - \frac{1}{4\ell}\|\partial \phi(\v_s)\|^2\\
& \overset{(b)}{\leq} \frac{1}{4\ell} \|\partial \phi_s(\v_s)\|^2 - \frac{\mu}{2\ell}(\phi(\v_s) - \phi(\v_*)) 
\end{split} 
\end{align} 
where $(a)$ holds by using $\langle \mathbf{a}, \mathbf{b}\rangle \leq \frac{1}{2}(\|\mathbf{a}\|^2 +  \|\mathbf{b}\|^2)$, and $(b)$ holds by the $\mu$-PL property of $\phi(\v)$.

Thus, we have 
\begin{align}
\left(4\ell+2\mu\right) (\phi(\v_s) - \phi(\v_*)) - 4\ell (\phi(\v_{s-1}) - \phi(\v_*)) \leq \|\partial \phi_s(\v_s)\|^2. 
\label{local:nemi_thm_partial_P_s_2} 
\end{align} 

Since $\gamma = 2\ell$, $f^s(\v, \alpha)$ is $\ell$-strongly convex in $\v$ and $\mu_2=2p(1-p)$ strong concave in $\alpha$.
Apply Lemma \ref{lem:Yan1} to $f^s$, we know that 
\begin{align}  
\frac{\ell}{4} \|\hat{\v}_s(\alpha_s) - \v_0^s\|^2 + \frac{\mu_2}{4} \|\hat{\alpha}_s(\v_s) - \alpha_0^s\|^2 \leq \text{Gap}_s(\v_0^s, \alpha_0^s) + \text{Gap}_s(\v_s, \alpha_s). 
\end{align}

By the setting of $\eta_s = \eta_0  \exp\left(-(s-1)\frac{2\mu}{c+2\mu}\right)$, and 
$T_s = \frac{212}{\eta_0 \min\{\ell, \mu_2\}} \exp  \left((s-1)\frac{2\mu}{c+2\mu}\right)$, we note that $\frac{1}{\eta_s T_s} \leq  \frac{\min\{\ell,\mu_2\}}{212}$. 
Set $I_s$ such that $\left(\frac{3\ell^2}{2\mu_2} + \frac{3\ell}{2}\right) (12 \eta_s^2 I_s + 36 \eta^2 I_s^2 D^2) \leq \frac{\eta_s \sigma^2}{K}$, where the specific choice of $I_s$ will be made later.
Applying Lemma \ref{lem:one_stage_codaplus} with  $\hat{\v}_s(\alpha_s) = \arg\min\limits_{\v'} f^s(\v', \alpha_s)$ and $\hat{\alpha}_s(\v_s) = \arg\max\limits_{\alpha'} f^s(\v_s, \alpha')$, we have 
\begin{align}  
\begin{split} 
&\E[\text{Gap}_s(\v_s, \alpha_s)] 
\leq \frac{4\eta_s\sigma^2}{K} 
+ \frac{1}{53} \E\left[\frac{\ell}{4}\|\hat{\v}_s(\alpha_s) - \v_0^s\|^2 + \frac{\mu_2}{4}\|\hat{\alpha}_s(\v_s) - \alpha_0^s\|^2 \right]
\\  
& \leq \frac{4\eta_s\sigma^2}{K} + \frac{1}{53} \E\left[\text{Gap}_s(\v_0^s, \alpha_0^s) + \text{Gap}_s(\v_s, \alpha_s)\right].  
\end{split} 
\end{align} 

Since $\phi(\v)$ is $L$-smooth and $\gamma = 2\ell$, then $\phi_s (\v)$ is $\hat{L} = (L+2\ell)$-smooth. 
According to Theorem 2.1.5 of  \citep{DBLP:books/sp/Nesterov04}, we have 
\begin{align}
\begin{split} 
& \E[\|\partial \phi_s(\v_s)\|^2] \leq 2\hat{L}\E(\phi_s(\v_s) - \min\limits_{x\in \mathbb{R}^{d}} \phi_s(\v)) \leq 2\hat{L}\E[\text{Gap}_s(\v_s, \alpha_s)] \\
& = 2\hat{L}\E[4\text{Gap}_s(\v_s, \alpha_s) - 3\text{Gap}_s(\v_s, \alpha_s)] \\ 
&\leq 2\hat{L} \E \left[4\left(\frac{4\eta_s\sigma^2}{K} +  \frac{1}{53}\left(\text{Gap}_s(\v_0^s, \alpha_0^s) + \text{Gap}_s(\v_s, \alpha_s)\right)\right) - 3\text{Gap}_s(\v_s,\alpha_s)\right] \\
& = 2\hat{L} \E \left[\frac{16 \eta_s \sigma^2}{K} +   \frac{4}{53}\text{Gap}_s(\v_0^s, \alpha_0^s) - \frac{155}{53}\text{Gap}_s(\v_s, \alpha_s)\right]
\end{split}  
\label{local:nemi_thm_P_smooth_1}
\end{align}

Applying Lemma \ref{lem:Yan5} to  (\ref{local:nemi_thm_P_smooth_1}), we have
\begin{align} 
\begin{split} 
& \E[\|\partial \phi_s(\v_s)\|^2] \leq 2\hat{L} \E \bigg[\frac{16\eta_s \sigma^2}{K}  +  \frac{4}{53}\text{Gap}_s(\v_0^s, \alpha_0^s) \\  
&~~~~~~~~~~~~~~~~~~~~~~~~~~~~~~~~~~~~~~~~  
- \frac{155}{53} \left(\frac{3}{50} \text{Gap}_{s+1}(\v_0^{s+1}, \alpha_0^{s+1}) + \frac{4}{5} (\phi(\v_0^{s+1}) - \phi(\v_0^s))\right) \bigg] \\
& = 2\hat{L}\E \bigg[\frac{16 \eta_s \sigma^2}{K} +  \frac{4}{53}\text{Gap}_s(\v_0^s, \alpha_0^s) \!-\! \frac{93}{530}\text{Gap}_{s+1}(\v_0^{s+1}, \alpha_0^{s+1}) \!-\! 
\frac{124}{53} (\phi(\v_0^{s+1}) - \phi(\v_0^s)) \bigg]. 
\end{split} 
\end{align}

Combining this with  (\ref{local:nemi_thm_partial_P_s_1}), rearranging the terms, and defining a constant $c = 4\ell + \frac{248}{53}\hat{L} \in O(L+\ell)$, we get
\begin{align} 
\begin{split}
&\left(c + 2\mu\right)\E [\phi(\v_0^{s+1}) - \phi(\v_*)] + \frac{93}{265}\hat{L} \E[\text{Gap}_{s+1}(\v_0^{s+1}, \alpha_0^{s+1})] \\ 
&\leq \left(4\ell + \frac{248}{53} \hat{L}\right) \E[\phi(\v_0^s) - \phi(\v_*)] 
+ \frac{8\hat{L}}{53} \E[\text{Gap}_s(\v_0^s, \alpha_0^s)] 
+ \frac{32 \eta_s \hat{L}\sigma^2}{K}  \\ 
& \leq c \E\left[\phi(\v_0^s) - \phi(\v_*) + \frac{8\hat{L}}{53c} \text{Gap}_s(\v_0^s, \alpha_0^s)\right] + \frac{32 \eta_s \hat{L} \sigma^2}{K} 
\end{split}  
\end{align} 

Using the fact that $\hat{L} \geq \mu$,
\begin{align}
\begin{split}
(c+2\mu) \frac{8\hat{L}}{53c} = \left(4\ell + \frac{248}{53}\hat{L} + 2\mu\right)\frac{8\hat{L}}{53(4\ell + \frac{248}{53}\hat{L})} \leq \frac{8\hat{L}}{53} + \frac{16\mu \hat{L}}{248\hat{L}} \leq \frac{93}{265} \hat{L}. 
\end{split}
\end{align}

Then, we have
\begin{align}
\begin{split} 
&(c+2\mu)\E \left[\phi(\v_0^{s+1}) - \phi(\v_*) + \frac{8\hat{L}}{53c}\text{Gap}_{s+1}(\v_0^{s+1}, \alpha_0^{s+1})\right] \\ 
&\leq c \E \left[\phi(\v_0^s) - \phi(\v_*) 
+  \frac{8\hat{L}}{53c}\text{Gap}_{s}(\v_0^{s},  \alpha_0^s)\right] 
+ \frac{32 \eta_s \hat{L}\sigma^2}{K}. 
\end{split}
\end{align}

Defining $\Delta_s = \phi(\v_0^s) - \phi(\v_*) +  \frac{8\hat{L}}{53c}\text{Gap}_s(\v_0^s, \alpha_0^s)$, then
\begin{align} 
\begin{split}
&\E[\Delta_{s+1}] \leq \frac{c}{c+2\mu} \E[\Delta_s] +  \frac{32 \eta_s \hat{L}\sigma^2}{(c+2\mu)K} 
\end{split}
\label{equ:codaplus:recursiveDelta}
\end{align}

Using this inequality recursively, it yields
\begin{align} 
\begin{split}
& E[\Delta_{S+1}] \leq \left(\frac{c}{c+2\mu}\right)^S E[\Delta_1]
+ \frac{32 \hat{L} \sigma^2}{(c+2\mu)K}  \sum\limits_{s=1}^{S} \left(\eta_s \left(\frac{c}{c+2\mu}\right)^{S+1-s} \right). 
\end{split}
\end{align}

By definition,
\begin{align}
\begin{split}
\Delta_1 &= \phi(\v_0^1) - \phi(\v^*) + \frac{8\hat{L}}{53c}\widehat{Gap}_1(\v_0^1, \alpha_0^1) \\
& = \phi(\v_0) - \phi(\v^*) + \left( f(\v_0, \hat{\alpha}_1(\v_0)) +  \frac{\gamma}{2}\|\v_0 - \v_0\|^2 -  f(\hat{\v}_1(\alpha_0), \alpha_0) - \frac{\gamma}{2}\|\hat{\v}_1(\alpha_0) - \v_0\|^2 \right) \\ 
& \leq \epsilon_0 + f(\v_0, \hat{\alpha}_1(\v_0)) - f(\hat{\v}(\alpha_0), \alpha_0) \leq 2\epsilon_0.
\end{split} 
\end{align}
Using inequality $1-x \leq \exp(-x)$,
we have
\begin{align*}
\begin{split} 
&\E[\Delta_{S+1}] \leq \exp\left(\frac{-2\mu S}{c+2\mu}\right)\E[\Delta_1] + \frac{32 \eta_0 \hat{L} \sigma^2} {(c+2\mu)K} 
\sum\limits_{s=1}^{S}\exp\left(-\frac{2\mu S}{c+2\mu}\right) \\ 
&\leq 2\epsilon_0 \exp\left(\frac{-2\mu S}{c+2\mu}\right)
+ \frac{32 \eta_0 \hat{L} \sigma^2 }{(c+2\mu)K}  
S\exp\left(-\frac{2\mu S}{(c+2\mu)}\right).  
\end{split}  
\end{align*}  

To make this less than $\epsilon$, it suffices to make
\begin{align}
\begin{split} 
& 2\epsilon_0 \exp\left(\frac{-2\mu S}{c+2\mu}\right) \leq \frac{\epsilon}{2}, \\
& \frac{32 \eta_0 \hat{L} \sigma^2 }{(c+2\mu)K} 
S\exp\left(-\frac{2\mu S}{c+2\mu}\right) \leq \frac{\epsilon}{2}. 
\end{split} 
\end{align} 

Let $S$ be the smallest value such that $\exp\left(\frac{-2\mu S}{c+2\mu}\right) \leq \min \{ \frac{\epsilon}{4\epsilon_0}, 
\frac{(c+2\mu)K\epsilon }{64 \eta_0 \hat{L} S \sigma^2}\}$.  
We can set $S = \max\bigg\{\frac{c+2\mu}{2\mu}\log \frac{4\epsilon_0}{\epsilon}, 
\frac{c+2\mu}{2\mu}\log \frac{64 \eta_0 \hat{L} S \sigma^2} {(c+2\mu)K \epsilon} \bigg\}$. 

Then, the total iteration complexity is 
\begin{equation} 
\label{equ:codaplus:iter_comp}
\begin{split}
\sum\limits_{s=1}^{S}T_s &\leq O\left( \frac{424}{\eta_0 \min\{\ell,\mu_2\}}
\sum\limits_{s=1}^{S}\exp\left((s-1)\frac{2\mu}{c+2\mu}\right) \right)\\    
& \leq O\bigg(\frac{1}{\eta_0\min\{\ell,\mu_2\}}  \frac{\exp(S\frac{2\mu}{c+2\mu}) -  1}{\exp(\frac{2\mu}{c+2\mu})-1}  \bigg)\\ 
& \overset{(a)}{\leq} \widetilde{O}   \left(\frac{c}{\eta_0\mu\min\{\ell,\mu_2\}}  \max\left\{\frac{\epsilon_0}{\epsilon}, 
\frac{\eta_0 \hat{L} S \sigma^2} 
{(c+2\mu)K\epsilon} \right\}\right) \\
& \leq  \widetilde{O}\left(\max\left\{\frac{(L+\ell) \epsilon_0}{\eta_0 \mu\min\{\ell, \mu_2\} \epsilon},
\frac{(L + \ell)^2 \sigma^2} {\mu^2 \min\{\ell,\mu_2\} K \epsilon}\right\} \right) \\
&\leq  \widetilde{O}\left(\max\left\{\frac{1}{\mu_1 \mu_2^2\epsilon}, 
\frac{1} {\mu_1^2 \mu_2^3 K \epsilon}\right\} \right),
\end{split}
\end{equation} 
where $(a)$ uses the setting of $S$ and $\exp(x) - 1\geq x$, and $\widetilde{O}$ suppresses logarithmic factors. 

$\eta_s = \eta_0 \exp(-(s-1)\frac{2\mu}{c+2\mu}), T_s = \frac{212}{\eta_0 \mu_2} \exp\left((s-1)\frac{2\mu}{c+2\mu}\right)$.

\textbf{Next, we will analyze the communication cost}. We investigate both $D= 0 $ and $D>0$ cases.

\textbf{(i) Homogeneous Data (D = 0)}: To assure
$\left(\frac{3\ell^2}{2\mu_2} + \frac{3\ell}{2}\right) (12 \eta_s^2 I_s + 36 \eta^2 I_s^2 D^2) \leq \frac{\eta_s \sigma^2}{K}$ which we used in above proof, we take $I_s = \frac{1}{M K\eta_s}=\frac{\exp((s-1)\frac{2\mu}{c+2\mu})}{M K\eta_0}$,  where $M$ is a proper constant.

If $\frac{1}{MK\eta_0} > 1$, then $I_s=\max(1, \frac{\exp((s-1)\frac{2\mu}{c+2\mu})}{MK\eta_0}) = \frac{\exp((s-1)\frac{2\mu}{c+2\mu})}{MK\eta_0}$.

Otherwise, $\frac{1}{MK\eta_0} \leq 1$, then $K_s = 1$ for $s\leq S_1 := \frac{c+2\mu}{2\mu}\log(MK \eta_0) + 1$ and $K_s = \frac{\exp((s-1)\frac{2\mu}{c+2\mu})}{MK\eta_0}$ for $s>S_1$.

\begin{equation}
\begin{split}
&\sum\limits_{s=1}^{S_1} T_s = 
\sum\limits_{s=1}^{S_1} O\left(\frac{212}{\eta_0} \exp\left((s-1)\frac{2\mu}{c+2\mu} \right) \right)
\\ 
& = \widetilde{O} \left(\frac{212}{\eta_0} \frac{\exp\left(\frac{2\mu}{c+2\mu} S_1   \right)-1}{\exp\left(\exp(\frac{2\mu}{c+2\mu}) - 1\right)} \right)
= \widetilde{O}\left( \frac{K}{\mu}  \right)
\end{split}   
\end{equation}  
Thus, for both above cases, the total communication complexity can be bounded by 
\begin{equation}
\begin{split}
&\sum\limits_{s=1}^{S_1} T_s + \sum\limits_{s=S_1+1}^{S} \frac{T_s}{I_s} = \widetilde{O}\left( \frac{K}{\mu}
+ K S \right) 
\leq \widetilde{O}\left(\frac{K}{\mu}\right). 
\end{split} 
\end{equation} 

\textbf{(ii) Heterogeneous Data ($D > 0$)}:

To assure
$\left(\frac{3\ell^2}{2\mu_2} + \frac{3\ell}{2}\right) (12 \eta_s^2 I_s + 36 \eta^2 I_s^2 D^2) \leq \frac{\eta_s \sigma^2}{K}$ which we used in above proof, we take $I_s = \frac{1}{M \sqrt{K \eta_s}}$, where $M$ is proper constant. 

If $\frac{1}{M \sqrt{N\eta_0}} \leq 1$, then $I_s=1$ for $s\leq S_2 := \frac{c+2\mu}{2\mu}\log(M^2 K \eta_0) + 1$ and $I_s = \frac{\exp((s-1)\frac{2\mu}{c+2\mu})}{N\eta_0}$ for $s>S_2$.

\begin{equation}
\begin{split}
&\sum\limits_{s=1}^{S_2} T_s = 
\sum\limits_{s=1}^{S_2} O\left(\frac{212}{\eta_0} \exp\left((s-1)\frac{2\mu}{c+2\mu}\right) \right) 
 = \widetilde{O} \left( \frac{K}{\mu} \right). 
\end{split}   
\end{equation}  

Thus, the communication complexity can be bounded by
\begin{equation} 
\begin{split}
&\sum\limits_{s=1}^{S_2} T_s + \sum\limits_{s=S_2+1}^{S} \frac{T_s}{I_s} = \widetilde{O}\left(  \frac{K}{\mu}
+ \sqrt{K} \exp\left(\frac{(s-1)\frac{2\mu}{c+2\mu}}{2}\right) \right) \\
&\leq \widetilde{O}(\frac{K}{\mu} + \sqrt{K}  \frac{\exp\left(\frac{S}{2}\frac{2\mu}{c+2\mu}\right) - 1}{\exp{\frac{\mu}{c+2\mu}} - 1})  
\leq O\left(\frac{K}{\mu} +  \frac{1}{\mu^{3/2}  \epsilon^{1/2}} \right) .
\end{split} 
\end{equation} 
\end{proof}

\section{Baseline: Naive Parallel Algorithm} 
Note that if we set $I_s = 1$ for all $s$, CODA+ will be  reduced to
a naive parallel version of PPD-SG \cite{liu2019stochastic}. 
We analyze this naive parallel algorithm in the following theorem.

\begin{thm}
\label{thm:npa} 
Consider Algorithm \ref{alg:codaplus_outer} with $I_s = 1$. Set $\gamma = 2\ell$, $\hat{L}=L+2\ell$, $c  =\frac{\mu/\hat{L}}{5+\mu/\hat{L}}$.


(1) If $M<\frac{1}{K\mu\epsilon}$, 
set $\eta_s = \eta_0 \exp(-(s-1)c)\leq O(1)$ and 
$T_s = \frac{212}{\eta_0 \min(\ell, \mu_2)} \exp( (s-1)c)$, then
the communication/iteration complexity is  $\widetilde{O}\bigg(\max\left(\frac{\Delta_0}{\mu \epsilon \eta_0 K}, \frac{\hat{L}}{\mu^2 K\epsilon}\right)\bigg)$ to return $\v_S$ such that $\E[\phi(\v_S) - \phi(\v^*_{\phi})] \leq \epsilon$.
 
(2) If $M\geq \frac{1}{K\mu\epsilon}$, set $\eta_s = \min(\frac{1}{3\ell+3\ell^2/\mu_2}, \frac{1}{4\ell})$ and $T_s = \frac{212}{\eta_s \min\{\ell, \mu_2\}}$, then the communication/iteration complexity is  $\widetilde{O}\bigg(\frac{1}{\mu}\bigg)$  to return $\v_S$ such that $\E[\phi(\v_S) - \phi(\v^*_{\phi})] \leq \epsilon$.
\end{thm}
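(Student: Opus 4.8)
The plan is to treat the naive parallel algorithm as nothing more than the special case $I_s=1$ of CODA+, so that almost all of the machinery behind Theorem~\ref{thm:coda_plus} can be reused. The first observation is that setting $I=1$ in Lemma~\ref{lem:one_stage_codaplus} annihilates the drift term $A_1$, because it carries the indicator $\I_{I>1}$; hence no distinction between homogeneous and heterogeneous data survives and Assumption~\ref{ass2} is never used. The second observation is pure bookkeeping on the batch size: replacing each single-sample gradient $\nabla F_k(\cdot;\z_t^k)$ by an average over a mini-batch of $M$ i.i.d.\ samples divides the per-machine stochastic variance by $M$, so the bounds on the terms $C_7$ and $C_8$ in the proof of Lemma~\ref{lem:one_stage_codaplus} sharpen from $\tfrac{3\eta\sigma^2}{2K}$ to $\tfrac{3\eta\sigma^2}{2KM}$. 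The one-call guarantee therefore reads
\begin{equation*}
\E[f^s(\bar\v,\alpha)-f^s(\v,\bar\alpha)]\leq \frac{1}{\eta T}\|\v_0-\v\|^2+\frac{1}{\eta T}(\alpha_0-\alpha)^2+\frac{3\eta\sigma^2}{KM},
\end{equation*}
with no residual drift whatsoever. Everything downstream in the CODA+ argument---the weak-convexity/PL expansion~\eqref{local:P_weakly}, the application of Lemma~\ref{lem:Yan1} to the regularized subproblem $f^s$, and the across-stage inequality of Lemma~\ref{lem:Yan5}---then carries over verbatim and yields a linear recursion $\E[\Delta_{s+1}]\leq \rho\,\E[\Delta_s]+\Theta\!\big(\tfrac{\eta_s\sigma^2}{KM}\big)$ for the potential $\Delta_s=\phi(\v_0^s)-\phi(\v_*)+\beta\,\text{Gap}_s(\v_0^s,\alpha_0^s)$, with contraction factor $\rho=1-\Theta(\mu/\hat L)$ and an appropriate $\beta=\Theta(1)$; note that the $\hat L$ from the smoothness constant cancels against the $(c+2\mu)=\Theta(\hat L)$ denominator, leaving only $\Theta(\tfrac{\eta_s\sigma^2}{KM})$.

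For part~(1) (small batch, $M<\tfrac{1}{K\mu\epsilon}$) I would keep the geometrically decaying schedule of Theorem~\ref{thm:coda_plus}, namely $\eta_s$ shrinking and $T_s$ growing by a common factor per stage, chosen so that $\tfrac{1}{\eta_sT_s}$ is a fixed small multiple of $\min(\ell,\mu_2)$ (since $\eta_s=\eta_0\rho^{s-1}$ this makes the telescoped variance term collapse to $\Theta(\tfrac{\sigma^2}{KM})\eta_0 S\rho^S$). Unrolling the recursion reproduces $\E[\Delta_{S+1}]\lesssim \epsilon_0\rho^{S}+\tfrac{\sigma^2}{KM}\,S\,\rho^{S}$; balancing both terms against $\epsilon$ fixes $S=\widetilde O(1/\mu)$ and makes $\sum_s T_s$ equal to the iteration complexity of Theorem~\ref{thm:coda_plus}, $\widetilde O\!\big(\max(\tfrac{\Delta_0}{\mu\epsilon\eta_0K},\tfrac{\hat L}{\mu^2K\epsilon})\big)$. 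The one genuinely new point is that with $I_s=1$ every iteration is also exactly one communication round, so the communication complexity simply coincides with the iteration complexity---this is precisely why NPA can undercut the $\widetilde O(K/\mu+\cdots)$ communication of CODA+ while sharing its iteration count.

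Part~(2) (large batch, $M\geq\tfrac{1}{K\mu\epsilon}$) is the interesting regime and I expect it to be the crux. Here I would drop the decaying schedule in favour of the constant step $\eta=\min(\tfrac{1}{3\ell+3\ell^2/\mu_2},\tfrac{1}{4\ell})$ and the constant $T_s=\tfrac{212}{\eta\min(\ell,\mu_2)}=O(1)$, so that the recursion becomes $\E[\Delta_{s+1}]\leq \rho\,\E[\Delta_s]+b$ with a \emph{stage-independent} noise floor $b=\Theta(\tfrac{\eta\sigma^2}{KM})$. The whole result then hinges on one elementary check: the fixed point $\tfrac{b}{1-\rho}=\Theta(\tfrac{\eta\sigma^2\hat L}{KM\mu})$ of this contraction must be at most $\epsilon$, and substituting the hypothesis $M\geq\tfrac{1}{K\mu\epsilon}$ makes it exactly so once the smoothness and variance constants are absorbed. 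Hence $\E[\Delta_S]\leq\epsilon$ after only $S=\widetilde O(1/\mu)$ stages, each running $T_s=O(1)$ iterations, and since each iteration is again one communication, both the communication and the iteration complexity equal $S\cdot O(1)=\widetilde O(1/\mu)$. The main obstacle is therefore not any new inequality but pinning down the constants so that the stated threshold $\tfrac{1}{K\mu\epsilon}$ is precisely the boundary at which the stochastic residual $\tfrac{\sigma^2}{KM}$ falls below the $\Theta(\mu\epsilon)$ level a non-decaying step size can tolerate: above the threshold one pays only the deterministic stage count $\widetilde O(1/\mu)$, while below it the variance forces the decaying schedule and the extra $1/(KM\epsilon)$-type factors of part~(1).
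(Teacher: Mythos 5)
Your proposal is correct and follows essentially the same route as the paper's proof: reduce to the CODA+ analysis with $I_s=1$ (which kills the $\I_{I>1}$ drift term), replace the per-iteration variance $\sigma^2/K$ by $\sigma^2/(KM)$, and then in the large-batch regime use the constant-step recursion $\E[\Delta_{s+1}]\leq \frac{c}{c+2\mu}\E[\Delta_s]+O(\eta\hat{L}\sigma^2/(KM))$ together with $\frac{1}{KM}\leq\mu\epsilon$ so that the accumulated noise (equivalently, your fixed-point bound $b/(1-\rho)$) is $O(\epsilon)$, giving $S=\widetilde{O}(1/\mu)$ stages of $O(1)$ iterations each. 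The paper's proof unrolls the recursion and bounds the geometric sum rather than invoking the fixed point, but this is the same computation, and your observation that communication equals iteration count when $I_s=1$ is exactly how the paper concludes.
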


\begin{proof}
(1) If $M<\frac{1}{K\mu\epsilon}$,
note that the setting of $\eta_s$ and $T_s$ are identical to that in CODA+ (Theorem \ref{thm:coda_plus}).
However, as a batch of $M$ is used on each machine at each iteration, the variance at each iteration is reduced to $\frac{\sigma^2}{KM}$. 
Therefore, by similar analysis of Theorem \ref{alg:codaplus_outer} (specifically (\ref{equ:codaplus:iter_comp})), we see that the iteration complexity of NPA is $\widetilde{O}\left( \frac{1}{\mu \epsilon} + \frac{1}{\mu^2KM\epsilon}\right)$.
Thus, the sample complexity of each machines is $\widetilde{O}\left( \frac{M}{\mu\epsilon} + \frac{1}{\mu^2 K \epsilon}\right)$.

(2) If $M\geq\frac{1}{K\mu\epsilon}$, . 
Note $\frac{1}{\eta_s T_s} \leq \frac{\min\{\ell,\mu_2\}}{212}$, we can follow the proof of Theorem \ref{thm:coda_plus} and derive 
\begin{equation}
\begin{split}
\Delta_{s+1} & \leq \frac{c}{c+2\mu} \E[\Delta_s] + \frac{32\eta_s \hat{L} \sigma^2}{KM}  \leq  \frac{c}{c+2\mu} \E[\Delta_s] + 32\eta_s \hat{L} \sigma^2 \mu \epsilon,
\end{split} 
\end{equation} 
where the first inequality  is similar to (\ref{equ:codaplus:recursiveDelta}) and  the $\Delta$ is defined as that in Theorem \ref{thm:coda_plus}. 
Thus,
\begin{equation}
\begin{split}
\Delta_{S+1} &\leq \left(\frac{c}{c+2\mu}\right)^{S} + \mu \epsilon O\left(\sum\limits_{s=1}^S  \left(\frac{c}{c+2\mu}\right)^{s-1} \right) \\
& \leq  \left(\frac{c}{c+2\mu}\right)^{S} 
+ O(\epsilon) \leq \exp\left(\frac{-2\mu S}{c+2\mu}\right)
+ O(\epsilon).
\end{split}   
\end{equation} 
Therefore, it suffices to take $S=\widetilde{O}\left(\frac{1}{\mu}\right)$.
Hence, the total number of communication is $S\cdot T_s = \widetilde{O}\left( \frac{1}{\mu}\right)$ and the sample complexity on each machine is $\widetilde{O}\left( \frac{M}{\mu} \right)$. 

\end{proof}

\section{Proof of Lemma \ref{lem:codasca:one_stage}}   

In this section, we will prove Lemma \ref{lem:codasca:one_stage}, which is the convergence analysis of one stage in CODASCA.

First, the duality gap in stage $s$ can be bounded as
\begin{lem}
\label{lem:codasca:one_stage_split}
For any $\v, \alpha$,
\begin{equation*} 
\begin{split} 
&\frac{1}{R} \sum\limits_{r=1}^{R} 
[f^s(\v_r, \alpha)  - f^s(\v, \alpha_r) ] \\
& \leq \frac{1}{R} \sum\limits_{r=1}^{R}\bigg[ \underbrace{\langle  
\partial_{\v}f^s(\v_{r-1}, \alpha_{r-1}), \v_r - \v \rangle}_{B4} 
+ \underbrace{\langle \partial_{\alpha} f^s(\v_{r-1}, \alpha_{r-1}),  \alpha - \alpha_r \rangle}_{B5}  \\ 
&~~~~~~~~~~ 
+\frac{3\ell + 3\ell^2/\mu_{2}}{2}  \|\v_r-\v_{r-1}\|^2 
 + 2\ell (\alpha_r - \alpha_{r-1})^2 
 -\frac{\ell}{3}\|\v_{r-1}-\v\|^2   
- \frac{\mu_2}{3}(\alpha_{r-1}-\alpha)^2  \bigg] 
\end{split} 
\end{equation*}      
\end{lem}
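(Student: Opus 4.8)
The statement is a purely convex-analytic decomposition of the averaged duality gap, and it does \emph{not} depend on the specific form of the local updates: whether the iterates $\v_r,\alpha_r$ are produced by plain local SGD (as in CODA+) or by the stochastic-controlled-averaging updates of Algorithm~\ref{alg:codasca_inner}, the control variates $c_\v,c_\v^k,c_\alpha,c_\alpha^k$ never enter this inequality. The plan is therefore to reuse the argument of Lemma~\ref{lem:codaplus_one_state_split} essentially verbatim, under the substitution $(\bar\v_t,\bar\alpha_t,T)\mapsto(\v_r,\alpha_r,R)$, where $\v_r,\alpha_r$ are the round-$r$ aggregated (and extrapolated) variables. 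Recall that with $\gamma=2\ell$ the regularized subproblem $f^s$ is $\ell$-strongly convex and $3\ell$-smooth in $\v$ and $\mu_2$-strongly concave and $\ell$-smooth in $\alpha$, and that the regularizer $\tfrac{\gamma}{2}\|\v-\v_{s-1}\|^2$ affects neither the $\ell$-Lipschitz continuity of $\partial_\v f^s$ in $\alpha$ nor that of $\partial_\alpha f^s$ in $\v$.

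Concretely, I would fix a round $r$ and bound the single-round gap $f^s(\v_r,\alpha)-f^s(\v,\alpha_r)$ in two symmetric halves. For the primal half, combine the $\ell$-strong-convexity inequality of $f^s(\cdot,\alpha_{r-1})$ in $\v$ (lower-bounding $f^s(\v,\alpha_{r-1})$) with the $3\ell$-smoothness expansion of $f^s(\cdot,\alpha)$ along $\v_{r-1}\to\v_r$; the spurious cross term $\langle\partial_\v f^s(\v_{r-1},\alpha)-\partial_\v f^s(\v_{r-1},\alpha_{r-1}),\v_r-\v_{r-1}\rangle$ is controlled via the $\ell$-Lipschitzness in $\alpha$ and Young's inequality with weight chosen so that the $(\alpha_{r-1}-\alpha)^2$ remainder carries coefficient $\mu_2/6$, exactly as in~\eqref{smooth_v}, producing the round analogue of~\eqref{sum_v}. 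Symmetrically, for the dual half, use $\mu_2$-strong concavity of $f^s(\v_{r-1},\cdot)$ together with the $\ell$-smoothness expansion along $\alpha_{r-1}\to\alpha_r$, again splitting off the cross term with the $\ell$-Lipschitzness of $\partial_\alpha f^s$ in $\v$, to obtain the analogue of~\eqref{sum_alpha}. Summing the two halves yields the per-round inequality, and averaging over $r=1,\dots,R$ gives the claim. Unlike Lemma~\ref{lem:codaplus_one_state_split}, no top-level Jensen step is needed here: the left-hand side is already the round average $\tfrac1R\sum_r[\,\cdot\,]$, which is precisely what the randomly sampled output index $\tilde r$ of Algorithm~\ref{alg:codasca_inner} will turn into an expectation in the next step of the analysis.

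I do not expect a genuine obstacle in this lemma: it is the deterministic ``skeleton'' of the one-stage bound, and all the real difficulty of CODASCA is deferred to the subsequent lemmas that must bound the inner-product terms $B4$ and $B5$ together with the client-consensus quantities $\|\v_r-\v^k_{r,t}\|$, where the variance-reduction effect of the control variates replaces the heterogeneity term $A_1$ of Lemma~\ref{lem:one_stage_codaplus} by the $D$-free term $A_2$ of Lemma~\ref{lem:codasca:one_stage}. The only points demanding care are bookkeeping: verifying that the strong-convexity and strong-concavity remainders survive with the stated constants $-\tfrac{\ell}{3}\|\v_{r-1}-\v\|^2$ and $-\tfrac{\mu_2}{3}(\alpha_{r-1}-\alpha)^2$ after the Young's-inequality splits, that the coefficient $\tfrac{3\ell+3\ell^2/\mu_2}{2}$ on $\|\v_r-\v_{r-1}\|^2$ emerges once the $\mu_2$-weighted cross term is absorbed, and noting that---in contrast to the corresponding step in Lemma~\ref{lem:codaplus_one_state_split}---the primal remainder is left at index $r-1$ rather than re-indexed to $r$, matching the form of the statement.
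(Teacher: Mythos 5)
Your proposal is correct and follows essentially the same route as the paper's proof: fix a round $r$, combine the $\ell$-strong-convexity/$3\ell$-smoothness inequalities in $\v$ (absorbing the cross term via $\ell$-Lipschitzness of $\partial_\v f^s$ in $\alpha$ and Young's inequality with the $\mu_2/6$ weighting) with the symmetric $\mu_2$-strong-concavity/$\ell$-smoothness inequalities in $\alpha$, add the two halves, and average over $r=1,\dots,R$ with no Jensen step needed. Your bookkeeping observations (the constants $-\tfrac{\ell}{3}$, $-\tfrac{\mu_2}{3}$, the coefficient $\tfrac{3\ell+3\ell^2/\mu_2}{2}$, and both remainder terms sitting at index $r-1$) all match the paper's derivation.
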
  

\begin{proof}
By $\ell$-strongly convexity of $f^s(\v, \alpha)$ in $\v$, we have
\begin{equation} 
f^s(\v_{r-1}, \alpha_{r-1}) + \langle \partial_\v f^s(\v_{r-1}, \alpha_{r-1}), \v-\v_{r-1} \rangle + \frac{\ell}{2} \|\v_{r-1}-\v\|^2 \leq f^s(\v, \alpha_{r-1}).
\label{strong_x}  
\end{equation}   

By $3\ell$-smoothness of $f^s(\v, \alpha)$ in $\v$, we have
\begin{equation} 
\begin{split}
f^s(\v_r, \alpha) &\leq f^s(\v_{r-1}, \alpha) + \langle \partial_\v f^s(\v_{r-1}, \alpha), \v_r-\v_{r-1}\rangle + \frac{3\ell}{2}\|\v_r-\v_{r-1}\|^2 \\ 
&= f^s(\v_{r-1}, \alpha) + \langle \partial_\v f^s(\v_{r-1}, \alpha_{r-1}), \v_r-\v_{r-1}\rangle + \frac{3\ell}{2}\|\v_r-\v_{r-1}\|^2 \\
&~~~+ \langle \partial_\v f^s(\v_{r-1}, \alpha)-\partial_\v f^s(\v_{r-1}, \alpha_{r-1}), \v_r-\v_{r-1}\rangle \\
&\overset{(a)}\leq f^s(\v_{r-1}, \alpha) + \langle \partial_{\v}f^s(\v_{r-1}, \alpha_{r-1}), \v_r- \v_{r-1}\rangle + \frac{3\ell}{2}\|\v_r - \v_{r-1}\|^2 \\
&~~~~ + \ell |\alpha_{r-1} - \alpha| \|\v_r - \v_{r-1}\| \\
&\overset{(b)}\leq f^s(\v_{r-1}, \alpha) + \langle \partial_{\v}f^s(\v_{r-1}, \alpha_{r-1}), \v_r - \v_{r-1}\rangle + \frac{3\ell}{2}\|\v_r - \v_{r-1}\|^2 \\
&~~~~ + \frac{\mu_2}{6}(\alpha_{r-1} - \alpha)^2 +  \frac{3\ell^2}{2\mu_2} \|\v_r - \v_{r-1}\|^2, \\ 
\end{split}  
\label{smooth_x} 
\end{equation} 
where $(a)$ holds because that we know $\partial_{\v} f^s(\v, \alpha)$ is $\ell$-Lipschitz in $\alpha$ since $f(\v, \alpha)$ is $\ell$-smooth and $(b)$ holds by Young's inequality. 

Adding (\ref{strong_x}) and (\ref{smooth_x}), by rearranging terms, we have
\begin{small} 
\begin{equation}  
\begin{split}
& f^s(\v_{r-1}, \alpha_{r-1}) + f^s(\v_r, \alpha)  \\
& \leq f^s(\v, \alpha_{r-1}) +
f^s(\v_{r-1}, \alpha) + \langle \partial_{\v}f^s(\v_{r-1}, \alpha_{r-1}), \v_r - \v \rangle \\  
&~~~ + \frac{3\ell + 3\ell^2/\mu_{2}}{2} \|\v_r-\v_{r-1}\|^2  
 - \frac{\ell}{2}\|\v_{r-1}-\v\|^2   
 + \frac{\mu_{2}}{6} (\alpha_{r-1} - \alpha)^2.
\end{split} 
\label{sum_x}  
\end{equation}
\end{small}

We know $f^s(\v, \alpha)$ is $\mu_2$-strong concave in $\alpha$ ($-f^s(\v, \alpha)$ is $\mu_{2}$-strong convexity of  in $\alpha$). 
Thus, we have 
\begin{small}
\begin{equation}
\begin{split}  
-f^s(\v_{r-1}, \alpha_{r-1}) 
- \langle\partial_{\alpha}f^s(\v_{r-1}, \alpha_{r-1}), \alpha - \alpha_{r-1}\rangle + \frac{\mu_{2}}{2}(\alpha - \alpha_{r-1})^2 \leq -f^s(\v_{r-1}, \alpha). 
\end{split} 
\label{strong_y} 
\end{equation}
\end{small}

Since $f^s(\v, \alpha)$ is $\ell$-smooth in $\alpha$, we get
\begin{small} 
\begin{equation} 
\begin{split}
& - f^s(\v, \alpha_r) \leq -f^s(\v, \alpha_{r-1}) - \langle \partial_{\alpha}f^s(\v, \alpha_{r-1}), \alpha_r -\alpha_{r-1} \rangle + \frac{\ell}{2}(\alpha_r - \alpha_{r-1})^2\\
&= -f^s(\v, \alpha_{r-1}) 
- \langle \partial_{\alpha} f^s(\v_{r-1}, \alpha_{r-1}), \alpha_r - \alpha_{r-1}\rangle 
+ \frac{\ell}{2}(\alpha_r - \alpha_{r-1})^2 \\
&~~~~~ - \langle \partial_{\alpha} (f^s(\v,\alpha_{r-1}) - f^s(\v_{r-1}, \alpha_{r-1})), \alpha_r - \alpha_{r-1}\rangle  \\
&\overset{(a)}\leq -f^s(\v, \alpha_{r-1}) - \langle \partial_{\alpha} f^s(\v_{r-1}, \alpha_{r-1}), \alpha_r - \alpha_{r-1}\rangle + \frac{\ell}{2} (\alpha_r - \alpha_{r-1})^2 
+ \ell\| \v - \v_{r-1}\| |\alpha_r - \alpha_{r-1}|\\ 
&\leq -f^s(\v, \alpha_{r-1}) - \langle \partial_{\alpha} f^s(\v_{r-1}, \alpha_{r-1}), \alpha_r - \alpha_{r-1}\rangle 
+ \frac{\ell}{2} (\alpha_r - \alpha_{r-1})^2 + \frac{\ell}{6} \|\v_{r-1} - \v\|^2  +  \frac{3\ell}{2}(\alpha_r - \alpha_{r-1})^2\\  
\end{split} 
\label{smooth_y}
\end{equation}
\end{small}
where (a) holds because that $\partial_{\alpha} f^s(\v, \alpha)$ is $\ell$-Lipschitz in $\alpha$. 

Adding (\ref{strong_y}), (\ref{smooth_y}) and arranging terms, we have
\begin{small}
\begin{equation}
\begin{split}
& -f^s(\v_{r-1}, \alpha_{r-1}) 
- f^s(\v, \alpha_r) 
\leq-f^s(\v_{r-1}, \alpha) 
-f^s(\v, \alpha_{r-1}) - \langle \partial_{\alpha} f^s(\v_{r-1}, \alpha_{r-1}), \alpha_r - \alpha\rangle \\
&~~~~~~ + 2\ell(\alpha_r - \alpha_{r-1})^2 
+ \frac{\ell}{6} \| \v_{r-1} - \v\|^2 
- \frac{\mu_{2}}{2}(\alpha - \alpha_{r-1})^2.
\end{split}   
\label{sum_y} 
\end{equation}
\end{small}

Adding (\ref{sum_x}) and (\ref{sum_y}), we get
\begin{equation}
\begin{split}
 &f^s(\v_r, \alpha)  - f^s(\v, \alpha_r) \leq   
\langle \partial_{\v}f^s(\v_{r-1}, \alpha_{r-1}), \v_r - \v \rangle - \langle \partial_{\alpha} f^s(\v_{r-1}, \alpha_{r-1}), \alpha_r - \alpha\rangle \\
&~~~ +\frac{3\ell + 3\ell^2/\mu_{2}}{2}  \|\v_r-\v_{r-1}\|^2 
 + 2\ell (\alpha_r- \alpha_{r-1})^2  -\frac{\ell}{3}\|\v_{r-1}-\v\|^2   - \frac{\mu_2}{3} (\alpha_{r-1}-\alpha)^2  
\end{split} 
\end{equation} 

Taking average over $r = 1, ..., R$, we get
\begin{small} 
\begin{equation*} 
\begin{split} 
&\frac{1}{R} \sum\limits_{r=1}^{R} 
[f^s(\v_r, \alpha)  - f^s(\v, \alpha_r) ]  \leq \frac{1}{R} \sum\limits_{r=1}^{R}\bigg[ \underbrace{\langle 
\partial_{\v}f^s(\v_{r-1}, \alpha_{r-1}), \v_r - \v\rangle}_{B_4} 
+ \underbrace{\langle \partial_{\alpha} f^s(\v_{r-1}, \alpha_{r-1}),  \alpha - \alpha_r \rangle}_{B_5}  \\
&~~~~~~~~~~ 
+\frac{3\ell + 3\ell^2/\mu_{2}}{2}  \|\v_r-\v_{r-1}\|^2 
 + 2\ell (\alpha_r - \alpha_{r-1})^2 
 -\frac{\ell}{3}\|\v_{r-1}-\v\|^2  
- \frac{\mu_2}{3}(\alpha_{r-1}-\alpha)^2  \bigg] 
\end{split} 
\end{equation*}      
\end{small}
\end{proof}

$B_4$ and $B_5$ can be bounded by the following lemma. For simplicity of notation, we define
\begin{equation}
\begin{split}
\Xi_r = \frac{1}{KI} \sum\limits_{k,t} 
\E[\|\v^k_{r,t} - \v_r\|^2 + (\alpha^k_{r,t} - \alpha_r)^2 ] ,
\end{split}
\end{equation}
which is the drift of the variables between te sequence in $r$-th round and the ending point, and 
\begin{equation} 
\begin{split}
\mathcal{E}_r = \frac{1}{KI} \sum\limits_{k,t} 
\E[\|\v^k_{r,t} - \v_{r-1}\|^2 + (\alpha^k_{r,t} - \alpha_{r-1})^2 ] ,
\end{split}
\end{equation}
which is the drift of the variables between te sequence in $r$-th round and the starting point.

$B_4$ can be bounded as 
\begin{lem} 
\label{lem:product_x}
\begin{equation*} 
\begin{split}
& \E\left\langle \nabla_{\v} f^s(\v_{r-1}, \alpha_{r-1}), \v_r - \v\right\rangle \\
& \leq \frac{3\ell}{2} \mathcal{E}_r +  \frac{\ell}{3}\E\|\bar{\v}_r - \v\|^2
+ \frac{3\teta}{2} \E\left\|\frac{1}{N K}\sum\limits_{i,t}[\nabla_{\v}f^s_k(\v_{r,t}^k, \alpha_{r,t}^k) - \nabla_{\v}F^s_k(\v_{r,t}^k, \alpha_{r,t}^k; z_{r,t}^k)]\right\|^2 \\
&~~~~~+\frac{1}{2\teta} \E (\|\v_{r-1}-\v\|^2 - \|\v_{r-1} - \v_r\|^2 - \|\v_r - \v\|^2) + \frac{1}{2\teta} \E (\|\tilde{\v}_{r-1}-\v\|^2  - \|\tilde{\v}_r-\v\|^2 ), 
\end{split} 
\end{equation*}
and
\begin{equation*}  
\begin{split}
&\E \langle \nabla_{\alpha} f^s(\v_{r-1}, \alpha_{r-1}), y - \alpha_r \rangle \leq \frac{3\ell^2}{2\mu_2} \mathcal{E}_r + \frac{\mu_2}{3} \E(\bar{\alpha}_r - \alpha)^2  \\
&~~~ +\frac{3\teta}{2} \E \left( \frac{1}{N K} \sum\limits_{i,t}[ \nabla_{\alpha} f^s_k(\v_{r,t}^k, \alpha_{r,t}^k) -\nabla_{\alpha} F^s_k (\v_{r,t}^k, \alpha_{r,t}^k; z_{r,t}^k)] \right)^2\\ 
&~~~ +\frac{1}{2\teta} \E ((\bar{\alpha}_{r-1} - \alpha)^2 - (\bar{\alpha}_{r-1} - \bar{\alpha}_r)^2 - (\bar{\alpha}_r - \alpha)^2)
+ \frac{1}{2\teta} \E((\alpha - \tilde{\alpha}_{r-1})^2 - (\alpha - \tilde{\alpha}_r)^2 ). 
\end{split} 
\end{equation*} 
\end{lem}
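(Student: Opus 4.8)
The plan is to exploit the stochastic-controlled-averaging structure so that, \emph{after averaging over clients}, one communication round behaves exactly like a single stochastic prox step of the global problem, after which the argument mirrors Lemma~\ref{lem:A1_codaplus} with the round index $r$ replacing the iteration index $t$ and with effective step size $\teta=\eta_l\eta_g I$. The crucial first observation is that the control variates cancel in the client average at \emph{every} local step: since $c_\v=\frac1K\sum_k c_\v^k$, averaging the corrected update $\v_{r,t+1}^k=\v_{r,t}^k-\eta_l(\nabla_\v F_k^s(\v_{r,t}^k,\alpha_{r,t}^k;\z_{r,t}^k)-c_\v^k+c_\v)$ over $k$ makes the correction $-c_\v^k+c_\v$ vanish, so $\bar\v_{r,t+1}=\bar\v_{r,t}-\eta_l\frac1K\sum_k\nabla_\v F_k^s(\v_{r,t}^k,\alpha_{r,t}^k;\z_{r,t}^k)$. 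Chaining the $I$ inner steps (with $\bar\v_{r,0}=\v_{r-1}$) and the extrapolation $\v_r=\v_{r-1}+\eta_g(\bar\v_{r,I}-\v_{r-1})$ gives the clean identity $\v_r-\v_{r-1}=-\teta\,\bar g_r$, where $\bar g_r=\frac{1}{KI}\sum_{k,t}\nabla_\v F_k^s(\v_{r,t}^k,\alpha_{r,t}^k;\z_{r,t}^k)$ is the stochastic gradient averaged over all machines and local steps. It is exactly this cancellation that removes any dependence on the heterogeneity $D$, which is why $\mathcal{E}_r$ (rather than a $D^2$ term) will be all that survives.

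With this identity I would write $\nabla_\v f^s(\v_{r-1},\alpha_{r-1})=\frac{1}{KI}\sum_{k,t}\nabla_\v f_k^s(\v_{r-1},\alpha_{r-1})$ and split
\begin{align*}
\langle \nabla_\v f^s(\v_{r-1},\alpha_{r-1}),\, \v_r-\v\rangle
&= \underbrace{\langle \tfrac{1}{KI}\sum_{k,t}[\nabla_\v f_k^s(\v_{r-1},\alpha_{r-1}) - \nabla_\v f_k^s(\v_{r,t}^k,\alpha_{r,t}^k)],\, \v_r-\v\rangle}_{\text{(I)}}\\
&\quad + \underbrace{\langle \tfrac{1}{KI}\sum_{k,t}[\nabla_\v f_k^s(\v_{r,t}^k,\alpha_{r,t}^k) - \nabla_\v F_k^s(\v_{r,t}^k,\alpha_{r,t}^k;\z_{r,t}^k)],\, \v_r-\v\rangle}_{\text{(II)}}\\
&\quad + \underbrace{\langle \bar g_r,\, \v_r-\v\rangle}_{\text{(III)}}.
\end{align*}
Term (III) is dispatched by the three-point identity applied to $\v_r=\v_{r-1}-\teta\bar g_r$, yielding the telescoping $\frac{1}{2\teta}(\|\v_{r-1}-\v\|^2-\|\v_{r-1}-\v_r\|^2-\|\v_r-\v\|^2)$. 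Term (I) is bounded by Young's and Jensen's inequalities together with the $3\ell$-smoothness of $f_k^s$ in $\v$ and its $\ell$-Lipschitzness in $\alpha$; because the deviation is measured from the round's \emph{starting} point $(\v_{r-1},\alpha_{r-1})$, the squared differences $\|\v_{r,t}^k-\v_{r-1}\|^2+(\alpha_{r,t}^k-\alpha_{r-1})^2$ are precisely what $\mathcal{E}_r$ collects, giving $\frac{3\ell}{2}\mathcal{E}_r+\frac{\ell}{3}\|\v_r-\v\|^2$.

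Term (II) is where the work lies and is the main obstacle. Following the device of Lemma~\ref{lem:A1_codaplus}, I would introduce an auxiliary sequence $\tilde\v_r=\tilde\v_{r-1}-\teta\frac{1}{KI}\sum_{k,t}(\nabla_\v F_k^s(\v_{r,t}^k,\alpha_{r,t}^k;\z_{r,t}^k)-\nabla_\v f_k^s(\v_{r,t}^k,\alpha_{r,t}^k))$ with $\tilde\v_0=\v_0$, accumulating only centered noise, together with the prox point $\hat\v_r$ built from true gradients. Splitting $\v_r-\v=(\v_r-\hat\v_r)+(\hat\v_r-\tilde\v_{r-1})+(\tilde\v_{r-1}-\v)$ and using that $\tilde\v_r$ minimizes a $\frac{1}{\teta}$-strongly-convex surrogate trades the last piece for the telescoping $\frac{1}{2\teta}(\|\tilde\v_{r-1}-\v\|^2-\|\tilde\v_r-\v\|^2)$ plus the variance term $\frac{3\teta}{2}\|\frac{1}{KI}\sum_{k,t}[\nabla_\v f_k^s-\nabla_\v F_k^s]\|^2$. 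The delicate point is the cross term $\langle\,\cdot\,,\hat\v_r-\tilde\v_{r-1}\rangle$: unlike the single-step CODA+ case, the round-level $\hat\v_r$ depends on noise from all $I$ inner steps, so it is not history-measurable. The fix is to carry out the decoupling at the granularity of the local steps, pairing each fresh, conditionally-mean-zero increment $\nabla_\v F_k^s(\v_{r,t}^k,\alpha_{r,t}^k;\z_{r,t}^k)-\nabla_\v f_k^s(\v_{r,t}^k,\alpha_{r,t}^k)$ against the prox point formed only from the past; each such cross term vanishes by the tower property, and telescoping the per-step identities across the $I$ steps and applying the extrapolation recovers the round-level $\tilde\v_r,\tilde\v_{r-1}$ stated in the lemma. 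Getting this filtration and step-versus-round bookkeeping correct, while keeping the independence across both $k$ and $t$, is the crux of the proof.

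Finally, $B_5$ follows by the symmetric argument on the dual variable: the same control-variate cancellation gives $\alpha_r-\alpha_{r-1}=+\teta\,\bar g_r^\alpha$ with $\bar g_r^\alpha=\frac{1}{KI}\sum_{k,t}\nabla_\alpha F_k^s(\v_{r,t}^k,\alpha_{r,t}^k;\z_{r,t}^k)$, and the role of $\ell$-strong convexity in $\v$ is played by $\mu_2$-strong concavity in $\alpha$. The only asymmetry, exactly as in Lemma~\ref{lem:A2_codaplus}, is that when Young's inequality splits the gradient-deviation term the reserved curvature is $\frac{\mu_2}{3}(\alpha_r-\alpha)^2$ instead of $\frac{\ell}{3}\|\v_r-\v\|^2$; balancing this against $\mathcal{E}_r$ forces the larger coefficient $\frac{3\ell^2}{2\mu_2}$ in front of the drift. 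The dual auxiliary sequence $\tilde\alpha_r$ is defined analogously and its cross term again vanishes in expectation by the same step-level martingale argument, completing the proof.
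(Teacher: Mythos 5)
Your proposal is correct and follows essentially the same route as the paper's proof of Lemma~\ref{lem:product_x}: the same splitting of $\langle\nabla_\v f^s(\v_{r-1},\alpha_{r-1}),\v_r-\v\rangle$ into a drift part bounded by $\frac{3\ell}{2}\mathcal{E}_r$ via Young/Jensen and smoothness, a noise part handled with the prox point $\hat\v_r$ and the auxiliary sequence $\tilde\v_r$ minimizing a $\frac{1}{\teta}$-strongly-convex surrogate, and a stochastic-gradient part dispatched by the three-point identity using the control-variate cancellation $\v_r-\v_{r-1}=-\teta\,\bar g_r$, with the symmetric $\mu_2$-strong-concavity argument on the dual side. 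One remark: the measurability issue you flag for the cross term $\langle\cdot\,,\hat\v_r-\tilde\v_{r-1}\rangle$ is genuine --- the paper carries this term through its intermediate bound and then silently omits it from the lemma statement, even though $\hat\v_r$ aggregates gradients over all $I$ within-round iterates and hence is correlated with the within-round noise --- so your per-step decoupling is a refinement the paper's own argument would need to be fully rigorous, not a departure from it.
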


\begin{proof}  
\begin{equation}
\begin{split} 
&\langle \nabla_{\v}f^s(\v_{r-1}, \alpha_{r-1}), \v_r - \v\rangle =  \bigg\langle \frac{1}{K I}\sum\limits_{k,t} \nabla_{\v}f^s_k(\v_{r-1}, \alpha_{r-1}), \v_r - \v\bigg\rangle \\
&\leq \bigg\langle \frac{1}{KI}\sum\limits_{k,t} [\nabla_{\v}f^s_k(\v_{r-1}, \alpha_{r-1}) - \nabla_{\v} f^s_k(\v_{r-1}, \alpha_{r,t}^k)], \v_r - \v\bigg\rangle ~~~~~~~~~~~\textcircled{\small{1}}\\
&~~~ +\bigg\langle \frac{1}{KI}\sum\limits_{i,t} [\nabla_{\v}f^s_k(\v_{r-1}, \alpha_{r,t}^k) - \nabla_{\v}f^s_k(\v_{r,t}^k, \alpha_{r,t}^k)], \v_r - \v\bigg\rangle ~~~~~~~~~~~\textcircled{\small{2}}\\
&~~~ +\bigg\langle \frac{1}{KI}\sum\limits_{k,t}[\nabla_{\v}f^s_k(\v_{r,t}^k,\alpha_{r,t}^k) - \nabla_{\v}F^s_k(\v^k_{r,t}, \alpha_{r,t}^k; z_{r,t}^k)], \v_r - \v\bigg\rangle~~~~~~~~~~~\textcircled{\small{3}} \\
&~~~ + \bigg\langle \frac{1}{KI}\sum\limits_{k,t} \nabla_{\v}F^s_k(\v^k_{r,t}, \alpha_{r,t}^k; z_{r,t}^k), \v_r - \v\bigg\rangle~~~~~~~~~~~\textcircled{\small{4}}
\end{split} 
\label{circledx} 
\end{equation}

Then we will bound \textcircled{\small{1}}, \textcircled{\small{2}} and \textcircled{\small{3}}, respectively, 
\begin{equation}
\begin{split}
\textcircled{\small{1}} &\overset{(a)}\leq \frac{3}{2\ell} \left\| \frac{1}{KI} \sum\limits_{k,t}[\nabla_{\v}f^s_k(\v_{r-1}, \alpha_{r-1}) - \nabla_{\v}f^s_k(\v_{r-1}, \alpha_{r,t}^{k})] \right\|^2 + \frac{\ell}{6}\|\v_r - \v\|^2\\
&\overset{(b)}\leq \frac{3}{2 \ell} \frac{1}{KI}\sum\limits_{k,t} \|\nabla_{\v}f^s_k(\v_{r-1}, \alpha_{r-1}) - \nabla_{\v} f^s_k(\v_{r-1}, \alpha_{r,t}^k)\|^2 + \frac{\ell}{6}\|\v_r - \v\|^2\\
&\overset{(c)}\leq \frac{3\ell}{2}\frac{1}{KI}\sum\limits_{k,t}\|\alpha_{r-1} - \alpha_{r,t}^k\|^2 + \frac{\ell}{6}\|\v_r - \v\|^2, 
\end{split}
\label{circled1_sca}
\end{equation}
where (a) follows from Young's inequality, (b) follows from Jensen's inequality.
and (c) holds because $\nabla_{\v} f^s_k(\v, \alpha)$ is $\ell$-smooth in $\alpha$.
Using similar techniques, we have 
\begin{equation} 
\begin{split}   
\textcircled{\small{2}} &\leq \frac{3}{2\ell} \frac{1}{KI}\sum\limits_{k,t}\| \nabla_{\v} f^s_k(\v_{r-1}, \alpha_{r,t}^k) - \nabla_{\v} f^s_k(\v_{r,t}^k, \alpha_{r,t}^k)\|^2 + \frac{\ell}{6}\|\v_r -\v\|^2 \\
& \leq \frac{3 \ell}{2}\frac{1}{KI}\sum\limits_{k,t} \|\v_{r-1} - \v_{r,t}^{i}\|^2 + \frac{\ell}{6} \|\v_r - \v\|^2.
\end{split} 
\label{circled2_sca}
\end{equation}

Let $\hat{\v}_r = \arg\min\limits_{\v} \left(\frac{1}{ K I}\sum\limits_{k,t} \nabla_{\v} f_k^s(\v^k_{r,t}, y^k_{r,t})\right)^\top \v + \frac{1}{2\teta} \|\v - \v_{r-1}\|^2$,  
then we have 
\begin{equation}
\begin{split} 
\bar{\v}_r - \hat{\v}_r = \frac{\teta}{KI}\sum\limits_{k,t} \bigg(\nabla_{\v}f^s_k(\v^{k}_{r,t}, \alpha^k_{r,t}) - \nabla_{\v}f^s_k(\v^{k}_{r,t}, y^k_{r,t}; z_{r,t}^k)\bigg). 
\end{split}
\end{equation}

Hence we get 
\begin{equation}
\begin{split}
&\textcircled{\small{3}} = \left\langle \frac{1}{KI}\sum\limits_{k,t}[\nabla_{\v}f^s_k(\v_{r,t}^k, \alpha_{r,t}^k) - \nabla_{\v} F^s_k(\v_{r,t}^k, \alpha_{r,t}^k; z_{r,t}^k)], \v_r - \hat{\v}_r \right\rangle \\
&~~~~+ \left\langle \frac{1}{KI}\sum\limits_{k,t}[\nabla_{\v}f^s_k(\v_{r,t}^k, \alpha_{r,t}^k) - \nabla_{\v} F^s_k(\v_{r,t}^k, \alpha_{r,t}^k; z_{r,t}^k)], \hat{\v}_r - \v \right\rangle\\
& = {\teta} \left\|\frac{1}{KI} \sum\limits_{k,t} [\nabla_{\v}f^s_k(\v_{r,t}^k, \alpha_{r,t}^k) - \nabla_{\v} F^s_k(\v_{r,t}^k, \alpha_{r,t}^k; z_{r,t}^k)]  \right\|^2\\
&~~~~+ \left\langle \frac{1}{KI}\sum\limits_{k,t}[\nabla_{\v}f^s_k(\v_{r,t}^k, \alpha_{r,t}^k) - \nabla_{\v} F^s_k(\v_{r,t}^k, \alpha_{r,t}^k; z_{r,t}^k)], \hat{\v}_r - \v \right\rangle .
\end{split}
\label{pre_circled3_sca}
\end{equation} 

Define another auxiliary sequence as
\begin{equation}
\begin{split} 
\Tilde{\v}_r = \Tilde{\v}_{r-1} - \frac{\teta}{KI} \sum\limits_{k,t} \left(\nabla_\v F^s_k(\v^k_{r,t}, y^k_{r,t}; z_{r,t}^k) - \nabla_\v f^s_k(\v_{r,t}^k, \alpha_{r,t}^k) \right), \text{ for $r>0$; } \Tilde{\v}_0 = \v_0.  
\end{split}
\end{equation}

Denote 
\begin{equation}
\Theta_{r} (\v) = \left(\frac{1}{KI}\sum\limits_{k,t} (\nabla_\v F^s_k(\v^k_{r,t}, y^k_{r,t}; z_{r,t}^k) - \nabla_\v f^s_k(\v_{r,t}^k, \alpha_{r,t}^k) )\right)^\top \v + \frac{1}{2\teta}  \|\v-\Tilde{\v}_{r-1}\|^2. 
\end{equation} 
Hence, for the auxiliary sequence $\Tilde{\alpha}_r$, we can verify that
\begin{equation}
\Tilde{\v}_r = \arg\min\limits_\v \Theta_{r}(\v).
\end{equation} 
Since $\Theta_{r}(\v)$ is $\frac{1}{\teta}$-strongly convex, we have
\begin{small}
\begin{align} 
\begin{split}
& \frac{1}{2\teta}\|\v - \tilde{\v}_r\|^2 \leq \Theta_{r}(\v) - \Theta_{r}(\tilde{\v}_r) \\
& = \bigg(\frac{1}{KI}\sum\limits_{k,t} ( 
 \nabla_{\v} F^s_k(\v_{r,t}^{k},\alpha_{r,t}^k; z_{r,t}^k) 
-\nabla_{\v} f^s_k(\v_{r,t}^{k},\alpha_{r,t}^k))\bigg)^\top \v + \frac{1}{2\teta}\|\v - \tilde{\v}_{r-1}\|^2 \\  
&~~~ -\bigg(\frac{1}{KI}\sum\limits_{k,t}( 
 \nabla_{\v} F^s_k(\v_{r,t}^{i},\alpha_{r,t}^k; z_{r,t}^k) 
-\nabla_{\v} f^s_k(\v_{r,t}^{i},\alpha_{r,t}^k))\bigg)^\top\tilde{\v}_r 
- \frac{1}{2\teta}\|\tilde{\v}_r  - \tilde{\v}_{r-1}\|^2 \\ 
& = \bigg(\frac{1}{KI}\sum\limits_{k,t}(  
 \nabla_{\v} F^s_k(\v_{r,t}^{k},\alpha_{r,t}^k; z_{r,t}^k) 
-\nabla_{\v} f^s_k(\v_{r,t}^{k},\alpha_{r,t}^k))\bigg)^\top (\v - \tilde{\v}_{r-1}) 
+ \frac{1}{2\teta}\|\v - \tilde{\v}_{r-1}\|^2 \\ 
&~~~ -\bigg(\frac{1}{KI} \sum\limits_{k,t}(  
 \nabla_{\alpha} F^s_k(\v_{r,t}^{k},\alpha_{r,t}^k; z_{r,t}^k) 
-\nabla_{\alpha} f^s_k(\v_{r,t}^{k},\alpha_{r,t}^k))\bigg)^\top 
(\tilde{\v}_{r} - \tilde{\v}_{r-1})  
- \frac{1}{2\teta} \|\tilde{\v}_r  - \tilde{\v}_{r-1}\|^2 \\
&\leq \bigg(\frac{1}{KI}\sum\limits_{k,t}(
 \nabla_{\v} F^s_k(\v_{r,t}^{k},\alpha_{r,t}^k; z_{r,t}^k)  
-\nabla_{\v} f^s_k(\v_{r,t}^{k},\alpha_{r,t}^k))\bigg)^\top (\v - \tilde{\v}_{r-1})+ \frac{1}{2\teta}\|\v - \tilde{\v}_{r-1}\|^2 \\ 
&~~~+ \frac{\teta}{2}\bigg\|\frac{1}{KI}\sum\limits_{k,t}( 
 \nabla_{\v} F^s_k(\v_{r,t}^{k},\alpha_{r,t}^k; z_{r,t}^k) 
-\nabla_{\v} f^s_k(\v_{r,t}^{k},\alpha_{r,t}^k))\bigg\|^2.
\end{split} 
\end{align} 
\end{small} 

Adding this with (\ref{pre_circled3_sca}), we get
\begin{equation} 
\begin{split}
\textcircled{3} \leq &\frac{3\teta}{2}\bigg\|\frac{1}{KI} \sum\limits_{k,t}( 
 \nabla_{\v} F^s_k(\v_{r,t}^{k},\alpha_{r,t}^k; z_{r,t}^k) 
-\nabla_{\v} f^s_k(\v_{r,t}^{k},\alpha_{r,t}^k))\bigg\|^2 + \frac{1}{2\teta }\|\v - \tilde{\v}_{r-1}\|^2
- \frac{1}{2\teta}\|\v - \tilde{\v}_r\|^2 \\
&+ \left\langle \frac{1}{KI}\sum\limits_{k,t}[\nabla_{\v}f^s_k(\v_{r,t}^k, \alpha_{r,t}^k) - \nabla_{\v} F^s_k(\v_{r,t}^{k}, \alpha_{r,t}^k; z_{r,t}^k)], \hat{\v}_r - \Tilde{\v}_{r-1}  \right\rangle.
\end{split}
\label{circled3_sca} 
\end{equation}

\textcircled{\small{4}} can be bounded as
\begin{small}
\begin{equation}
\begin{split}
\textcircled{\small{4}} &= \frac{1}{\teta}\langle \v_r - \v_{r-1}, \v - \v_r\rangle =\frac{1}{2\teta}(\|\v_{r-1}-\v\|^2 - \|\v_{r-1} - \v_r\|^2 - \|\v_r-\v\|^2 ) 
\end{split}  
\label{circled4_sca} 
\end{equation} 
\end{small}

Plug (\ref{circled1_sca}), (\ref{circled2_sca}), (\ref{circled3_sca}) and  (\ref{circled4_sca}) into (\ref{circledx}), we get
\begin{equation*} 
\begin{split}
& \E\left\langle \nabla_{\v} f^s(\v_{r-1}, \alpha_{r-1}), \v_r - \v \right\rangle \\
& \leq \frac{3\ell}{2} \mathcal{E}_r + \frac{\ell}{3}\E\|\bar{\v}_r - \v\|^2
+ \frac{3\teta}{2} \E\left\|\frac{1}{KI} \sum\limits_{k,t}[\nabla_{\v}f^s_k(\v_{r,t}^k, \alpha_{r,t}^k) - \nabla_{\v}F^s_k(\v_{r,t}^k, \alpha_{r,t}^k; z_{r,t}^k)]\right\|^2 \\
&~~~~~+\frac{1}{2\teta} \E (\|\v_{r-1}-\v\|^2 - \|\v_{r-1} - \v_r\|^2 - \|\v_r - \v\|^2) + \frac{1}{2\teta} \E (\|\tilde{\v}_{r-1}-\v\|^2  - \|\tilde{\v}_r-\v\|^2 )
\end{split}  
\label{grad_x}
\end{equation*}

Similarly for $\alpha$, noting $f^s_k$ is $\ell$-smooth and $\mu_2$-strongly concave in $\alpha$, 
\begin{equation*}  
\begin{split} 
&\E \langle \nabla_{\alpha} f^s(\v_{r-1}, \alpha_{r-1}), y - \alpha_r \rangle \leq \frac{3\ell^2}{2\mu_2} \mathcal{E}_r + \frac{\mu_2}{3} \E(\bar{\alpha}_r - \alpha)^2\\
&~~~ +\frac{3\teta}{2} \E \left( \frac{1}{KI} \sum\limits_{k,t}[ \nabla_{\alpha} f^s_k(\v_{r,t}^k, \alpha_{r,t}^k) -\nabla_{\alpha} F^s_k (\v_{r,t}^k, \alpha_{r,t}^k; z_{r,t}^k)] \right)^2\\ 
&~~~ +\frac{1}{2\teta} \E ((\bar{\alpha}_{r-1} - \alpha)^2 - (\bar{\alpha}_{r-1} - \bar{\alpha}_r)^2 - (\bar{\alpha}_r - \alpha)^2) 
+ \frac{1}{2\teta} \E((\alpha - \tilde{\alpha}_{r-1})^2 - (\alpha - \tilde{\alpha}_r)^2 ) 
\end{split} 
\label{grad_y} 
\end{equation*} 
\end{proof}

We show the following lemmas where $\Xi$ and $\mathcal{E}$ are coupled.
\begin{lem} 
\label{lem:bound_A}
\begin{equation}
\begin{split}
\Xi_r 
&\leq  4\mathcal{E}_r + 8\teta^2 [\|\nabla_\v f(\v_r, \alpha_r)\|^2 + (\nabla_\alpha f(\v_r, \alpha_r))^2] + \frac{5\teta^2 \sigma^2}{KI} .
\end{split}
\end{equation} 
\end{lem}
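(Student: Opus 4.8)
The plan is to relate the end-of-round drift $\Xi_r$ to the start-of-round drift $\mathcal{E}_r$ by controlling how far the round's endpoint $(\v_r,\alpha_r)$ moves from its starting point $(\v_{r-1},\alpha_{r-1})$. First I would apply Young's inequality termwise, $\|\v^k_{r,t}-\v_r\|^2 \le 2\|\v^k_{r,t}-\v_{r-1}\|^2 + 2\|\v_{r-1}-\v_r\|^2$ and the analogous bound for $\alpha$, then average over $k,t$ to obtain
\[
\Xi_r \le 2\mathcal{E}_r + 2\,\E\big[\|\v_r-\v_{r-1}\|^2 + (\alpha_r-\alpha_{r-1})^2\big].
\]
This reduces the lemma to bounding the global increment $(\v_r-\v_{r-1},\,\alpha_r-\alpha_{r-1})$.

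The crucial observation is that the control variates cancel in the average. Since $c_\v=\frac1K\sum_k c_\v^k$ (and likewise $c_\alpha$), the correction terms $-c_\v^k+c_\v$ sum to zero over $k$, so after averaging the local updates and applying the extrapolation by $\eta_g$, using $\teta=\eta_l\eta_g I$, one gets
\[
\v_r-\v_{r-1} = -\frac{\teta}{KI}\sum_{k,t}\nabla_\v F^s_k(\v^k_{r,t},\alpha^k_{r,t};\z^k_{r,t}),
\]
and similarly for $\alpha_r-\alpha_{r-1}$; this cancellation is precisely what makes the bound free of the heterogeneity $D$. I would then split the squared norm into a stochastic part and a mean part, $\E\|\v_r-\v_{r-1}\|^2 \le 2\teta^2\,\E\big\|\frac1{KI}\sum_{k,t}(\nabla_\v F^s_k-\nabla_\v f^s_k)\big\|^2 + 2\teta^2\,\E\big\|\frac1{KI}\sum_{k,t}\nabla_\v f^s_k(\v^k_{r,t},\alpha^k_{r,t})\big\|^2$. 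The first term is bounded by $2\teta^2\sigma^2/(KI)$, using that the noise terms are mean-zero and uncorrelated across machines (independent data) and across iterations (martingale structure), so only the $KI$ diagonal variance terms survive, each at most $\sigma^2$.

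For the mean term I would center around the endpoint: write $\frac1{KI}\sum_{k,t}\nabla_\v f^s_k(\v^k_{r,t},\alpha^k_{r,t}) = \nabla_\v f^s(\v_r,\alpha_r) + \frac1{KI}\sum_{k,t}\big[\nabla_\v f^s_k(\v^k_{r,t},\alpha^k_{r,t}) - \nabla_\v f^s_k(\v_r,\alpha_r)\big]$, where $\nabla_\v f^s(\v_r,\alpha_r)$ is the stage gradient written $\nabla_\v f(\v_r,\alpha_r)$ in the statement. Taking norms, applying Jensen, and invoking the $3\ell$-smoothness of the stage functions bounds the bracketed piece by $9\ell^2\,\Xi_r$. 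Doing the same for $\alpha$ and summing produces an estimate of the form $\E[\|\v_r-\v_{r-1}\|^2+(\alpha_r-\alpha_{r-1})^2] \le c_1\teta^2\big[\|\nabla_\v f(\v_r,\alpha_r)\|^2 + (\nabla_\alpha f(\v_r,\alpha_r))^2\big] + c_2\teta^2\ell^2\,\Xi_r + c_3\teta^2\sigma^2/(KI)$.

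The main obstacle, and the only delicate step, is that $\Xi_r$ reappears on the right-hand side through the smoothness term, so the inequality is self-referential. I would resolve this by substituting the increment bound back into $\Xi_r \le 2\mathcal{E}_r + 2\,\E[\cdots]$ and then absorbing the $\Xi_r$ term to the left. This absorption is legitimate exactly because of the stepsize restriction $\teta \le \mu_2/(40\ell^2)$ together with $\mu_2=2p(1-p)\le\frac12$: these force the coefficient of $\Xi_r$ strictly below $1$ (indeed well below it), so moving it across and dividing is valid and leaves the stated constants $4$, $8$, and $5$. The remaining work is purely numerical bookkeeping — tracking the constants through the two Young's inequalities, the variance estimate, and the absorption so that they collapse to exactly $4\mathcal{E}_r$, $8\teta^2[\cdots]$, and $\frac{5\teta^2\sigma^2}{KI}$ — which is routine once the stepsize bound is in hand.
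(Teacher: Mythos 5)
Your proposal follows the same skeleton as the paper's proof: the Young step $\Xi_r \le 2\mathcal{E}_r + 2\E[\|\v_r-\v_{r-1}\|^2+(\alpha_r-\alpha_{r-1})^2]$, the cancellation $\sum_k(-c_\v^k+c_\v)=0$ giving $\v_r-\v_{r-1}=-\frac{\teta}{KI}\sum_{k,t}\nabla_\v F^s_k$, a noise/mean decomposition, smoothness, and absorption under the stepsize bound. The one structural deviation is where you re-center the mean gradient: you center at the endpoint $(\v_r,\alpha_r)$, so the self-reference lands on $\Xi_r$ and is removed by a single absorption, whereas the paper centers at the start $(\v_{r-1},\alpha_{r-1})$, picks up $\ell^2\mathcal{E}_r$ plus the start-point gradient, and then trades that gradient for the endpoint gradient via smoothness, hiding its own absorption inside that exchange. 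Either routing is sound, and yours is arguably cleaner. One small repair: your absorption justification should go through $\mu_2\le \ell$ (a strong-concavity modulus never exceeds the smoothness modulus), which gives $\teta\ell\le 1/40$ and hence $\teta^2\ell^2\le 1/1600$; the bound $\mu_2\le 1/2$ by itself only yields $\teta\ell^2\le 1/80$, which does not make the dimensionless quantity $\teta^2\ell^2$ small.

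The genuine gap is the step you dismiss as routine bookkeeping: with your factor-two Young split between the noise $N=\frac{1}{KI}\sum_{k,t}(\nabla F^s_k-\nabla f^s_k)$ and the mean $M=\frac{1}{KI}\sum_{k,t}\nabla f^s_k$, the stated constants are not attainable, not merely tedious. Tracking your constants per component, $\E\|\v_r-\v_{r-1}\|^2\le \frac{2\teta^2\sigma^2}{KI}+2\teta^2\bigl(2\|\nabla_\v f^s(\v_r,\alpha_r)\|^2+18\ell^2\Xi_r\bigr)$; summing both components and substituting into the Young step gives $\Xi_r \le 2\mathcal{E}_r + 8\teta^2 G + \frac{8\teta^2\sigma^2}{KI} + 144\teta^2\ell^2\Xi_r$, where $G=\|\nabla_\v f^s(\v_r,\alpha_r)\|^2+(\nabla_\alpha f^s(\v_r,\alpha_r))^2$. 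Since $144\teta^2\ell^2\le 0.09$, absorption divides by $1-0.09$, leaving coefficients of roughly $2.2$, $8.8\teta^2$ and $\frac{8.8\teta^2\sigma^2}{KI}$ --- violating both the $8\teta^2$ and the $\frac{5\teta^2\sigma^2}{KI}$ in the statement. No reweighting of the two Young inequalities rescues this: writing $\E\|N+M\|^2\le (1+1/\lambda)\E\|N\|^2+(1+\lambda)\E\|M\|^2$, forcing the final $\sigma^2$ coefficient below $\frac{5\teta^2}{KI}$ requires $1+1/\lambda<5/4$, i.e.\ $\lambda>4$, and then the gradient coefficient is at least $2(1+\lambda)>10>8$. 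The missing idea --- and what the paper's proof actually does in its first chain of inequalities for $\E\|\v_r-\v_{r-1}\|^2$ --- is to avoid Young entirely at the noise/mean split and instead use conditional unbiasedness of the stochastic gradients, so the mean keeps coefficient one: $\E\|N+M\|^2\le \E\|M\|^2+\frac{\sigma^2}{KI}$. Rerunning your argument with this gives $\Xi_r\le 2\mathcal{E}_r+4\teta^2G+\frac{4\teta^2\sigma^2}{KI}+72\teta^2\ell^2\Xi_r$ and, after absorption, about $2.1\mathcal{E}_r+4.2\teta^2G+\frac{4.2\teta^2\sigma^2}{KI}$, comfortably inside the stated bound. (To be fair, even that cross-term cancellation is not fully rigorous, since local iterates at later steps depend on earlier noise on the same machine; but that imprecision is inherited from the paper, not introduced by you.)
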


\begin{proof} 
\begin{equation}
\begin{split}
&\E[\|\v_r - \v_{r-1}\|^2] = \E\left\| -\frac{\teta}{KI} \sum\limits_{k,t} (\nabla_\v f^s_k(\v^k_{r,t}, \alpha^k_{r,t}; z^k_{r,t})-c_\v^k+c_\v)\right\|^2 \\
& =   \E\left\| -\frac{\teta}{KI} \sum\limits_{k,t} \left[ \nabla_\v f^s_k(\v^k_{r,t}, \alpha^k_{r,t}; z^k_{r,t}) - \nabla_\v f^s_k(\v^k_{r,t}, \alpha^k_{r,t}) +  \nabla_\v f^s_k(\v^k_{r,t}, \alpha^k_{r,t}) \right]\right\|^2 \\ 
& \leq  \E\left\| -\frac{\teta}{KI} \sum\limits_{k,t} \left[ \nabla_\v f^s_k(\v^k_{r,t}, \alpha^k_{r,t}) \right]\right\|^2 + \frac{\teta^2  \sigma^2}{KI}\\  
& = \E\left\| -\frac{\teta}{KI} \sum\limits_{k,t} [\nabla_\v f^s_k(\v^k_{r,t}, \alpha^k_{r,t}) - \nabla_\v f^s_k(\v_{r-1},\alpha_{r-1})] + \teta \nabla_\v f^s(\v_{r-1},\alpha_{r-1}) )\right\|^2 + \frac{\teta^2   \sigma^2}{KI}\\  
& \leq  2\E\left\| -\frac{\teta}{KI} \sum\limits_{k,t} [\nabla_\v f^s_k(\v^k_{r,t}, \alpha^k_{r,t}) - \nabla_\v f^s_k(\v_{r-1},\alpha_{r-1})]\right\|^2 + 2 \teta^2 \E\left\| \nabla_\v f^s(\v_{r-1},\alpha_{r-1}) \right\|^2  + \frac{\teta^2 \sigma^2}{KI}\\ 
&\leq \frac{2\teta^2 \ell^2}{KI}\sum\limits_{k,t} \E[\|\v^k_{r,t} - \v_{r-1}\|^2 + (\alpha^k_{r,t} - \alpha_{r-1})^2]   + 2 \teta^2 \E\left\| \nabla_\v f^s(\v_{r-1},\alpha_{r-1}) \right\|^2  + \frac{\teta^2 \sigma^2}{KI} \\
&\leq 2\teta^2 \ell^2 \mathcal{E}_r + 2 \teta^2 \E\left\| \nabla_\v f^s(\v_{r-1},\alpha_{r-1}) \right\|^2  + \frac{\teta^2 \sigma^2}{KI}.
\end{split}    
\end{equation} 

Similarly, 
\begin{equation}
\begin{split}
&\E[(\alpha_r - \alpha_{r-1})^2] 
\leq 2\teta^2 \ell^2 \mathcal{E}_r + 2 \teta^2 \E\left( \nabla_\alpha f^s(\v_{r-1},\alpha_{r-1}) \right)^2  +  \frac{\teta^2 \sigma^2}{KI}. 
\end{split}     
\end{equation}  

Using the $3\ell$-smoothness of $f^s$ and combining with above results, 
\begin{equation} 
\begin{split}
&\|\nabla_\v f^s (\v_{r-1}, \alpha_{r-1})\|^2+(\nabla_\alpha f^s(\v_{r-1}, \alpha_{r-1}))^2 \\
& = \|\nabla_\v f^s(\v_{r-1}, \alpha_{r-1})-\nabla_\v f^s(\v_{r}, \alpha_{r})+\nabla_\v f^s(\v_{r}, \alpha_{r}) \|^2\\
&~~~+(\nabla_\alpha f^s(\v_{r-1}, \alpha_{r-1})-\nabla_\alpha f^s(\v_{r}, \alpha_{r})+\nabla_\alpha f^s (\v_{r}, \alpha_{r}))^2 \\ 
& \leq 2 [\|\nabla_\v f^s (\v_r, \alpha_r)\|^2+(\nabla_\alpha f^s (\v_r, \alpha_r))^2] + 18 \ell^2(\|\v_{r-1}-\v_r\|^2 + (\alpha_{r-1} - \alpha_r )^2)   \\
&\leq  2 [\|\nabla_\v f^s(\v_r, \alpha_r)\|^2+(\nabla_\alpha f^s(\v_r, \alpha_r))^2] + 60 \ell^4 \teta^2 \mathcal{E}_r +\frac{40 \teta^2 \ell^2 \sigma^2}{KI}  \\ 
&\leq  2 [\|\nabla_\v f^s(\v_r, \alpha_r)\|^2+(\nabla_\alpha f^s(\v_r, \alpha_r))^2]  + \frac{\ell^2}{24} \mathcal{E}_r  +\frac{\sigma^2}{144 KI} . 
\end{split}
\label{equ:grad_v_plus_grad_alpha_1}
\end{equation} 

Thus,
\begin{equation}
\begin{split}
\Xi_r &= \frac{1}{KI} \sum\limits_{k,t} \E[\|\v^{k}_{r,t} - \v_{r}\|^2 +  (\alpha^{k}_{r,t} - \alpha_{r})^2]\\ 
&\leq  \frac{2}{KI} \sum\limits_{k,t} \E[\|\v^{k}_{r,t} - \v_{r-1}\|^2 + \|\v_{r-1} - \v_r\|^2 + (\alpha^{k}_{r,t} - \alpha_{r-1})^2 + (\alpha_{r-1}-\alpha_r)^2 ]\\ 
&\leq 2\mathcal{E}_r + 2\E[\|\v_{r-1} - \v_r\|^2 +(\alpha_{r-1}-\alpha_r)^2] \\ 
&\leq 2\mathcal{E}_r + 8\teta^2 \ell^2 \mathcal{E}_r + 4\teta^2 \E[(\nabla_\v f^s(\v_{r-1}, \alpha_{r-1}))^2 + (\nabla_\alpha f^s(\v_{r-1}, \alpha_{r-1}))^2]
+ \frac{4\teta^2 \sigma^2}{KI} \\
&\leq 3\mathcal{E}_r + 4\teta^2 \left( 2 [\|\nabla_\v f^s(\v_r, \alpha_r)\|^2 + (\nabla_\alpha f^s(\v_r, \alpha_r))^2] + \frac{\ell^2}{24}\mathcal{E}_r + \frac{\sigma^2}{144 KI}\right) + \frac{4\teta^2\sigma^2}{KI} \\
&\leq 4\mathcal{E}_r + 8\teta^2 [\|\nabla_\v f^s(\v_r, \alpha_r)\|^2 + (\nabla_\alpha f^s(\v_r, \alpha_r))^2] + \frac{5\teta^2 \sigma^2}{KI}. 
\end{split} 
\end{equation} 
\end{proof}

\begin{lem}
\label{lem:bound_B}
\begin{equation}
\begin{split}
&\mathcal{E}_r \leq \frac{\teta\sigma^2}{2\ell K \eta_g^2} + \teta \ell \Xi_{r-1}  +\frac{48\teta^2}{\eta_g^2} [ \|\nabla_\v f^s(\v_r, \alpha_r)\|^2 + (\nabla_\alpha f^s(\v_r, \alpha_r))^2]. 
\end{split}
\end{equation}
\end{lem}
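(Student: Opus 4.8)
The plan is to bound the per-machine drift of round $r$ away from its starting point $(\v_{r-1},\alpha_{r-1})$ by unrolling the local recursion and controlling the corrected gradient $g^k_{r,t}:=\nabla_\v F^s_k(\v^k_{r,t},\alpha^k_{r,t};\z^k_{r,t})-c_\v^k+c_\v$ (and its dual analogue). Since each machine is re-initialized at the broadcast point, $\v^k_{r,0}=\v_{r-1}$, we have $\v^k_{r,t}-\v_{r-1}=-\eta_l\sum_{\tau=0}^{t-1}g^k_{r,\tau}$, so by Jensen/Cauchy--Schwarz $\|\v^k_{r,t}-\v_{r-1}\|^2\le \eta_l^2 I\sum_{\tau=0}^{I-1}\|g^k_{r,\tau}\|^2$, and similarly for $\alpha$. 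Averaging over $k$ and $t$ reduces the task to bounding $\frac1{KI}\sum_{k,\tau}\E\|g^k_{r,\tau}\|^2$ together with the dual term, carrying an overall prefactor $\eta_l^2 I^2=\teta^2/\eta_g^2$.

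The heart of the argument is the decomposition of the corrected gradient. First I would split off the fresh stochastic noise: conditioning on the iterate, $\E\|g^k_{r,\tau}\|^2\le \E\|\nabla_\v f^s_k(\v^k_{r,\tau},\alpha^k_{r,\tau})-c_\v^k+c_\v\|^2+\sigma^2$, the cross term vanishing by unbiasedness and the noise carried by the stale control variates being higher order. Then I would write the mean part as $[\nabla_\v f^s_k(\v^k_{r,\tau},\alpha^k_{r,\tau})-\nabla_\v f^s_k(\v_{r-1},\alpha_{r-1})]+[\nabla_\v f^s_k(\v_{r-1},\alpha_{r-1})-c_\v^k]+[c_\v-\nabla_\v f^s(\v_{r-1},\alpha_{r-1})]+\nabla_\v f^s(\v_{r-1},\alpha_{r-1})$. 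The first bracket is handled by the $3\ell$-smoothness of $f^s$, producing a term proportional to $\mathcal{E}_r$ itself; this self-reference is absorbed into the left-hand side once $\teta\ell$ is small. The last bracket, the global gradient at the round's start, I convert to $\nabla_\v f^s(\v_r,\alpha_r)$ via smoothness and the round displacement $\|\v_r-\v_{r-1}\|$, which yields the $\tfrac{48\teta^2}{\eta_g^2}[\|\nabla_\v f^s(\v_r,\alpha_r)\|^2+(\nabla_\alpha f^s(\v_r,\alpha_r))^2]$ term.

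The crucial SCAFFOLD step is the two middle brackets. Using the equivalent form $c_\v^k=\frac1I\sum_t\nabla_\v f^s_k(\v^k_{r-1,t},\alpha^k_{r-1,t};\z^k_{r-1,t})$ inherited from the previous round and $c_\v=\frac1K\sum_k c_\v^k$, their conditional means differ from $\nabla_\v f^s_k(\v_{r-1},\alpha_{r-1})$ and $\nabla_\v f^s(\v_{r-1},\alpha_{r-1})$ exactly by gradient differences along the round-$(r-1)$ trajectory; bounding these by $3\ell$-smoothness expresses the residual through the previous-round drift $\frac1{KI}\sum_{k,t}\E[\|\v^k_{r-1,t}-\v_{r-1}\|^2+(\alpha^k_{r-1,t}-\alpha_{r-1})^2]=\Xi_{r-1}$. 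This is the sole place $\Xi_{r-1}$ enters; after collecting the $\teta^2\ell^2/\eta_g^2$ prefactor and using $\teta\ell\le\eta_g^2$ one arrives at the stated coefficient $\teta\ell$. Gathering the variance contributions and normalizing with $\teta=\eta_l\eta_g I$ yields the $\sigma^2$ term.

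The main obstacle is precisely this cross-round coupling: $\mathcal{E}_r$ cannot be controlled purely within round $r$ because the control variates are stale, so the control-variate residual must be charged to $\Xi_{r-1}$, and the self-referential $\mathcal{E}_r$ term must be absorbed on the left --- both requiring the smallness of $\teta\ell$ guaranteed by $\teta\le\mu_2/(40\ell^2)$. A secondary subtlety is the variance bookkeeping: the natural drift variance is of order $\eta_l^2 I\sigma^2=\teta^2\sigma^2/(\eta_g^2 I)$, and reaching the displayed $\tfrac{\teta\sigma^2}{2\ell K\eta_g^2}$ form requires invoking the step-size and window-size relations together with $\eta_g=\sqrt K$. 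Finally, this lemma is only half of a coupled pair meant to be combined with Lemma \ref{lem:bound_A}, which bounds $\Xi_r$ through $\mathcal{E}_r$ and the same gradient norms; I would therefore keep the right-hand side expressed via $\Xi_{r-1}$ and $\|\nabla_\v f^s(\v_r,\alpha_r)\|^2+(\nabla_\alpha f^s(\v_r,\alpha_r))^2$ so that the subsequent substitution and telescoping across rounds go through cleanly.
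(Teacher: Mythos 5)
Your proposal is correct and follows essentially the same route as the paper's proof: the identical four-term decomposition of the corrected gradient (local smoothness term giving a self-referential $\mathcal{E}_r$ that is absorbed under the smallness of $\teta\ell$, the two stale control-variate residuals charged to $\Xi_{r-1}$ via smoothness along the round-$(r-1)$ trajectory, and the global gradient at the round start shifted to $(\v_r,\alpha_r)$ by smoothness), with the stochastic noise split off into the $\sigma^2$ term. The only cosmetic differences are that you unroll the local recursion via Cauchy--Schwarz while the paper uses a $\left(1+\tfrac{1}{I-1}\right)$-type geometric recursion, and that the paper books the control-variate noise as an explicit $O(\eta_l^2\sigma^2)$ contribution rather than dismissing it as higher order; neither changes the argument.
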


\begin{proof}    
\begin{small}
\begin{equation}  
\begin{split}     
& \E\| \v^k_{r,t} - \v_{r-1}\|^2 = \E\| \v^k_{r,t-1} - \eta_l (\nabla_\v f_k(\v^k_{r,t-1}, y^k_{r,t-1}; z^k_{r,t-1}) -c_\v^k + c_\v) - \v_{r-1} \|^2 \\ 
&\leq \E\|\v^k_{r,t-1} - \eta_l (\nabla_\v f_k(\v^k_{r,t-1}, y^k_{r,t-1}) -\E[c^k_\v] + \E[c_\v]) - \v_{r-1} \|^2 + 2\eta_l^2 \sigma^2 \\
&\leq \left(1+\frac{1}{I-1} \right)\E\|\v^k_{r,t-1} - \v_{r-1}\|^2 + I \eta_l^2 \E \|\nabla_\v f_k(\v^k_{r,t-1}, \alpha^k_{r,t-1}) -\E[c^k_\v] + \E[c_\v ] \|^2 + 2\eta_l^2 \sigma^2,
\end{split}  
\end{equation} 
\end{small} 
where $\E[c^k_\v] = \frac{1}{I} \sum\limits_{t=1}^{I} f^s(\v^k_{r, t}, \alpha^k_{r,t}) $ and $\E[c_\v] = \frac{1}{K} \sum\limits_{k=1}^{K} \frac{1}{I} \sum\limits_{t=1}^{I} f^s(\v^k_{r, t}, \alpha^k_{r,t}) $. 

Then,
\begin{small}
\begin{equation}  
\begin{split}     
&I \eta_l^2 \E \|\nabla_\v f_k^s (\v^k_{r,t-1}, \alpha^k_{r,t-1}) -\E[c^k_\v] + \E[c_\v]  \|^2  \\
&\leq I \eta_l^2 \E \|\nabla_\v f_k^s(\v^k_{r,t-1}, \alpha^k_{r,t-1}) - \nabla_\v f_k^s(\v_{r-1}, \alpha_{r-1}) + (\E[c_\v]-\nabla_\v f^s(\v_{r-1}, \alpha_{r-1})) \\ 
&~~~~~~~~~~~~~~~ +\nabla_\v f^s(\v_{r-1}, \alpha_{r-1})  - (\E[c^k_\v] - \nabla_\v f_k^s(\v_{r-1}, \alpha_{r-1})) \|^2 \\
&\leq 4I\eta_l^2 \ell^2 \bigg(\E[\|\v^k_{r,t-1} - \v_{r-1}\|^2]
+ \E[\|\alpha^k_{r,t-1} - \alpha_{r-1}\|^2]\bigg) 
+ 4I\eta_l^2 \E[\|\E[c^k_\v] - \nabla_\v f_k^s (\v_{r-1}, \alpha_{r-1})\|^2] \\
&~~~ +4I\eta_l^2 \E[\|\E[c_\v]-\nabla_\v f^s(\v_{r-1}, \alpha_{r-1} \|^2] 
+4I\eta_l^2 \E[\|\nabla_\v f^s(\v_{r-1}, \alpha_{r-1})\|^2 \\
&\leq 4I\eta_l^2 \ell^2 \bigg(\E[\|\v^k_{k-1,r} - \v_{r-1}\|^2]
+ \E[(\alpha^k_{k-1,r} - \alpha_{r-1})^2]\bigg) \\
&~~~ + 4I\eta_l^2 \ell^2 \frac{1}{I}\sum_{\tau=1}^{I} \E[\|\v^k_{r-1,\tau} -\v_{r-1}\|^2
+ (\alpha^k_{r-1, \tau}-\alpha_{r-1})^2] \\
&~~~ +4I\eta_l^2 \ell^2  \frac{1}{KI}\sum_{j=1}^{K}\sum_{t=1}^{I}\E[\|\v^j_{r-1,t} -\v_{r-1}\|^2  
+ (\alpha^j_{r-1, k}-\alpha_{r-1})^2] + 4I\eta_l^2 \E[\|\nabla_\v f^s (\v_{r-1}, \alpha_{r-1})\|^2. 
\end{split} 
\end{equation}
\end{small} 
 
For $\alpha$, we have similar results, adding them together
\begin{equation}
\begin{split}
& \E\| \v^k_{k,r} - \v_{r-1}\|^2 + \E( \alpha^k_{k,r} - \alpha_{r-1})^2 \\
& \leq 
\left(1+\frac{1}{K-1} +8 K \eta_l^2 \ell^2  \right) (\E\|\v^k_{k-1,r} - \v_{r-1}\|^2+\E(\alpha^k_{k-1,r} - \alpha_{r-1})^2) \\
 &~~~ + 2\eta_l^2\sigma^2 + 4 I \eta_l^2 \ell^2 \Xi_{r-1}  + 4I\eta_l^2 \frac{1}{I} \sum\limits_{\tau=1}^{I} 
 \E[\|\v_{r-1,\tau}^k-\v_{r-1}\|^2+ (\alpha_{r-1,\tau}^k - \alpha_{r-1} )^2]  \\  
&~~~  + 4 I \eta_l^2 \E[\|\nabla_\v f^s (\v_{r-1}, \alpha_{r-1})\|^2+ (\nabla_\alpha f^s(\v_{r-1}, \alpha_{r-1}))^2  ]  
\end{split} 
\end{equation} 

Taking average over all machines,
\begin{small}
\begin{equation}
\begin{split}
&\frac{1}{K} \sum\limits_k \E\| \v^k_{r,t} - \v_{r-1}\|^2 + \E(\alpha^k_{r,t} - \alpha_{r-1})^2 \\
&\leq \left(1+\frac{1}{I-1} +8 I \eta_l^2 \ell^2  \right)\frac{1}{K}\sum_k (\E\|\v^k_{r,t-1} - \v_{r-1}\|^2+\E(\alpha^k_{r,t-1} - \alpha_{r-1})^2) + 2\eta_l^2\sigma^2 \\
&~~~ +8 I \eta_l^2 \ell^2 \Xi_{r-1} 
+ 4I\eta_l^2 \E[\|\nabla_\v f^s(\v_{r-1}, \alpha_{r-1})\|^2
+ (\nabla_\alpha f^s(\v_{r-1}, \alpha_{r-1}))^2  ]   ]   \\ 
&\leq \left(2\eta_l^2\sigma^2 +8I\eta_l^2 \ell^2 \Xi_{r-1}  + 4I\eta_l^2 \E[\|\nabla_\v f^s(\v_{r-1}, \alpha_{r-1})\|^2+ (\nabla_\alpha f^s(\v_{r-1}, \alpha_{r-1}))^2  \right) \left(  \sum\limits_{\tau=0}^{t-1}(1+\frac{1}{I-1} +8 I \eta_l^2 \ell^2 )^\tau \right)\\ 
&\leq  \left(\frac{2\teta^2\sigma^2}{I^2 \eta_g^2} +\frac{8 \teta^2 \ell^2}{I  \eta_g^2} \Xi_{r-1}  + \frac{4\teta^2}{I\eta_g^2} \E[\|\nabla_\v f^s(\v_{r-1}, \alpha_{r-1})\|^2+ (\nabla_\alpha f^s (\v_{r-1}, \alpha_{r-1}))^2  ] \right) 3I   \\ 
&\leq \left(\frac{\teta \sigma^2}{24 \ell I^2  \eta_g^2} + \frac{\teta\ell}{3I\eta_g^2} \Xi_{r-1} + \frac{4\tilde{\eta}^2}{I \eta_g^2}  \E[\|\nabla_\v f^s(\v_{r-1}, \alpha_{r-1})\|^2+ (\nabla_\alpha f^s(\v_{r-1}, \alpha_{r-1}))^2  ] \right) 3I. 
\end{split}  
\end{equation} 
\end{small}

Taking average over $t=1,...,I$, 
\begin{equation}
\begin{split}
&\mathcal{E}_r \leq \frac{\teta \sigma^2}{8 \ell I \eta_g^2} + {\teta\ell} \Xi_{r-1} +   \frac{12\tilde{\eta}^2}{\eta_g^2}  \E[\|\nabla_\v f^s(\v_{r-1}, \alpha_{r-1})\|^2+ (\nabla_\alpha f^s (\v_{r-1}, \alpha_{r-1}))^2  ] 
\end{split} 
\end{equation}


Using (\ref{equ:grad_v_plus_grad_alpha_1}), we have 
\begin{equation}
\begin{split}
&\mathcal{E}_r \leq \frac{\teta\sigma^2}{8\ell I \eta_g^2} + \teta \ell \Xi_{r-1} 
 +  \frac{12\tilde{\eta}^2}{\eta_g^2}  \left( 4 [ \|\nabla_\v f^s(\v_r, \alpha_r)\|^2+(\nabla_\alpha f^s(\v_r, \alpha_r))^2] + \frac{\ell^2}{24} \mathcal{E}_r +\frac{\sigma^2}{144 K I} \right). 
\end{split}
\end{equation}

Rearranging terms,
\begin{equation}
\begin{split}
\mathcal{E}_r \leq \frac{\teta\sigma^2}{2\ell I \eta_g^2} + \teta \ell \Xi_{r-1}  +\frac{48\teta^2}{\eta_g^2} [ \|\nabla_\v f^s(\v_r, \alpha_r)\|^2 + (\nabla_\alpha f^s(\v_r, \alpha_r))^2] 
\end{split}
\end{equation} 
\end{proof}

\subsection{Main Proof of Lemma \ref{lem:codasca:one_stage}}
\begin{proof}
Plugging Lemma \ref{lem:product_x}  into Lemma \ref{lem:codasca:one_stage_split}, we get 

\begin{small}
\begin{equation}
\begin{split}  
& \frac{1}{R} \sum\limits_{r=1}^{R} [f^s(\v_r, \alpha) -  f^s(\v, \alpha_r)]\\
&\leq \frac{1}{R}\sum\limits_{r=1}^{R} \Bigg[ \underbrace{ \left(\frac{3\ell+3\ell^2/\mu_2}{2} - \frac{1}{2\teta}\right) \|\v_{r-1} - \v_r\|^2 +  \left(2\ell - \frac{1}{2\teta}\right)(\alpha_r -  \alpha_{r-1})^2}_{C_1}\\ 
&+\underbrace{\left(\frac{1}{2\teta} - \frac{\mu_2}{3} \right) (\alpha_{r-1} - \alpha)^2 - \left(\frac{1}{2\teta} - \frac{\mu_2}{3}\right)(\alpha_r-\alpha)^2}_{C_2}  \\ 
& +\underbrace{\left(\frac{1}{2\teta }-\frac{\ell}{3}  \right) \|\v_{r-1} - \v\|^2 - \left(\frac{1}{2\teta} -  \frac{\ell}{3}\right)\|\v_r - \v\|^2}_{C_3}\\ 
& +\underbrace{\frac{1}{2\teta} ((\alpha - \Tilde{\alpha}_{r-1})^2 -  (\alpha-\Tilde{\alpha}_r)^2)}_{C_4}  +\underbrace{ 
\left(\frac{3 \ell}{2} + \frac{3 \ell^2}{2\mu_2}\right)\mathcal{E}_r}_{C_5}\\
&+\underbrace{\frac{3\teta}{2}  \left\|\frac{1}{K I}\sum\limits_{k,i} [\nabla_{\v} f_k^s(\v_{r,t}^k, \alpha_{r,t}^k) - \nabla_{\v}F_k^s(\v_{r,t}^k, \alpha_{r,t}^k; z_{r,t}^k)]\right\|^2}_{C_6} \\
&
+  \underbrace{\frac{3\teta}{2} \left(  \frac{1}{K I} \sum\limits_{k,i}  \nabla_{\alpha}f_k^s(\v_{r,t}^k, \alpha_{r,t}^k) - \nabla_{\alpha} F_k^s(\v_{r,t}^k, \alpha_{r,t}^k; z_{r,t}^k) \right)^2}_{C_7}.
\end{split}
\label{before_summation}
\end{equation}
\end{small}

Since $\teta \leq \min(\frac{1}{3\ell + 3\ell^2/\mu_2}, \frac{1}{4\ell}, \frac{3}{2 \mu_2} )$,
thus in the RHS of (\ref{before_summation}), $C_1$ can be cancelled. 
$C_2$, $C_3$ and $C_4$ will be handled by telescoping sum.
$C_5$ can be bounded by Lemma \ref{lem:bound_B}. 

Taking expectation over $C_6$,
\begin{small}
\begin{equation}
\begin{split}
&\E\left[\frac{3\teta}{2}  \left\|\frac{1}{KI}\sum\limits_{k,i}[\nabla_{\v} f_k^s(\v_{r,t}^k, \alpha_{r,t}^k) - \nabla_{\v}F_k^s(\v_{r,t}^k, \alpha_{r,t}^k; z_{r,t}^k)]\right\|^2\right]\\
&=\E\left[\frac{3\teta}{2K^2  I^2}\sum\limits_{k,i} \|\nabla_{\v} f_k^s(\v_{r,t}^k, \alpha_{r,t}^k) - \nabla_{\v}F_k^s(\v_{r,t}^k, \alpha_{r,t}^k; z_{r,t}^k)\|^2\right]
\leq \frac{3\teta \sigma^2}{2KI}. 
\end{split}  
\label{local_variance_x}
\end{equation}
\end{small}
The equality is due to \\$\E_{r,t} \left\langle \nabla_{\v} f_k^s(\v_{r,t}^k, \alpha_{r,t}^k) 
- \nabla_{\v} F_k^s(\v_{r,t}^{i}, \alpha_{r,t}^{i}; z_{r,t}^k), 
\nabla_{\v} f_j^s(\v_{r,t}^j, \alpha_{r,t}^j) 
- \nabla_{\v} F_j^s(\v_{r,t}^{j}, \alpha_{r,t}^{j}; z_{r,t}^j) 
\right\rangle = 0$ for any $i \neq j$ as each machine draws data independently, where $\E_{r,t}$ denotes an expectation in round $r$ conditioned on events until $k$.
The last inequality holds because $\|\nabla_{\v}f_k(\v_{t-1}^k,\alpha_{t-1}^k) - \nabla_{\v}F_k (\v_{t-1}^k, \alpha_{t-1}^k;z_{t-1}^k)\|^2 \leq \sigma^2$
for any $i$.
Similarly, we take expectation over $C_7$ and have
\begin{small}
\begin{equation}
\begin{split} 
&\E\left[\frac{3\teta}{2}  \left(\frac{1}{K I}\sum\limits_{k,t} [\nabla_{\alpha} f_k(\v_{r,t}^k, \alpha_{r,t}^k) - \nabla_{\alpha}F_k(\v_{r,t}^k, \alpha_{r,t}^k; \z_{r,t}^k)]\right)^2\right]
\leq \frac{3\teta \sigma^2}{2KI}. 
\end{split}  
\label{local_variance_y}
\end{equation}
\end{small} 

Plugging (\ref{local_variance_x}) and  (\ref{local_variance_y})  into (\ref{before_summation}), and taking expectation, it yields
\begin{equation*}
\begin{split}
&\frac{1}{R} \sum\limits_r \E[f^s(\v_r, \alpha) - f^s(\v, \alpha_r)]\\
&\leq \E\bigg\{\frac{1}{R}\left(   \frac{1}{2\teta}-\frac{\ell_2}{3}\right) \|\v_0-\v\|^2 +  \frac{1}{R}\left(\frac{1}{2\teta} - \frac{\mu_2}{3} \right)(\alpha_0 - \alpha)^2
+ \frac{1}{2\teta R} \|\v_0 - \v\|^2 + \frac{1}{2\teta R} (\alpha_0-\alpha)^2  \\  
&~~~~~+ \frac{1}{R}\sum\limits_{r=1}^{R}\left(\frac{3\ell^2}{2\mu_2} + \frac{3\ell}{2}\right) \mathcal{E}_r+\frac{3\teta \sigma^2}{KI}\bigg\}\\ 
&\leq \frac{1}{\teta R} \|\v_0 - \v\|^2 + \frac{1}{\teta R} (\alpha_0 -\alpha)^2 + 
\frac{3\ell^2}{\mu_2} \frac{1}{R}\sum\limits_{r=1}^R \mathcal{E}_r
+ \frac{3\teta\sigma^2}{KI}, 
\end{split} 
\end{equation*} 
where we use  $\v_0 = \bar{\v}_0$, and $\alpha_0 = \bar{\alpha}_0$ in the last inequality. 

Using Lemma \ref{lem:bound_B},
\begin{equation*}
\begin{split}
&\frac{1}{R} \sum\limits_r \E[f^s(\v_r, \alpha) - f^s(\v, \alpha_r)]\\
&\leq \frac{1}{\teta R} \|\v_0 - \v\|^2 + \frac{1}{\teta R} (\alpha_0 -\alpha)^2 +  \frac{3\ell^2}{\mu_2} \frac{1}{R} \sum\limits_{r=1}^{R} \mathcal{E}_r  + \frac{3\teta\sigma^2}{KI} \\
& \leq \frac{1}{\teta R } \|\v_0 - \v\|^2 + \frac{1}{\teta R} (\alpha_0 -\alpha)^2 \\ 
&~~~ + \frac{3\ell^2}{\mu_2} \frac{1}{R}  \sum\limits_{r=1}^{R} \left[  \left(\frac{\teta \sigma^2}{2\ell I \eta_g^2} + \teta \ell \Xi_{r-1} +  \frac{48\teta^2}{\eta_g^2}  \E[\|\nabla_\v f^s(\v_{r}, \alpha_{r})\|^2+ (\nabla_\alpha f^s(\v_{r}, \alpha_{r}))^2  ] \right) \right]  + \frac{3\teta\sigma^2}{KI}\\  
&\leq  \frac{1}{\teta R} \|\v_0 - \v\|^2 + \frac{1}{\teta R} (\alpha_0 -\alpha)^2 + \frac{3 \teta \ell^3}{\mu_2 R \eta_g^2} \sum_r \Xi_{r-1} 
+ \frac{5 \ell}{\mu_2 I \eta_g^2} \teta  \sigma^2 + \frac{3000 \teta^2 \ell^4}{\mu_2^2  \eta_g^2} \frac{1}{R} \sum\limits_{r=1}^R Gap_{r}, 
\end{split}  
\end{equation*}  
where the last inequality holds because 
\begin{equation}
\begin{split}
\|\nabla_\v f^s(\v_r, \alpha_r)\|^2 + \|\nabla_\alpha f^s(\v_r, \alpha_r)\|^2 \leq 9\ell^2 (\|\v_r - \v^*_{f_s}\|^2+  (\alpha_r-\alpha^*_{f_s})^2) \leq  \frac{18\ell^2}{\mu_2} Gap_s(\v_r, \alpha_r),
\end{split}
\end{equation}
where $(\v^*_{f^s}, \alpha^*_{f^s})$ denotes a saddle point of $f^s$ and the second inequality uses the strong convexity and strong concavity of $f^s$. In detail, 
\begin{equation}
\begin{split}
Gap_s(\v_r, \alpha_r) &= \max\limits_{\alpha} f^s(\v_r, \alpha) - f^s(\v^*_{f^s}, \alpha^*_{f^s}) + f^s(\v^*_{f^s}, \alpha^*_{f^s}) - \min\limits_{\v} f^s(\v, \alpha_r) \\
& \geq \frac{\ell}{2} \|\v_r-\v^*_{f^s}\|^2 + \frac{\mu_2}{2}(\alpha_r -  \alpha^*_{f^s})^2. 
\end{split}
\end{equation}

Using Lemma \ref{lem:bound_A}, we have
\begin{equation}
\begin{split}
\Xi_r &\leq 4 \mathcal{E}_r + 16\teta^2 [\|\nabla_\v f^s(\v_r, \alpha_r)\|^2 + (\nabla_\alpha f^s(\v_r, \alpha_r))^2] + \frac{5\teta^2 \sigma^2}{KI} \\
&\leq 4 \left(\frac{\teta \sigma^2}{2\ell K \eta_g^2} + \teta \ell \Xi_{r-1} + \frac{48 \teta^2}{\eta_g^2}[\|\nabla_\v f^s(\v_r, \alpha_r)\|^2 + (\nabla_\alpha f^s(\v_r, \alpha_r))^2] \right) \\ 
&~~~ + 16\teta^2 [\|\nabla_\v f^s(\v_r, \alpha_r)\|^2 + (\nabla_\alpha f^s(\v_r, \alpha_r))^2] + \frac{5\teta \sigma^2}{ KI} \\
&\leq 4\teta \ell \Xi_{r-1} + 160 \teta^2 [ \|\nabla_\v f^s(\v_r, \alpha_r)\|^2 + (\nabla_\alpha f^s(\v_r, \alpha_r))^2 ]
+ \frac{5\teta\sigma^2}{KI} (1+\frac{K}{\eta_g^2}) \\
&\leq  \Xi_{r-1} + 160 \teta^2 [ \|\nabla_\v f^s(\v_r, \alpha_r)\|^2 + (\nabla_\alpha f^s(\v_r, \alpha_r))^2 ] 
+ \frac{5\teta\sigma^2}{KI} (1+\frac{K}{\eta_g^2}). 
\end{split} 
\end{equation} 
Thus,
\begin{equation}
\begin{split}
\frac{2\teta \ell^3}{\mu_2 R \eta_g^2} \sum\limits_{r=1}^{R} \Xi_{r} \leq&  \frac{2\teta \ell^3}{\mu_2 R \eta_g^2} \sum_r \Xi_{r-1} + \frac{320 \teta^3 \ell^3}{\mu_2 R \eta_g^2} \sum\limits_{r=1}^{R}  [\|\nabla_\v f^s(\v_r, \alpha_r)\|^2 + (\nabla_\alpha f^s(\v_r, \alpha_r))^2]\\
&
+ \frac{5\teta\sigma^2}{ KI}  (1+\frac{K}{\eta_g^2}) \\
\leq& \frac{2\teta \ell^3}{\mu_2 R \eta_g^2} \sum_r \Xi_{r-1} + \frac{1}{2 R} \sum\limits_r Gap_{r} + \frac{5\teta\sigma^2}{ KI}  (1+\frac{K}{\eta_g^2})  
\end{split}
\end{equation}

Taking $A_0 = 0$,
\begin{equation*}
\begin{split}
&\frac{1}{R} \sum\limits_r \E[f^s(\v_r, \alpha) - f^s(\v, \alpha_r)]\\
&\leq  \frac{1}{\teta R} \|\v_0 - \v\|^2 + \frac{1}{\teta R} (\alpha_0 -\alpha)^2 
+ \frac{1}{2 R} \sum\limits_r Gap_{r} + \frac{5\teta\sigma^2}{ KI}  (1+\frac{K}{\eta_g^2}). 
\end{split} 
\end{equation*} 

It follows that
\begin{equation*}
\begin{split}
&\frac{1}{R} \sum\limits_r \E[f^s(\v_r, \alpha) - f^s(\v, \alpha_r)] - \frac{1}{2 R} \sum\limits_r Gap_{r} \\
&\leq  \frac{1}{\teta R} \|\v_0 - \v\|^2 + \frac{1}{\teta R} (\alpha_0 -\alpha)^2 
+ \frac{5\teta\sigma^2}{ KI}  (1+\frac{K}{\eta_g^2}).
\end{split}  
\end{equation*}

Sample a $\tilde{r}$ from $1, ..., R$, we have 
\begin{equation}
\begin{split}
\E[Gap^s_{\tilde{r}}] \leq  \frac{2}{\teta R} \|\v_0 - \v\|^2 + \frac{2}{\teta R} (\alpha_0 -\alpha)^2  + \frac{10\teta\sigma^2}{KI}  \left(1+\frac{K}{\eta_g^2}\right).
\end{split} 
\end{equation}

\end{proof}

\section{Proof of Theorem \ref{thm:coda_plus}}
\begin{proof} 
Since $f(\v, \alpha)$ is $\ell$-weakly convex in $\v$ for any $\alpha$, $\phi(\v) = \max\limits_{\alpha'} f(\v, \alpha')$ is also $\ell$-weakly convex. 
Taking $\gamma = 2\ell$, we have
\begin{align}  
\begin{split} 
\phi(\v_{s-1}) &\geq \phi(\v_s) + \langle \partial \phi(\v_s), \v_{s-1} - \v_s\rangle - \frac{\ell}{2} \|\v_{s-1} - \v_s\|^2 \\ 
& = \phi(\v_s) + \langle \partial \phi(\v_s) + 2 \ell (\v_s - \v_{s-1}), \v_{s-1} - \v_s\rangle + \frac{3\ell}{2}  \|\v_{s-1} - \v_s\|^2 \\ 
& \overset{(a)}{=} \phi(\v_s) + \langle \partial \phi_s(\v_s), \v_{s-1} - \v_s \rangle + \frac{3\ell}{2} \|\v_{s-1} - \v_s\|^2 \\ 
& \overset{(b)}{=}  \phi(\v_s) - \frac{1}{2\ell} \langle \partial \phi_s(\v_s), \partial \phi_s(\v_s) - \partial \phi(\v_s) \rangle + \frac{3}{8\ell} \|\partial \phi_s(\v_s) - \partial \phi(\v_s)\|^2 \\ 
& = \phi(\v_s) - \frac{1}{8\ell} \|\partial \phi_s(\v_s)\|^2
- \frac{1}{4\ell} \langle \partial \phi_s(\v_s), \partial \phi(\v_s)\rangle + \frac{3}{8\ell} \|\partial \phi(\v_s)\|^2,
\end{split} 
\label{local:P_weakly_2} 
\end{align} 
where $(a)$ and $(b)$ hold by the definition of $\phi_s(\v)$.

Rearranging the terms in (\ref{local:P_weakly_2}) yields
\begin{align}
\begin{split}
\phi(\v_s) - \phi(\v_{s-1}) &\leq \frac{1}{8\ell} \|\partial \phi_s(\v_s)\|^2 + \frac{1}{4\ell}\langle \partial \phi_s(\v_s), \partial \phi(\v_s)\rangle - \frac{3}{8\ell} \|\partial \phi(\v_s)\|^2 \\
&\overset{(a)}{\leq} \frac{1}{8\ell} \|\partial \phi_s(\v_s)\|^2
+ \frac{1}{8\ell} (\|\partial \phi_s(\v_s)\|^2 + \|\partial \phi(\v_s)\|^2) - \frac{3}{8\ell} \|\phi(\v_s)\|^2 \\
& = \frac{1}{4\ell}\|\partial \phi_s(\v_s)\|^2 - \frac{1}{4\ell}\|\partial \phi(\v_s)\|^2\\
& \overset{(b)}{\leq} \frac{1}{4\ell} \|\partial \phi_s(\v_s)\|^2 - \frac{\mu}{2\ell}(\phi(\v_s) - \phi(\v^*_{\phi_s})) 
\end{split} 
\end{align}
where $(a)$ holds by using $\langle \mathbf{a}, \mathbf{b}\rangle \leq \frac{1}{2}(\|\mathbf{a}\|^2 +  \|\mathbf{b}\|^2)$, and $(b)$ holds by the $\mu$-PL property of $\phi(\v)$.

Thus, we have 
\begin{align}
\left(4\ell+2\mu\right) (\phi(\v_s) - \phi(\v_*)) - 4\ell (\phi(\v_{s-1}) - \phi(\v^*_{\phi_s})) \leq \|\partial \phi_s(\v_s)\|^2.  
\label{local:nemi_thm_partial_P_s_1} 
\end{align} 

Since $\gamma = 2\ell$, $f_s(\v, \alpha)$ is $\ell$-strongly convex in $\v$ and $\mu_2$ strong concave in $\alpha$.
Apply Lemma \ref{lem:Yan1} to $f_s$, we know that 
\begin{align} 
\frac{\ell}{4} \|\hat{\v}_s(\alpha_s) - \v_0^s\|^2 + \frac{\mu_2}{4} (\hat{\alpha}_s(\v_s) - \alpha_0^s)^2 \leq \text{Gap}_s(\v_0^s, \alpha_0^s) + \text{Gap}_s(\v_s, \alpha_s). 
\end{align} 

By the setting of $\teta_s$, $I_s = I_0*2^s$, and 
$R_s = \frac{1000}{\teta \min(\ell, \mu_2)}$, we note that $\frac{4}{\teta R_s} \leq \frac{\min\{\ell,\mu_2\}}{212}$. 
Applying Lemma (\ref{lem:codasca:one_stage}), we have
\begin{align}  
\begin{split} 
&\E[\text{Gap}_s(\v_s, \alpha_s)] 
\leq \frac{10 \teta\sigma^2}{K I_0 2^s}    
+ \frac{1}{53}  \E\left[\frac{\ell}{4}\|\hat{\v}_s(\alpha_s) - \v_0^s\|^2 + \frac{\mu_2}{4}(\hat{\alpha}_s(\v_s) - \alpha_0^s)^2 \right]
\\  
& \leq \frac{10 \teta\sigma^2}{K I_0 2^s}  + \frac{1}{53} \E\left[\text{Gap}_s(\v_0^s, \alpha_0^s) + \text{Gap}_s(\v_s, \alpha_s)\right]. 
\end{split} 
\end{align} 

Since $\phi(\v)$ is $L$-smooth and $\gamma = 2\ell$, then $\phi_k(\v)$ is $\hat{L} = (L+2\ell)$-smooth. 
According to Theorem 2.1.5 of \citep{DBLP:books/sp/Nesterov04}, we have 
\begin{align}
\begin{split} 
& \E[\|\partial \phi_s(\v_s)\|^2] \leq 2\hat{L}\E(\phi_s(\v_s) - \min\limits_{x\in \mathbb{R}^{d}} \phi_s(\v)) \leq 2\hat{L}\E[\text{Gap}_s(\v_s, \alpha_s)] \\
& = 2\hat{L}\E[4\text{Gap}_s(\v_s, \alpha_s) - 3\text{Gap}_s(\v_s, \alpha_s)] \\ 
&\leq 2\hat{L} \E \left[4\left(\frac{10 \teta\sigma^2}{K I_0 2^s} +  \frac{1}{53}\left(\text{Gap}_s(\v_0^s, \alpha_0^s) + \text{Gap}_s(\v_s, \alpha_s)\right)\right) - 3\text{Gap}_s(\v_s,\alpha_s)\right] \\ 
& = 2\hat{L} \E \left[40\frac{\teta\sigma^2}{K I_0 2^s} +   \frac{4}{53}\text{Gap}_s(\v_0^s, \alpha_0^s) - \frac{155}{53}\text{Gap}_s(\v_s, \alpha_s)\right]. 
\end{split}  
\label{local:nemi_thm_P_smooth}
\end{align}

Applying Lemma \ref{lem:Yan5} to  (\ref{local:nemi_thm_P_smooth}), we have
\begin{align}
\begin{split} 
& \E[\|\partial \phi_s(\v_s)\|^2] \leq 2\hat{L} \E \bigg[ \frac{40 \teta\sigma^2}{K I_0 2^s}  +  \frac{4}{53}\text{Gap}_s(\v_0^s, \alpha_0^s) \\  
&~~~~~~~~~~~~~~~~~~~~~~~~~~~~~  
- \frac{155}{53} \left(\frac{3}{50} \text{Gap}_{s+1}(\v_0^{s+1}, \alpha_0^{s+1}) + \frac{4}{5} (\phi(\v_0^{s+1}) - \phi(\v_0^s))\right) \bigg] \\
& = 2\hat{L}\E \bigg[\frac{40\teta\sigma^2}{K I_0 2^s} \!+ \! \frac{4}{53}\text{Gap}_s(\v_0^s, \alpha_0^s) \!-\! \frac{93}{530}\text{Gap}_{s+1}(\v_0^{s+1}, \alpha_0^{s+1}) \!-\! 
\frac{124}{53} (\phi(\v_0^{s+1}) - \phi(\v_0^s)) \bigg]. 
\end{split} 
\end{align}

Combining this with  (\ref{local:nemi_thm_partial_P_s_1}), rearranging the terms, and defining a constant $c = 4\ell + \frac{248}{53}\hat{L} \in O(L+\ell)$, we get
\begin{align} 
\begin{split}
&\left(c + 2\mu\right)\E [\phi(\v_0^{s+1}) - \phi(\v_*)] + \frac{93}{265}\hat{L} \E[\text{Gap}_{s+1}(\v_0^{s+1}, \alpha_0^{s+1})] \\ 
&\leq \left(4\ell + \frac{248}{53} \hat{L}\right) \E[\phi(\v_0^s) - \phi(\v^*_{\phi})] 
+ \frac{8\hat{L}}{53} \E[\text{Gap}_s(\v_0^s, \alpha_0^s)] 
+ \frac{80 \hat{L} \teta \sigma^2}{KI_0 2^s}  \\ 
& \leq c \E\left[\phi(\v_0^s) - \phi(\v_*) + \frac{8\hat{L}}{53c} \text{Gap}_s(\v_0^s, \alpha_0^s)\right] + \frac{80 \hat{L} \teta \sigma^2}{KI_0 2^s}. 
\end{split}  
\end{align} 

Using the fact that $\hat{L} \geq \mu$,
\begin{align}
\begin{split}
(c+2\mu) \frac{8\hat{L}}{53c} = \left(4\ell + \frac{248}{53}\hat{L} +  2\mu\right)\frac{8\hat{L}}{53(4\ell + \frac{248}{53}\hat{L})} \leq \frac{8\hat{L}}{53} + \frac{16\mu_1 \hat{L}}{248\hat{L}} \leq \frac{93}{265} \hat{L}. 
\end{split}
\end{align}

Then, we have
\begin{align}
\begin{split} 
&(c+2\mu_1)\E \left[\phi(\v_0^{s+1}) - \phi(\v_*) + \frac{8\hat{L}}{53c}\text{Gap}_{s+1}(\v_0^{s+1}, \alpha_0^{s+1})\right] \\ 
&\leq c \E \left[\phi(\v_0^s) - \phi(\v_*) 
+  \frac{8\hat{L}}{53c}\text{Gap}_{s}(\v_0^{s},  \alpha_0^s)\right] 
+ \frac{80 \hat{L} \teta \sigma^2}{KI_0 2^s}.  
\end{split}
\end{align}

Defining $\Delta_s = \phi(\v_0^s) - \phi(\v_*) +  \frac{8\hat{L}}{53c}\text{Gap}_s(\v_0^s, \alpha_0^s)$, then
\begin{align}
\begin{split}
&\E[\Delta_{s+1}] \leq \frac{c}{c+2\mu} \E[\Delta_s] +  \frac{80 \hat{L} }{c+2\mu} \frac{\teta \sigma^2}{KI_0 2^s}
\end{split}
\end{align}

Using this inequality recursively, it yields
\begin{align} 
\begin{split}
& E[\Delta_{S+1}] \leq \left(\frac{c}{c+2\mu}\right)^S E[\Delta_1]
+ \frac{80 \hat{L} }{c+2\mu} \frac{\teta\sigma^2}{KI_0} \sum\limits_{s=1}^{S} \left(\exp\left(- \frac{2\mu}{c+2\mu}(s-1) \right)  \left(\frac{c}{c+2\mu}\right)^{S+1-s} \right) \\
&\leq 2\epsilon_0 \exp\left(\frac{-2\mu S}{c+2\mu}\right)
+ \frac{80 \teta \hat{L} \sigma^2} {(c+2\mu)KI_0}  
S\exp\left(-\frac{2\mu S}{c+2\mu}\right) ,
\end{split}
\end{align} 
where the second inequality uses the fact $1-x \leq \exp(-x)$,
and 
\begin{align}
\begin{split}
\Delta_1 &= \phi(\v_0^1) - \phi(\v^*) + \frac{8\hat{L}}{53c} {Gap}_1(\v_0^1, \alpha_0^1) \\
& = \phi(\v_0) - \phi(\v^*) + \left( f(\v_0, \hat{\alpha}_1(\v_0)) +  \frac{\gamma}{2}\|\v_0 - \v_0\|^2 -  f(\hat{\v}_1(\alpha_0), \alpha_0) - \frac{\gamma}{2}\|\hat{\v}_1(\alpha_0) - \v_0\|^2 \right) \\
& \leq \epsilon_0 + f(\v_0, \hat{\alpha}_1(\v_0)) - f(\hat{\v}(\alpha_0), \alpha_0) \leq 2\epsilon_0.
\end{split} 
\end{align}

To make this less than $\epsilon$, it suffices to make
\begin{align}
\begin{split}
& 2\epsilon_0 \exp\left(\frac{-2\mu S}{c+2\mu}\right) \leq \frac{\epsilon}{2},\\
& \frac{80 \teta \hat{L} \sigma^2} {(c+2\mu)KI_0} 
S\exp\left(-\frac{2\mu S}{c+2\mu}\right) \leq \frac{\epsilon}{2}.
\end{split} 
\end{align}

Let $S$ be the smallest value such that $\exp\left(\frac{-2\mu S}{c+2\mu}\right) \leq \min \{ \frac{\epsilon}{4\epsilon_0}, 
\frac{(c+2\mu)\epsilon}{160 \hat{L} S} \frac{KI_0}{\teta \sigma^2}\}$.  
We can set $S$ to be the smallest value such that $S >  \max\bigg\{\frac{c+2\mu}{2\mu}\log \frac{4\epsilon_0}{\epsilon}, 
\frac{c+2\mu}{2\mu}\log \frac{160 \hat{L} S } {(c+2\mu)\epsilon} \frac{\teta\sigma^2}{K I_0} \bigg\}$. 

Then, the total communication complexity is 
\begin{align*} 
\sum\limits_{s=1}^{S} R_s &\leq O\left( \frac{1000}{\teta \mu_2}
S \right)
\leq \widetilde{O}\bigg(\frac{1}{\teta\mu_2}  \frac{c}{\mu}  \bigg)
\leq \widetilde{O}\left(\frac{1}{\mu} \right).  
\end{align*} 

Total iteration complexity is 
\begin{equation}
\begin{split}
&\sum\limits_{s=1}^S T_s = \sum\limits_{s=1}^S  R_s I_s \\
& = \sum\limits_{s=1}^S  R_s I_0 \exp(\frac{2\mu}{c+2\mu}(s-1))  = O\left( I_0 \sum_s \exp(\frac{2\mu}{c+2\mu}(s-1)) \right)\\
& = \widetilde{O}\left( I_0 \frac{\exp(\frac{2\mu}{c+2\mu} S )}{\exp(\frac{2\mu_1}{c+2\mu})} \right)
= \widetilde{O} \left( \frac{c}{\mu_2^2 \mu} \left(  \frac{ \epsilon_0}{\epsilon}, \frac{S \teta\sigma^2}{I_0 K\epsilon}\right) \right)\\
& = 
\widetilde{O}\left( \max( \frac{1}{\mu \epsilon}, \frac{c^2}{\mu^2} \frac{\teta \sigma^2}{K})  \right) = \widetilde{O} \left(\max(\frac{1}{\mu \epsilon}, \frac{1}{K \mu^2 \epsilon}) \right) ,
\end{split}
\end{equation} 
which is also the sample complexity on each single machine. 

\end{proof}

\section{More Results}
In this section, we report more experiment results for imratio=30\% with DenseNet121 on ImageNet-IH, and CIFAR100-IH in Figure \ref{fig:cifar_0.1},\ref{fig:imagenet_0.3} and \ref{fig:cifar_0.3}. 

\begin{figure}[h]
    \centering
    {\includegraphics[scale=0.11]{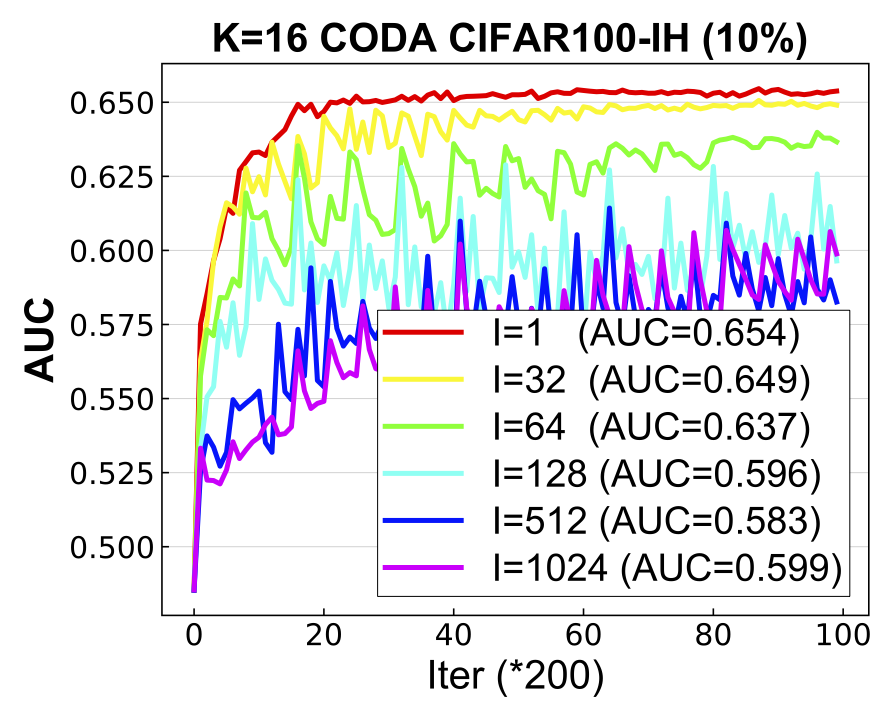}
    \includegraphics[scale=0.11]{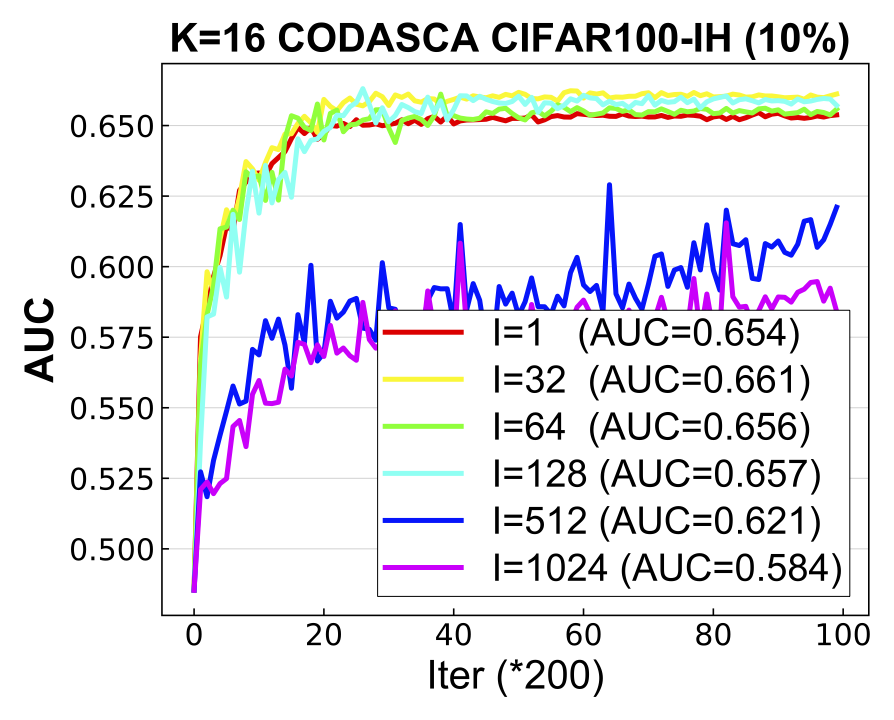}
    }
    {
    \includegraphics[scale=0.11]{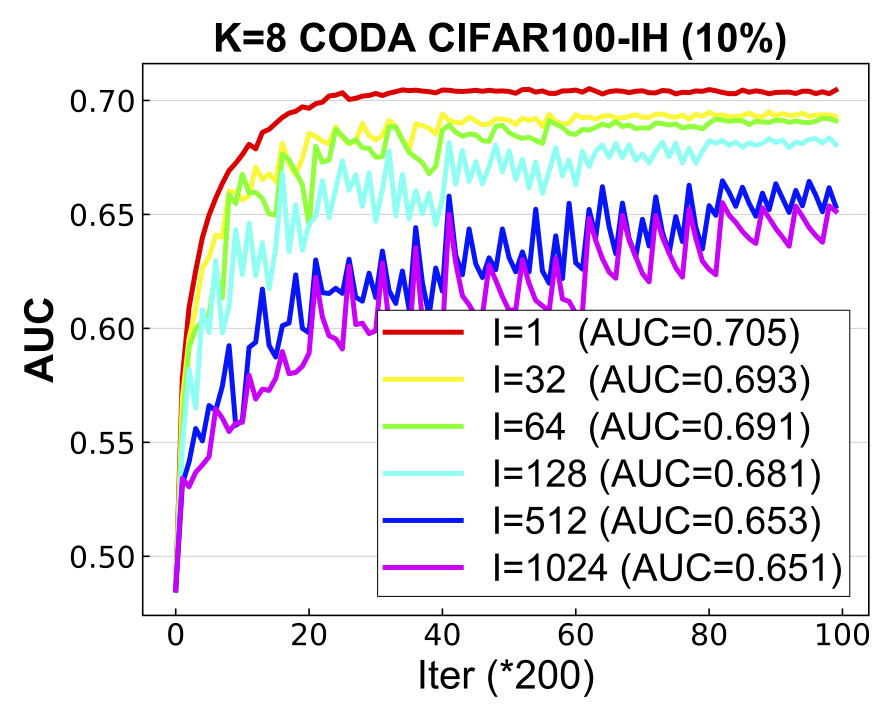}
    \includegraphics[scale=0.11]{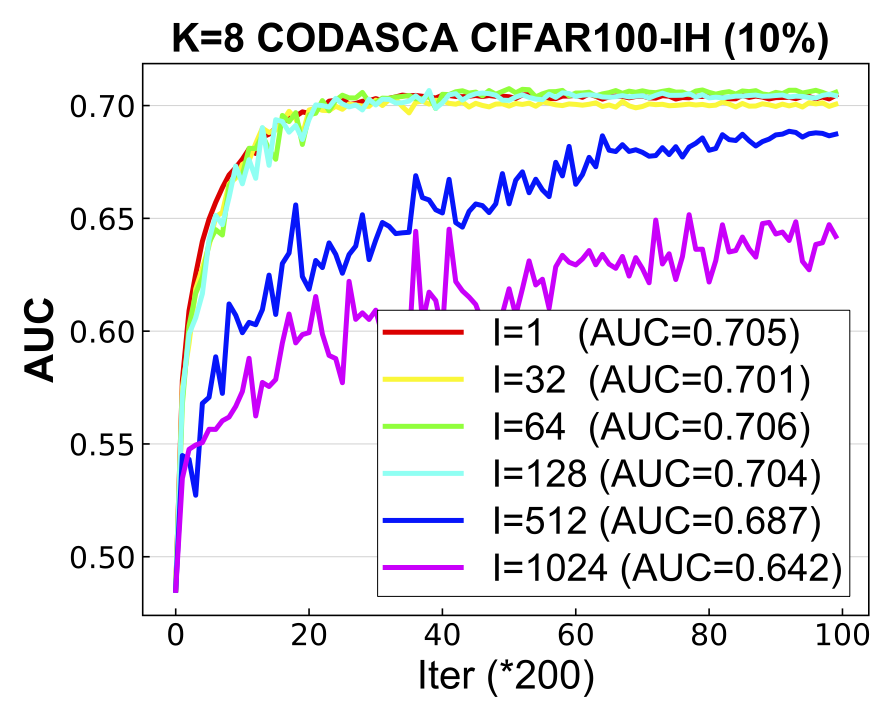}
    }   
    \caption{Imbalanced Heterogeneous CIFAR100 with imratio = 10\% and K=16,8 on Densenet121.}
    \label{fig:cifar_0.1}
\end{figure}

\begin{figure}[h]
    \centering
    {\includegraphics[scale=0.11]{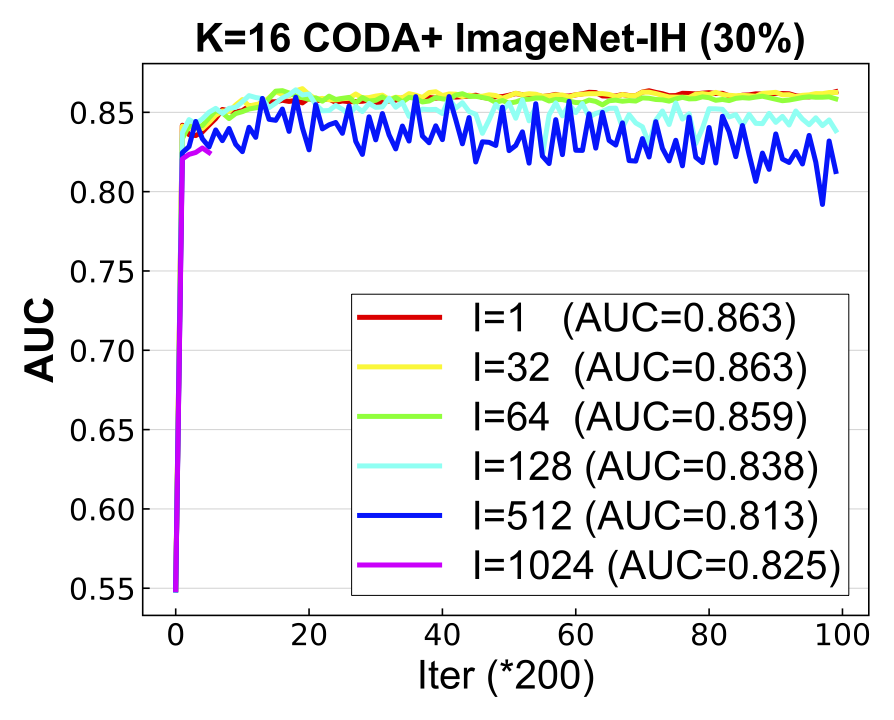}
    \includegraphics[scale=0.11]{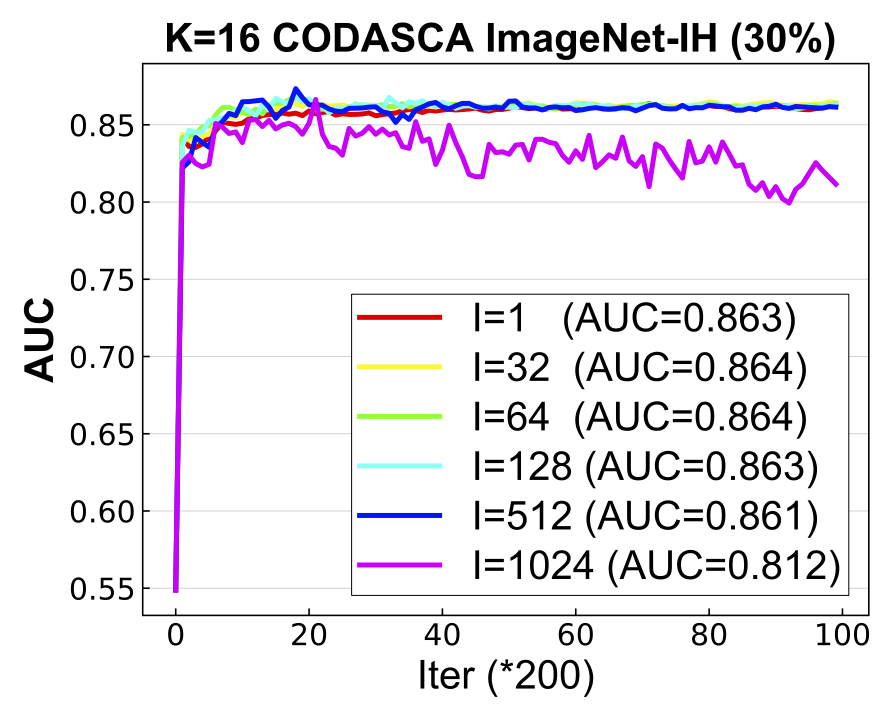}
    }
    {
    \includegraphics[scale=0.11]{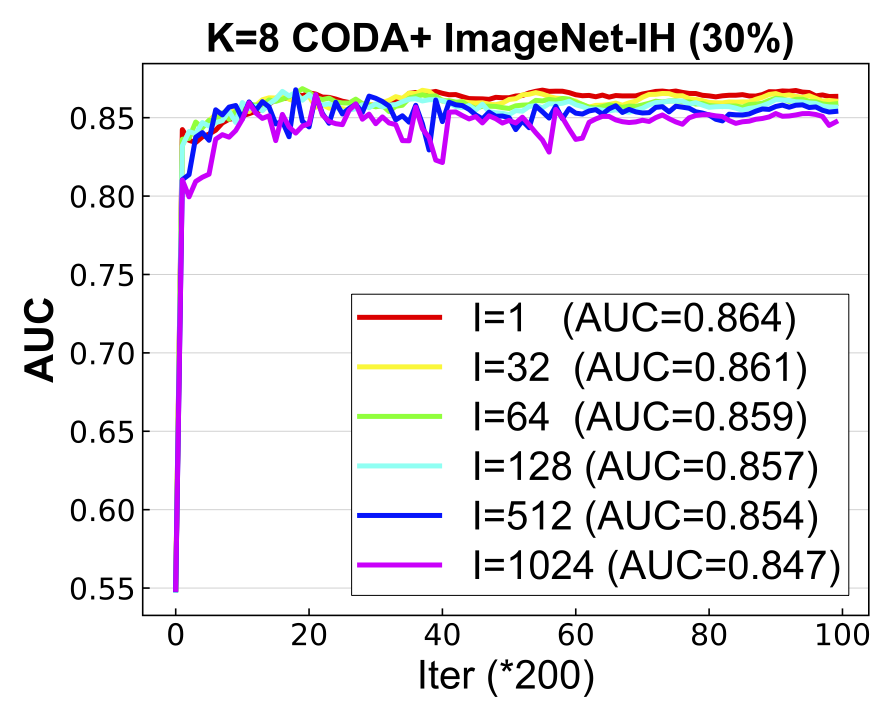}
    \includegraphics[scale=0.11]{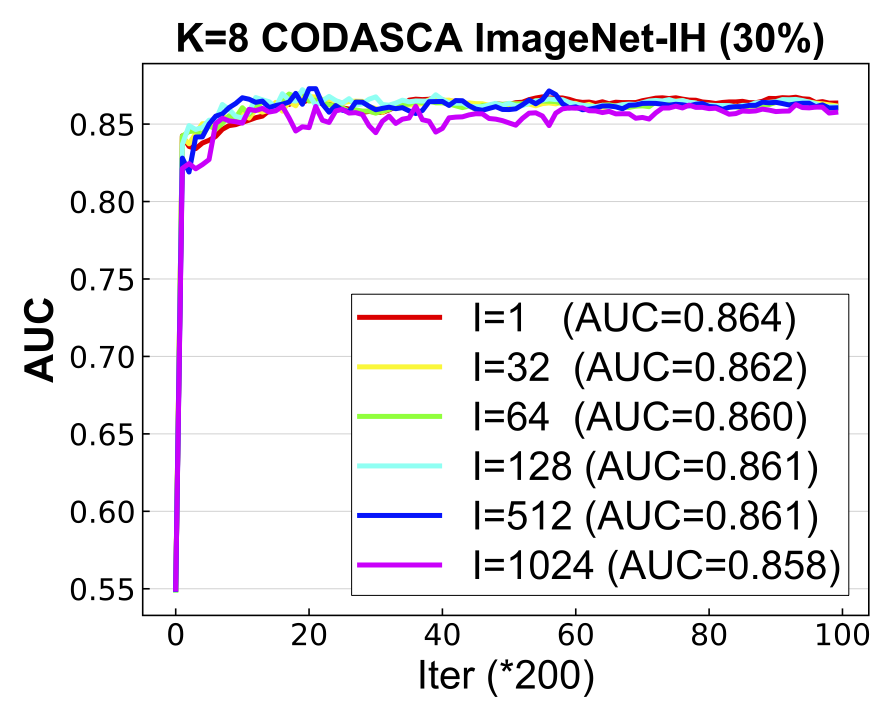}
    }  
    \caption{Imbalanced Heterogeneous ImageNet with imratio = 30\% and K=16,8 on Densenet121.}
    \label{fig:imagenet_0.3}
\end{figure}

\begin{figure}[h]
    \centering
    {\includegraphics[scale=0.11]{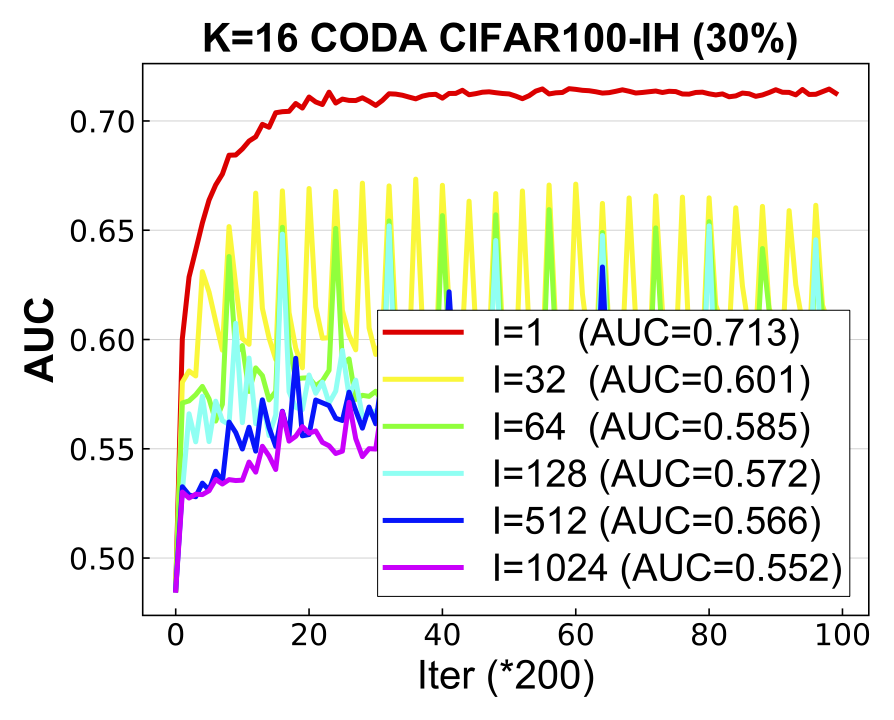}
    \includegraphics[scale=0.11]{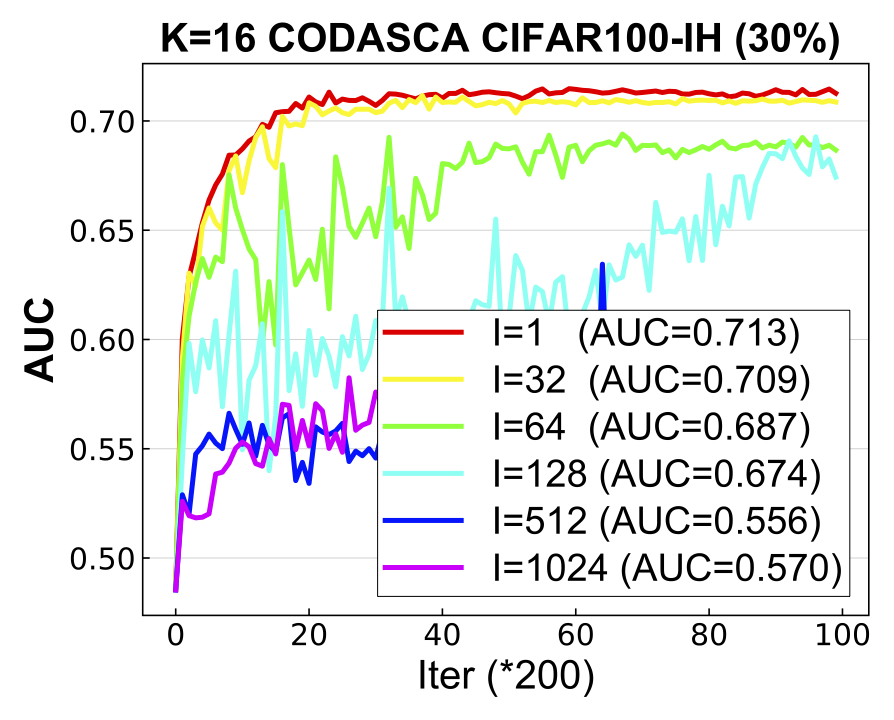}
    }
    {
    \includegraphics[scale=0.11]{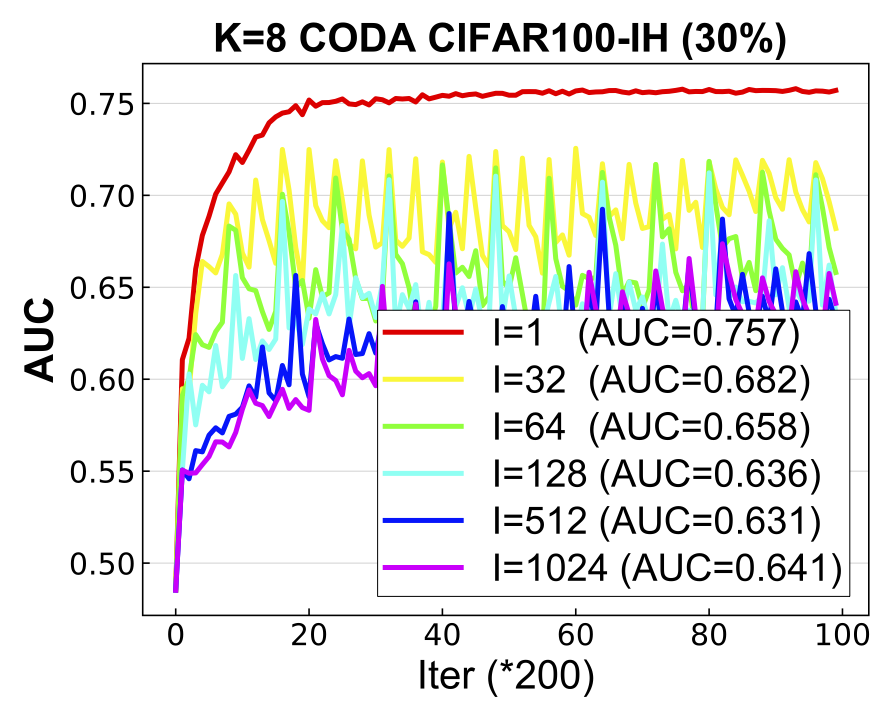}
    \includegraphics[scale=0.11]{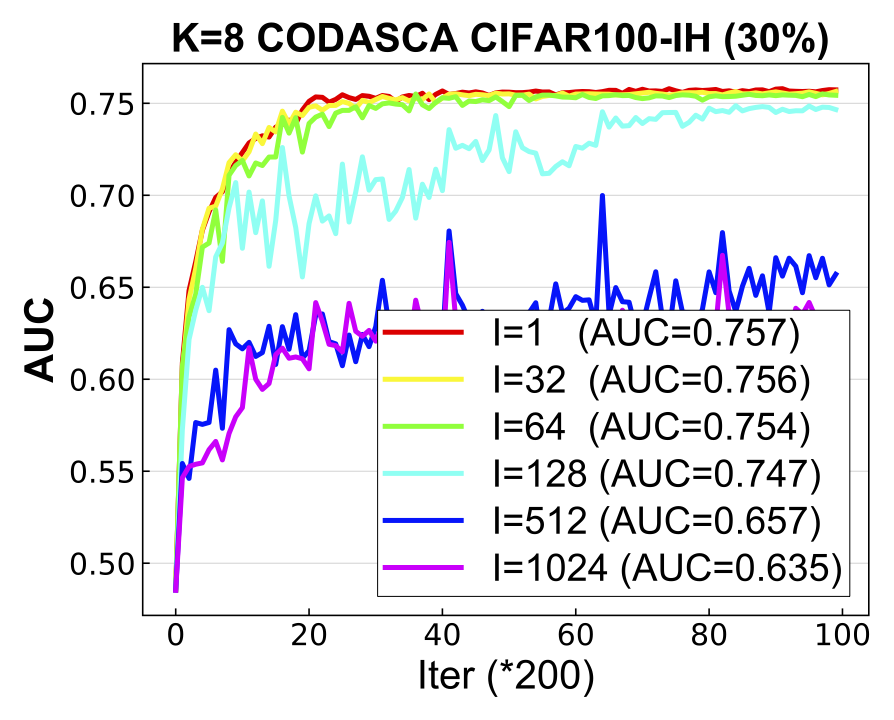}
    }  
    \caption{Imbalanced Heterogeneous CIFAR100 with imratio = 30\% and K=16,8 on Densenet121.}
    \label{fig:cifar_0.3}
\end{figure}

\newpage

\section{Descriptions of Datasets}
\vspace{-0.2in}
\begin{table}[h!]
\centering
\caption{Statistics of Medical Chest X-ray Datasets. The numbers for each disease denote the imbalance ratio (imratio).}
\scalebox{0.7}{
\begin{tabular}{cccccccc}
\hline
\textbf{Dataset} & \textbf{Source} & \textbf{Samples} & \textbf{Cardiomegaly} & \textbf{Edema} & \textbf{Consolidation} & \textbf{Atelectasis} & \textbf{Effusion} \\ \hline
CheXpert      & Stanford Hospital (US)   & 224,316           & 0.211                 & 0.342          & 0.120                  & 0.310                & 0.414             \\ 
ChestXray8  & NIH Clinical Center (US)    & 112,120           & 0.025                 & 0.021          & 0.042                  & 0.103                & 0.119             \\
PadChest  & Hospital San Juan (Spain)       & 110,641           & 0.089                 & 0.012          & 0.015                  & 0.056                & 0.064             \\ 
MIMIC-CXR   & BIDMC (US)     & 377,110           & 0.196                 & 0.179          & 0.047                  & 0.246                & 0.237             \\
ChestXrayAD   & H108 and HMUH (Vietnam)   & 15,000            & 0.153                 & 0.000          & 0.024                  & 0.012                & 0.069             \\ \hline
\end{tabular}}
\label{table:xray_stats}
\end{table}
\end{document}